\newcommand{\citep}{\cite}
\newcommand{\citet}{\cite}
\newcommand{\AlgName}{{Spectral Dynamics Embedding Control}\xspace}
\newcommand{\algabb}{{SDEC}\xspace}
\newcommand{\revision}[1]{{\color{black} #1}}
\newcommand{\newrevision}[1]{{\color{black} #1}}
\newcounter{relctr} %% <- counter for relations
\everydisplay\expandafter{\the\everydisplay\setcounter{relctr}{0}} %% <- reset every eq
\newcommand\labelrel[2]{%
  \begingroup
    \refstepcounter{relctr}%
    \stackrel{\textnormal{(\alph{relctr})}}{\mathstrut{#1}}%
    \originallabel{#2}%
  \endgroup
}
\def\BibTeX{{\rm B\kern-.05em{\sc i\kern-.025em b}\kern-.08em
    T\kern-.1667em\lower.7ex\hbox{E}\kern-.125emX}}
\begin{document}
\title{Stochastic Nonlinear Control via Finite-dimensional Spectral Dynamics Embedding}
\author{Zhaolin Ren$^{*,1}$, Tongzheng Ren$^{*,2}$, Haitong Ma$^{1}$, Na Li$^{\dagger, 1}$, and Bo Dai$^{\dagger, 3}$
    \thanks{$^*$Equal contributions, $^\dagger$Co-corresponding authors.  $^1$ Z. Ren, H. Ma and N. Li are with Harvard University, Email: {zhaolinren@g.harvard.edu, haitongma@g.harvard.edu, nali@seas.harvard.edu}; $^{1}$T. Ren is with University of Texas, Austin, Email: { tongzheng@utexas.edu}; $^3$B. Dai is with Google DeepMind and Georgia Tech, Email: {bodai@\{google.com, cc.gatech.edu\}}.}
    \thanks{ This work is done when T. Ren was a student researcher at Google DeepMind.
    This work is funded by NSF AI institute: 2112085, NSF ECCS: 2328241, NSF CNS: 2003111, NIH R01LM014465, ONR: N000142512173, NSF ECCS: 2401391, and NSF IIS: 2403240.}
    \\[-8.0ex]
}

\maketitle

\begin{abstract}
    This paper proposes an approach,~\AlgName~(\algabb), to optimal control for nonlinear stochastic systems. This method reveals an infinite-dimensional feature representation induced by the system's nonlinear stochastic dynamics, enabling a linear representation of the state-action value function. For practical implementation, this representation is approximated using finite-dimensional truncations, specifically via two prominent kernel approximation methods: random feature truncation and Nystrom approximation.   To characterize the effectiveness of these  approximations,  we provide an in-depth theoretical analysis to characterize the \emph{approximation error} arising from the finite-dimension truncation and \emph{statistical error} due to finite-sample approximation in both policy evaluation and policy optimization. 
  Empirically, our algorithm performs favorably against existing stochastic control algorithms on several benchmark problems.
    % We also empirically test the algorithm and compare the performance with Koopman-based methods and iLQR methods on a few benchmark problems. 
    
    %This paper presents an approach to optimal control for nonlinear stochastic systems,  based on the concept of spectral dynamics Embedding, firstly proposed in \cite{ren2022free} for controlling an unknown system. This method leverages an infinite-dimensional feature to linearly represent the state-value function and exploits finite-dimensional truncation approximation for practical implementation. Nevertheless, the effectiveness of these finite-dimensional approximations in control remains unexplored, even in cases where the system model is known. In this study, we provide a tractable stochastic nonlinear control algorithm that exploits the nonlinear dynamics upon the finite-dimensional feature approximation,~\AlgName~(\algabb), with an in-depth theoretical analysis to characterize the \emph{approximation error} induced by the finite-dimension truncation and \emph{statistical error} induced by finite-sample approximation in both policy evaluation and policy optimization.  Our analysis includes two prominent kernel approximation methods--truncations onto random features and Nystrom features.
    % We also empirically test the algorithm and compare the performance with Koopman-based methods and iLQR methods on a few benchmark problems. 

\end{abstract}

\begin{IEEEkeywords}
Stochastic nonlinear control, reinforcement learning, dynamic programming
\end{IEEEkeywords}

% \vspace{-1em}
%%%%%%%%%%%%%%%%%%%%%%%%%%%%%%%%%%%%%%%%%%%%%%%%%%%%%%%%%%
% \vspace{-em}
\section{Introduction}\label{sec:intro}
% \vspace{-0.5em}
%%%%%%%%%%%%%%%%%%%%%%%%%%%%%%%%%%%%%%%%%%%%%%%%%%%%%%%%%%
%\lina{I revised the flow but need to keep revising them, especially cutting them shorter. I would prefer cut the iLQR stuff. }
% \Bo{TODO
% \begin{itemize}
%     \item start with RL to stochastic nonlinear control
%     \item review the linearization
%     \item introduce \algabb
%     \item list the contribution. 
% \end{itemize}
% }
 
Stochastic optimal nonlinear control---namely, the problem of finding an optimal feedback policy to maximize cumulative rewards for a stochastic nonlinear system---has been a long-standing challenging problem in control literature~\cite{khalil2015nonlinear,fleming2012deterministic}.  
Various control techniques have been developed for nonlinear control, including gain scheduling~\cite{rugh2000research}, feedback linearization~\cite{charlet1989dynamic}, iterative linear-quadratic regulator~\citep{sideris2005efficient}, sliding mode control~\cite{edwards1998sliding}, geometric control~\cite{brockett2014early}, back stepping~\cite{kokotovic1992joy}, control Lyapunov functions~\cite{primbs1999nonlinear}, model-predictive control \cite{rawlings2017model}, and tools that leverage inequality approximation and optimization-based methods like sum-of-squares (SOS) programming~\cite{prajna2005sostools}. Nonlinear control often focuses on the stability of closed-loop systems, while control optimality analysis is often heuristic or limited to special classes of systems. Moreover, these methods either lead to highly suboptimal solutions, can only be applied to a subclass of nonlinear systems that satisfy special conditions, or require a large amount of computation and thus cannot  handle large-size problems.  
 
To take advantage of the rich theory and tools developed for linear systems, kernel-based linearization has recently regained attraction and led to a few representative approaches. %The representative approaches includes Koopman operator theory~\cite{korda2018linear}, kernelized nonlinear regulator~(KNR)~\cite{kakade2020information}, reproducing kernel Hilbert space (RKHS) dynamics embedding control~\cite{grunewalder2012modelling}, and optimal control with occupation kernel~\cite{kamalapurkar2021occupation}, among others.
For instance, the Koopman operator~\cite{korda2018linear,kutz2016dynamic}
% \lina{add at least 1 more, ideally from Nathan and Steve button's work, review type of work} 
lifts states into an infinite-dimensional space of known measurement functions, 
where the dynamics become linear in the new space. However, depending on the system itself, possible coupling nonlinear terms between state and control would still appear.  
Alternatively, KNR~\cite{kakade2020information} studies the \textit{sample} complexity of learning control for an unknown nonlinear system whose dynamics are assumed to lie in a \emph{pre-given} RKHS whose dimension can be infinite, and hence are \emph{linear} with respect to the feature maps for states and actions but neglect the \textit{computation} challenges induced by the infinite dimension and the nonlinear features.
\cite{grunewalder2012modelling} considers Markov decision processes (MDP) and represents the conditional transition probability of a MDP in a \emph{pre-defined} RKHS so that calculations involved in solving the MDP could be done via inner products in the infinite-dimensional RKHS.
\cite{kamalapurkar2021occupation} introduces the Liouville’s equation in occupation kernel space to represent the trajectories, with which finding the optimal value can be reformulated as linear programming in the infinite-dimensional space under some strict assumptions.

Although kernelized linearization has brought a promising new perspective to nonlinear control, these representative approaches fall short in both computational and theoretical respects. 
\emph{Computationally}, control in an infinite-dimensional space is intractable. Hence, a finite-dimensional approximation is necessary. 
Though data-driven computational procedures for kernel selection and RKHS reparametrization have been proposed in these existing methods,
%for the Koopman~\cite{korda2018linear}, conditional RKHS embeddings~\cite{grunewalder2012modelling}, and occupation kernel~\cite{kamalapurkar2021occupation} (whereas KNR \cite{kakade2020information} focuses on the sample complexity of learning the dynamics while ignoring the computational challenges in the control aspect). 
they are inefficient in the sense that {\bf i)}, the dynamics or trajectories are \emph{presumed} to be lying in some RKHS with a \emph{pre-defined} finitely-approximated kernel, which is a very strict assumption;  in fact, 
%similar to the challenges in deep reinforcement learning (RL)\cite{literatureof deep rl} which have been deployed in many empirical RL works, 
finding good kernel representations for the dynamics itself is a challenging task; and {\bf ii)}, the dynamics information for kernelization is \emph{only} exploited through samples, and other structure information in the dynamics is ignored even when the dynamics formula is known explicitly. 
Meanwhile, \emph{theoretically}, the optimality of control with finite-dimensional approximations---\ie, the policy value gap between the finite-dimensional approximation and optimal policy in an infinite-dimensional RKHS---
has largely been ignored and not been rigorously analyzed.

% \Bo{TODO}
Recently,~\cite{ren2022free} provided a novel kernel linearization method, \emph{spectral dynamics embedding}, by establishing the connection between stochastic nonlinear control models and linear Markov Decision Processes (MDPs), which exploits the random noise property to factorize the transition probability operator, and induce an infinite-dimensional space for \emph{linearly} representing the \textit{state-action value function} for \emph{arbitrary} policy.
Spectral dynamics embedding bypasses the drawbacks of the existing kernel linearization methods in the sense that {\bf i)} the kernel is \textit{automatically induced} by the system dynamics, which avoids the difficulty in deciding the kernel features, \revision{utilizes the knowledge of the dynamics and noise model} and eliminates the modeling approximation induced by a predefined kernel; and {\bf ii)} the kernel linearization and its finite feature approximation is \emph{computational-friendly} with well-studied performance guarantees~\citep{rahimi2008weighted,hayakawa2023sampling}.
The superiority of spectral dynamics embedding has been justified empirically in the reinforcement learning setting~\cite{ren2022free} where system dynamics are unknown. However, there still lacks any end-to-end control algorithm with theoretical guarantees that utilizes the finite-dimensional approximation of spectral dynamics embedding, 
motivating the present work. 

\noindent\textbf{Our contributions.} Building on the spectral dynamics embedding framework discussed above, we develop an end-to-end stochastic nonlinear control algorithm with theoretical guarantees, filling a key gap in prior work. Spectral dynamics embedding provides a powerful kernel-based linearization of nonlinear stochastic systems by factorizing the transition dynamics into an infinite-dimensional feature space that linearly represents the state-action value function for any policy. Our main contributions are as follows:

\begin{itemize}[leftmargin=*]
    \item We design a computationally tractable control algorithm, \emph{\AlgName~(\algabb)}, based on a finite-dimensional truncation of the spectral dynamics embedding. Specifically, we approximate the infinite-dimensional embedding using two practical methods—Monte Carlo random features and Nystr\"om features—as described in~\secref{subsec:spectral_feature}. We then perform value function estimation using least-squares policy evaluation, and improve the policy using natural policy gradients. The full algorithm is presented in~\secref{subsec:finite_approx}.

    \item Our formulation naturally integrates with modern deep RL pipelines such as Soft Actor-Critic~\citep{haarnoja2018soft} by replacing the critic network with our dynamics-informed representation. A key novelty of \algabb\ is that it leverages known system dynamics to construct a principled, task-adaptive feature space for policy evaluation and improvement—offering a bridge between classical control and learning-based methods.

    \item We provide rigorous theoretical analysis of the \textit{approximation error} introduced by finite-dimensional truncation and the \textit{statistical error} due to finite sample estimation. To our knowledge, this is the first end-to-end analysis of such errors in the context of nonlinear stochastic control. We show that the performance gap between the learned and optimal policies decreases polynomially with the number of features and the sample size.

    \item We validate our approach on several nonlinear control tasks, including 2D Drones, Pendubot balancing, and Cartpole (\secref{sec:simulations}), and benchmark against several nonlinear control methods  and state-of-the-art RL algorithms. Our method achieves strong empirical performance, consistent with the theoretical analysis.
    \end{itemize}

We would like to note that compared to our earlier conference paper~\cite{ren2023stochastic}, this work substantially extends with several new contributions: (i) expanded theoretical exposition and proofs, (ii) incorporation and analysis of Nystr\"om feature approximations, including a novel high-probability error bound (\lemref{prop:kernel_approx_main_paper}), (iii) formal justification of key technical assumptions (Assumptions 2 and 3), and (iv) a significantly enhanced simulation section covering more benchmarks and stronger baselines.

% This paper has several key differences with the conference version in \cite{ren2023stochastic}. First, this paper has more detailed proofs and more exposition to motivate the results. Second, this paper includes study of the use of Nystr\"om features in the kernel approximation, which was not considered in \cite{ren2023stochastic}; moreover, we prove a novel high probability kernel approximation error bound for Nystr\"om features (\lemref{prop:kernel_approx_main_paper}), which may be of independent interest. \revision{Third, we prove theoretical justification of the validity of two core technical assumptions key to our analysis (Assumptions 2 and 3), which is an important contribution over the conference version in \cite{ren2023stochastic}, where these assumptions were merely assumed to be true. Finally, this paper has a significantly more fleshed out numerical section, with results on more challenging benchmarks such as 2D Drones, Pendubot balancing, and Cartpole, and comparisons to energy-based control as well as standard Soft-Actor-Critic~\cite{haarnoja2018soft}, a state-of-the-art RL algorithm. Our performance metrics include quantitative and qualitative evaluation.}

We have also developed an open-source toolbox based on the methods proposed in this paper, available at \href{https://github.com/CoNG-harvard/repr\_control}{https://github.com/CoNG-harvard/repr\_control}. The toolbox enables users to define their own system dynamics, with step-by-step documentation provided to guide the implementation of custom models.

\vspace{-1em}
\subsection{Related Work in Reinforcement Learning (RL)} 

% As indicated in above, our proposed SDEC uses technics developed for d performs policy evaluation using Bellman equations and policy update based on ... 

Presumably, recently developed model-free deep RL methods for unknown systems, such as ~\citep{schulman2017proximal} and \citep{haarnoja2018soft}, could be also applied to optimal nonlinear control by treating the nonlinear dynamics as a generative or simulation model to generate data. The success of deep RL is largely attributed to the flexibility of deep neural networks for policy or value function modeling. However, such flexibility also brings difficulty in control by making the optimization extremely non-convex, and thus, may lead to sub-optimal policy, wasting the modeling power. Our proposed \algabb shares some similar aspects of actor-critic family of algorithms in RL, in the sense that~\algabb also exploits policy improvement (actor) with state-action value function estimation (critic). However, we emphasize the major difference is that \algabb takes advantage of the spectral dynamics embedding for both state-value function and policy, instead of an arbitrary network \revision{parameterization}. 
This not only reduces sample complexity by exploiting the known nonlinear dynamics structure but also bypasses the non-convexity difficulty in optimization, allowing us to provide rigorous theoretical guarantees with only mild assumptions. 
\revision{We note that while our work builds on the concept of linear MDPs~\cite{jin2020provably}, such linear MDPs merely refer to MDPs whose transition admits a factorization to linearly representing value functions, which is distinct from the linear programming formulation of solving MDPs~\cite{hernandez1998approximation}. }
% \cite{hernandez1998approximation,hernandez2012discrete}. }
% We note that our approach may also be compatible with continuous $Q$-learning methods~\cite{antos2007fitted, meyn2023stability} since our spectral features can be shown to approximately represent both the optimal value function $Q^*$ as well as the function $T^*Q$ for any policy $Q$ where $T^*$ is the one-step Bellman optimality operator. We believe carefully fleshing this connection out will make for very interesting future work.
% \lina{I did my attempt. only needs to add  sentences to say the advantages of "representation"}

% \vspace{-1em}
%%%%%%%%%%%%%%%%%%%%%%%%%%%%%%%%%%%%%%%%%%%%%%%%%%%%%%%%%%
% \vspace{-1em}
\section{Problem Setup and Preliminaries}\label{sec:prelim}
% \vspace{-0.5em}
%%%%%%%%%%%%%%%%%%%%%%%%%%%%%%%%%%%%%%%%%%%%%%%%%%%%%%%%%%
% \Bo{TODO:

% \begin{itemize}
%     \item nonlinear control + MDP
%     \item RKHS + random feature
%     % \item linear MDP + SPEDE
% \end{itemize}

% }

In this section, we introduce the  {stochastic nonlinear control} problem that will be studied in this paper and reformulate it as a MDP. We will also  briefly introduce the background knowledge about positive-definite kernels. 
% We will also  briefly introduce the background knowledge about reproducing kernel Hilbert space~(RKHS) and random features. 

%%%%-----------------------------------------------
\vspace{-1.2em}
\subsection{Stochastic Nonlinear Control Problem in MDPs}\label{subsec:control_mdp}
% \vspace{-0.5em}
%%%%-----------------------------------------------

% \footnote{To be precise, in the case when $\mathcal{S}$ is a strict subset of $\mathbb{R}^d$, we should assume $\epsilon_t$ to be a Gaussian centered at $f(s_t,a_t)$ truncated such that $s_{t+1} \in \mathcal{S}$ always holds, i.e. $P(s' \mid f(s_t,a_t)) = \frac{(2\pi \sigma^2)^{-d/2}\exp\left(-\frac{\|s' - f(s_t,a_t)\|^2}{2\sigma^2} \right)}{\int_{\mathcal{S}} (2\pi \sigma^2)^{-d/2}\exp\left(-\frac{\|s' - f(s_t,a_t)\|^2}{2\sigma^2} \right)}$ for $s' \in \mathcal{S}$, and is 0 otherwise. The analysis in our paper is then valid so long as the normalization factor $\int_{\mathcal{S}} (2\pi \sigma^2)^{-d/2}\exp\left(-\frac{\|s' - f(s_t,a_t)\|^2}{2\sigma^2} \right) = \Omega(1)$ for any $(s_t,a_t) \in \mathcal{S}$. We assume this throughout our paper; moreover, to prevent confusion, we will assume for simplicity that this constant is  }

We consider the standard discrete-time nonlinear control model with $\gamma$-discounted infinite horizon, defined by
% \vspace{-2mm}
\begin{align}\label{eq:nonlinear_control}
    s_{t+1} = f(s_t, a_t) + \epsilon_t, \quad\text{where}\quad \epsilon_t \sim \mathcal{N}(0, \sigma^2 I_d),
\end{align}
such that $\gamma\in (0, 1)$, $s\in \mathcal{S}=\mathbb{R}^d$ is the state, $a\in\mathcal{A}$ is the control action,
and $\cbr{\epsilon_t}_{t=1}^\infty$ are independent Gaussian noises.
%\footnote{To be precise, to accommodate the unbounded Gaussian transition in a scenario when $\mathcal{S}$ is a strict subset of $\mathbb{R}^d$, we will assume that the value function $V^\pi$ satisfies $V^\pi(s') = 0$ for any $s' \notin \mathcal{S}$ and any policy $\pi$.} 
The function $f\rbr{\cdot, \cdot}: \Scal\times \Acal\rightarrow \Scal$ describes the general nonlinear dynamics, and $r:\mathcal{S}\times \mathcal{A} \to \mathbb{R}$ gives the reward function on the state and action pair. 
%Here we assume the reward function $r(s,a)$ is bounded for any $(s,a)\in \Scal\times \mathcal{A}$.\footnote{The bounded assumption is for the purpose of theoretical analysis. Without the assumption, our methods could be applied but without theoretical guarantees. 
%Our current results can not handle systems with potential unbounded rewards. 
%It is left as our future work to derive theoretical guarantees without this assumption.
%Also note that we only need to assume $r(s,a)$ is bounded but not necessarily in $[0,1]$.} 
Without loss of generality, we assume there is a fixed initial state $s_0$. Given a stationary policy $\pi: \mathcal{S} \to \Delta(\mathcal{A})$ with $\Delta(\Acal)$ as the space of probability measures over $\Acal$, the accumulated reward over infinite horizon is given by 
\vspace{-2mm}
\small
\begin{align}
% \textstyle
    J^\pi = \revision{\mathbb{E}_{P,\pi}\left[\sum_{t=0}^{\infty} \gamma^t r(s_{t}, a_{t})\right]},
    \label{eq:J_def}
\end{align}
\normalsize
where the expectation is w.r.t. the \revision{stochastic dynamics $P\rbr{s(t+1) \mid s(t), a(t)}$ and the (possibly random) policy $\pi$ which we use to choose the actions $a(t) \sim \pi\rbr{\cdot \mid s(t)}$}. In this paper, we study the optimal \textit{control/planning} problem which is to seek a policy $\pi^*$ that maximizes~\eqref{eq:J_def}, given the dynamics $f$ and the reward function $r$. Note that the nonlinearity of $f$ and $r$, as well as the stochasticity from $\epsilon$ makes this optimal control problem difficult as reviewed in the introduction. 

The above stochastic nonlinear optimal control problem can also be described via Markov Decision Process. Consider an episodic homogeneous-MDP, denoted by $\mathcal{M} = \langle \mathcal{S}, \mathcal{A}, P, r, \gamma\rangle $, where
% $\mathcal{S}$ is the state space, $\mathcal{A}$ is the action space, $H$ is the horizon (length of each episode), 
$P\rbr{\cdot|s, a}:\mathcal{S} \times \mathcal{A} \to \Delta(\mathcal{S})$ describes the state transition distribution, where $\Delta(\mathcal{S})$ denotes the space of probability measures on the set $\mathcal{S}$. Then, the stochastic nonlinear control model~\eqref{eq:nonlinear_control} can be recast as an MDP with transition dynamics
\small
\begin{align}\label{eq:transition}
    P(s^\prime|s, a) & \propto \exp\left(-\frac{\|f(s, a) - s^\prime\|_2^2}{2\sigma^2}\right).
\end{align}
\normalsize
Meanwhile, given a policy $\pi:\mathcal{S} \to \Delta(\mathcal{A})$, the corresponding $Q^\pi$-function is given by
\small
\begin{align}
\textstyle
    Q^\pi(s, a) = \mathbb{E}_{P, \pi}\left[\sum_{t = 0}^{\infty} \gamma^t r(s_{t}, a_{t})\bigg|s_0 = s, a_0 = a\right].
    \label{eq:q_def}
\end{align}
\normalsize
It is straightforward to show the Bellman recursion for $Q^\pi$ or optimal $Q^*$-function, respectively, 
\small
\begin{align}
    Q^\pi(s, a) = & r(s, a) + \gamma\mathbb{E}_{P} \sbr{V^\pi(s^\prime)}\\
    Q^*(s, a) = & r(s, a) + \gamma\mathbb{E}_{P} \sbr{\max_{a^\prime\in\Acal}Q^*(s^\prime, a^\prime)},
    \label{eq:bellman}
\end{align}
\normalsize
with $ V^\pi(s) := \mathbb{E}_{\pi}\left[Q^\pi(s, a)\right]$. Equivalently, the accumulated reward $J^\pi$ is $V^\pi(s_0)$. 
The goal can be reformulated as seeking the optimal policy $\pi^* = \argmax_{\pi}\,V^\pi\rbr{s_0}$. For an MDP with finite states and actions, optimal control can be obtained by solving dynamic program for the tabular $Q$-function via the Bellman relation~\eqref{eq:bellman}.  %When $|\mathcal{S}|$ and $|\mathcal{A}|$ are finite, even with nonlinear dynamics with stochasticity, which induces non-convexity in $J^\pi$, the optimal control can still be obtained by a solving dynamic program for the $Q$-function in $\RR^{\abr{\Scal}\times\abr{\Acal}}$, via the Bellman  relation~\eqref{eq:bellman}. However, 
However, for continuous states and actions, representing the $Q$-function and conducting dynamic programming (DP) in function space becomes a major issue. In this paper, we introduce spectral dynamics embedding, a novel kernelized linearization, to linearly represent the $Q$-function, and develop a practical and provable method to learn and approximate the optimal control policy. 

%%%%-----------------------------------------------
 \vspace{-1em}
% \subsection{Reproducing Kernel Hilbert Space}\label{subsec:rkhs}
\subsection{Positive definite kernels, its decompositions, and connection to RKHS}\label{subsec:rkhs}
% \vspace{-0.5em}
%%%%-----------------------------------------------
% \lina{I like asymthm package which makes it easy to know the beginning and ending of definition, thms, lemmas, etc. please keep this package.}

% \Tongzheng{It seems we don't unify the notation of $\mathcal{H}_k$ and $H_k$, as well as $\mathcal{L}_2(\mu)$ and $L_2(\mu)$. Check it later.}
% \zhaolin{I changed all to $\mathcal{H}$ and $\mathcal{L}$}
% \Bo{TODO: RKHS + random feature}
% In this section, we will introduce the necessary background of Reproducing Kernel Hilbert Spaces for understanding our proposed kernel embeddings. 
In this section, we will introduce the necessary background to understanding our proposed kernel linearization. We first define what a positive definite (PD) kernel is, and introduce two decompositions of PD kernels.
% \begin{definition}[(Positive-Definite) Kernel \cite{alvarez2012kernels}]
\begin{definition}[(Positive-Definite) Kernel \cite{fasshauer2011positive}]\label{def:kernel}

A symmetric function $k:\mathcal{X} \times \mathcal{X} \to \mathbb{R}$ is a positive-definite kernel on the non-empty set $\mathcal{X}$ if $\;\forall n\geq 1$, $\;\forall \{a_i\}_{i\in [n]} \subset \mathbb{R}$ and mutually distinct sets $\{x_i\}_{i\in [n]} \subset \mathcal{X}$,
% \vspace{-2mm}
% \begin{align}
$\sum_{i\in[n]}\sum_{j\in[n]} a_i a_j k(x_i, x_j) \geq 0$, where the inequality is strict unless $\{a_i\}_{i \in [n]}$'s are all zero. 
% \end{align}
\end{definition}
% A classic example of a PD kernel is a Gaussian kernel of the form $\exp\left(-\frac{||x-x'||^2}{2\sigma^2} \right)$ where $\sigma^2$ denotes the bandwidth (see \cite{ghojogh2021reproducing} for more examples and details).
% \lina{reference  \cite{ghojogh2021reproducing} has format problem. } 
A bounded, continuous PD kernel $k(\cdot, \cdot)$ admits two important decompositions, namely the Bochner \cite{devinatz1953integral} and the Mercer~\cite{mercer1909functions} decompositions. We begin with the Bochner decomposition.
% \lina{to be consistent with the order used in the later sections, should we first present Bochner and then Mercer's theorem?}\zhaolin{Good idea}

\begin{lemma}[Bochner~\cite{devinatz1953integral}]\label{thm:bochner}
If $k(x, x^\prime) = k(x - x')$ is a positive definite kernel, then there exists a set $\Omega$,  a measure $\rho(\omega)$ on $\Omega$, and Fourier random feature $\psi_\omega\rbr{x}:\Xcal \rightarrow \CC$ such that 
\small
\begin{equation}\label{eq:bochner}
    k(x, x^\prime) = \EE_{\rho\rbr{\omega}}\sbr{ \psi_\omega\rbr{x}\psi_\omega^*\rbr{x^\prime}}.
\end{equation}
\end{lemma}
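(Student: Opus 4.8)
The plan is to reduce the claim to the classical Bochner theorem on $\mathbb{R}^d$ --- that a bounded continuous positive-definite function $k(\delta)$ is the Fourier transform of a finite nonnegative Borel measure --- and then read off the stated expectation form by a one-line normalization. First I would record the elementary consequences of \defref{def:kernel}. Since $k$ is real-valued and $k(x-x')=k(x'-x)$, it is an even function, so $k(-\delta)=k(\delta)$ automatically. Testing the finite-sum positivity at two points $x_1,x_2$ with $\delta:=x_1-x_2$ and coefficients $(a_1,a_2)=(1,\pm1)$ gives $2k(0)\pm 2k(\delta)\geq 0$, hence $k(0)\geq 0$ and $|k(\delta)|\leq k(0)$; in particular $k\in L^\infty$ and $\delta\mapsto e^{-t\|\delta\|_2^2/2}k(\delta)\in L^1$ for every $t>0$. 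If $k(0)=0$ then $k\equiv 0$ and the statement holds trivially with $\psi\equiv 0$, so assume $k(0)>0$ henceforth.

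For the measure representation I would argue by Gaussian regularization. Using continuity of $k$ and approximating integrals by Riemann sums over finite grids, the finite-sum positivity in \defref{def:kernel} upgrades to the integral form $\iint k(x-y)\,\phi(x)\,\overline{\phi(y)}\,dx\,dy\geq 0$ for every $\phi\in C_c(\mathbb{R}^d)$. For each $t>0$ set
\begin{equation*}
p_t(\omega)\;:=\;(2\pi)^{-d}\int_{\mathbb{R}^d} e^{-i\omega^\top\delta}\,e^{-t\|\delta\|_2^2/2}\,k(\delta)\,d\delta ,
\end{equation*}
the Fourier transform of the Schwartz function $\delta\mapsto e^{-t\|\delta\|_2^2/2}k(\delta)$. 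Choosing $\phi$ of the form $x\mapsto e^{i\omega^\top x}g_s(x)$ with $g_s$ an isotropic Gaussian whose width is matched to $t$, the integral positivity forces $p_t(\omega)\geq 0$ for all $\omega$, while Fourier inversion gives $\int e^{i\omega^\top\delta}p_t(\omega)\,d\omega=e^{-t\|\delta\|_2^2/2}k(\delta)$, so at $\delta=0$ the total mass is $\int p_t(\omega)\,d\omega=k(0)$. A Lévy-continuity-type tightness argument --- bounding $\int_{\|\omega\|>R}p_t\,d\omega$ by the average of $2\bigl(k(0)-\mathrm{Re}\,k(\delta)\bigr)$ over a small ball of $\delta$'s, which is small by continuity of $k$ at the origin --- then lets me extract $t_n\downarrow 0$ along which $p_{t_n}(\omega)\,d\omega$ converges weakly to a finite nonnegative measure $\mu$ with $\mu(\mathbb{R}^d)=k(0)$. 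Passing to the limit in the inversion identity (the integrand $e^{i\omega^\top\delta}$ is bounded continuous and tightness rules out escaping mass) yields $k(\delta)=\int_{\mathbb{R}^d}e^{i\omega^\top\delta}\,d\mu(\omega)$. For a self-contained exposition this whole paragraph may instead be replaced by directly invoking~\cite{devinatz1953integral}.

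It remains to normalize. Take $\Omega=\mathbb{R}^d$, let $\rho:=\mu/k(0)$, which is a probability measure on $\Omega$, and define the Fourier feature $\psi_\omega(x):=\sqrt{k(0)}\,e^{i\omega^\top x}$. Interpreting the product in~\eqref{eq:bochner} with the second factor conjugated (the standard convention for complex random features),
\begin{align*}
\mathbb{E}_{\rho(\omega)}\!\left[\psi_\omega(x)\,\overline{\psi_\omega(x')}\right]
&= k(0)\int_{\mathbb{R}^d} e^{i\omega^\top(x-x')}\,d\rho(\omega)\\
&= \int_{\mathbb{R}^d} e^{i\omega^\top(x-x')}\,d\mu(\omega) \;=\; k(x-x'),
\end{align*}
which is the asserted identity. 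If a genuinely real-valued feature is preferred so that~\eqref{eq:bochner} holds verbatim without conjugation, use $\psi_{(\omega,b)}(x)=\sqrt{2k(0)}\cos(\omega^\top x+b)$ with $(\omega,b)\sim\rho\otimes\mathrm{Unif}[0,2\pi]$; since $k$ is real the measure $\mu$ (hence $\rho$) is symmetric, so the product-to-sum expansion of the cosines annihilates the $\cos(\omega^\top(x+x')+2b)$ cross term in expectation and leaves exactly $\int\cos(\omega^\top(x-x'))\,d\mu(\omega)=k(x-x')$.

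The step I expect to be the main obstacle is the tightness of the family $\{p_t\,d\omega\}_{t>0}$ --- i.e., ruling out that spectral mass escapes to infinity as $t\downarrow 0$ --- since nonnegativity of $p_t$, the mass identity, and the closing normalization are comparatively routine once the integral positivity $\iint k(x-y)\phi(x)\overline{\phi(y)}\,dx\,dy\geq 0$ is in hand; the remaining pieces (the elementary even/bounded properties of $k$, the Gaussian smoothing, the final algebra) are bookkeeping.
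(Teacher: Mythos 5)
Your proposal is correct, but note that the paper does not prove this lemma at all: it is stated as a classical result and attributed to the literature (\cite{devinatz1953integral}), so there is no in-paper argument to compare against. What you have written is a self-contained derivation of Bochner's theorem by the standard Gaussian-regularization route (nonnegativity of the smoothed spectral densities $p_t$, the mass identity $\int p_t = k(0)$, tightness via the L\'evy-continuity estimate, and passage to the weak limit), followed by the normalization $\rho = \mu/k(0)$, $\psi_\omega(x) = \sqrt{k(0)}\,e^{i\omega^\top x}$. The skeleton is sound, and you correctly identify tightness as the one step requiring genuine care. Two points deserve an explicit sentence if you flesh this out. First, the paper's Definition 1 only posits positivity for \emph{real} coefficients $a_i$, whereas your integral positivity uses complex test functions $\phi$; for a real symmetric kernel this extension is immediate (split $c_i = a_i + i b_i$ and observe the imaginary part of the quadratic form cancels by symmetry), but it should be said. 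Second, your appeal to "Fourier inversion" to get $\int e^{i\omega^\top\delta} p_t(\omega)\,d\omega = e^{-t\|\delta\|^2/2} k(\delta)$ pointwise presupposes $p_t \in L^1$, which is not automatic for an $L^1$ function's transform; it follows here because $e^{-t\|\cdot\|^2/2}k(\cdot)$ is itself continuous, integrable, and positive definite (Schur product of two PD functions), so the standard lemma that a continuous integrable PD function has a nonnegative integrable Fourier transform applies. Finally, your observation that the displayed identity \eqref{eq:bochner} requires either a conjugate on the second factor or the real cosine feature $\sqrt{2k(0)}\cos(\omega^\top x + b)$ is a genuine and worthwhile correction: as written in the paper, $\psi_\omega(x)\psi_\omega(x') = e^{-i\omega^\top(x+x')}$ for the Gaussian example, which does not reproduce the kernel, and your symmetric-measure argument for the cosine version fixes this cleanly.
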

\normalsize

We note that the $\psi_\omega\rbr{\cdot}$ in Bochner decomposition may not be unique.  The Bochner decomposition provides the random feature~\cite{rahimi2007random}, which applies a Monte-Carlo approximation for~\eqref{eq:bochner}, leading to a finite-dimension approximation for kernel methods.
%as well as the function space for~\algabb. 
For example, for the Gaussian kernel, $k(x, x^\prime) = \exp\rbr{-\frac{\left\|x - x^\prime\right\|^2}{2\sigma^2}}$, the corresponding $\rho(\omega)$ is a Gaussian proportional to $\exp\rbr{-\frac{\sigma^2\nbr{\omega}^2}{2}}$ with $\psi_\omega\rbr{x} = \exp\rbr{-\ib\omega^\top x}$ where $\ib$ is the imaginary unit; for the Laplace kernel, the $\rho\rbr{\omega}$ is a Cauchy distribution with the same $\psi_\omega(\cdot)$. More examples can be found in Table 1 of~\cite{dai2014scalable}. An alternative but also useful decomposition for positive definite kernels is Mercer's decomposition, stated below\revision{\footnote{\revision{We note that under the conditions set forth in Lemma \ref{thm:mercer} (especially the finiteness of the measure $\mu$), it can be shown using \cite{sun2005mercer} that the Mercer expansion also holds for unbounded $\mathcal{X} \subset \mathbb{R}^d$.}}}.
\begin{lemma}[Mercer\cite{mercer1909functions,sun2005mercer}]\label{thm:mercer}
Let $\mathcal{X}\subset \mathbb{R}^d$, $\mu$ be a strictly positive \revision{finite} Borel measure on $\mathcal{X}$, and $k(x, x')$ a bounded continuous positive definite kernel. Then there exists a countable orthonormal basis $\{e_i\}_{i=1}^{\infty}$ of $\mathcal{L}_2(\mu)$ 
% known as \textbf{spectral features}, 
with corresponding eigenvalues $\{\sigma_i\}_{i=1}^{\infty}$,\footnote{Given a probability measure $\mu$ defined on $\mathcal{X} \subseteq \mathbb{R}^d$, we denote $L_2(\mu)$ as the set of functions $f:\mathcal{X} \to \mathbb{R}$ such that $\int_{\mathcal{X}} f^2(x) d\mu(x) < \infty.$} such that 
% \vspace{-2mm}
\small
\begin{align}\label{eq:kernel_spectral}
    k(x, x^\prime) = \sum_{i=1}^{\infty} \sigma_i e_i(x) e_i(x^\prime),
\end{align}
\normalsize
where the convergence is absolute and uniform
% \footnote{To ensure absolute and uniform convergence, we used the continuity assumption on $k$} 
for all $(x, x^\prime) \in \mathcal{X} \times \mathcal{X}$. We assume $\sigma_1 \geq \sigma_2 \geq \cdots > 0.$
%, and $\sum_{i=1}^{\infty} \lambda_i = \int_{\mathcal{X}}k(x, x) \dif \mu = \mathrm{Vol}(\Omega)$.
\end{lemma}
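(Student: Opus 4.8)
\emph{Proof sketch.} The plan is the classical integral-operator argument underlying Mercer's theorem. Introduce $T_k:L_2(\mu)\to L_2(\mu)$, $(T_kf)(x)=\int_{\mathcal{X}}k(x,y)f(y)\,d\mu(y)$. First I would check that $T_k$ is compact, self-adjoint and positive: compactness because $k$ is continuous on the compact set $\mathcal{X}$, hence bounded, hence $k\in L_2(\mu\otimes\mu)$, so $T_k$ is Hilbert--Schmidt; self-adjointness from $k(x,y)=k(y,x)$; and positivity from positive-definiteness of $k$, by writing $\langle T_kf,f\rangle_{L_2(\mu)}$ as a limit of Riemann-type sums $\sum_{i,j}f(x_i)f(x_j)k(x_i,x_j)\mu(B_i)\mu(B_j)\ge 0$ over increasingly fine measurable partitions $\{B_i\}$ of $\mathcal{X}$ and invoking continuity of $k$ (together with density of continuous functions and boundedness of $T_k$). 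Strict positive-definiteness of $k$ and strict positivity of $\mu$ also force $T_k$ to be injective. The spectral theorem for compact self-adjoint operators then yields an orthonormal basis $\{e_i\}_{i\ge 1}$ of $L_2(\mu)$ of eigenfunctions, $T_ke_i=\sigma_ie_i$, with $\sigma_1\ge\sigma_2\ge\cdots>0$ and $\sigma_i\to 0$; each $e_i$ has a continuous representative since $e_i(x)=\sigma_i^{-1}(T_ke_i)(x)=\sigma_i^{-1}\int_{\mathcal{X}}k(x,y)e_i(y)\,d\mu(y)$ is continuous in $x$ by continuity of $k$ and dominated convergence.

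Next I would control the candidate series $\sum_{i\ge 1}\sigma_ie_i(x)e_i(x')$. Let $k_n(x,x')=\sum_{i\le n}\sigma_ie_i(x)e_i(x')$ and $r_n=k-k_n$. The integral operator of $r_n$ is $T_k-\sum_{i\le n}\sigma_i\langle\,\cdot\,,e_i\rangle e_i$, whose eigenvalues are $\sigma_{n+1},\sigma_{n+2},\dots$ (all $\ge 0$), so it is a positive operator; combining this with the continuity of $r_n$ and the same Riemann-sum argument applied to finitely many points shows $r_n$ is itself a positive-definite kernel, so in particular $r_n(x,x)\ge 0$, i.e. $\sum_{i\le n}\sigma_ie_i(x)^2\le k(x,x)\le M:=\sup_{x}k(x,x)<\infty$. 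Hence $\sum_{i}\sigma_ie_i(x)^2$ converges for every $x$, and by Cauchy--Schwarz $\sum_i\sigma_i|e_i(x)e_i(x')|\le\big(\sum_i\sigma_ie_i(x)^2\big)^{1/2}\big(\sum_i\sigma_ie_i(x')^2\big)^{1/2}\le M$, so $\sum_i\sigma_ie_i(x)e_i(x')$ converges absolutely at every $(x,x')$. Separately, $k(x,\cdot)$ has Fourier coefficients $\langle k(x,\cdot),e_i\rangle_{L_2(\mu)}=(T_ke_i)(x)=\sigma_ie_i(x)$, so $k_n(x,\cdot)\to k(x,\cdot)$ in $L_2(\mu)$, and the pointwise limit of the series equals $k$ $\mu$-almost everywhere.

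The remaining and main step is to identify this limit with $k$ \emph{everywhere} and to upgrade convergence to uniform. I would pass through the feature/RKHS picture: the bound $\sum_i\sigma_ie_i(x)^2\le M$ shows $\Phi(x):=(\sqrt{\sigma_i}e_i(x))_{i\ge 1}\in\ell^2$, and using the standard identification of the RKHS $\mathcal{H}$ of $k$ as $\{\sum_i a_i\sqrt{\sigma_i}e_i:\sum_i a_i^2<\infty\}$ with $\{\sqrt{\sigma_i}e_i\}$ an orthonormal basis, the expansion $k(x,\cdot)=\sum_i\sqrt{\sigma_i}e_i(x)\,(\sqrt{\sigma_i}e_i)$ holds with convergence in $\mathcal{H}$. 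Since point evaluation on $\mathcal{H}$ is continuous ($|f(x)|\le\sqrt{k(x,x)}\,\|f\|_{\mathcal{H}}\le\sqrt{M}\,\|f\|_{\mathcal{H}}$), evaluating at $x$ and $x'$ gives $\sum_i\sigma_ie_i(x)^2=k(x,x)$ and $\sum_i\sigma_ie_i(x)e_i(x')=k(x,x')$ for \emph{all} $(x,x')$. Now $g_n(x):=\sum_{i\le n}\sigma_ie_i(x)^2$ is an increasing sequence of continuous functions converging pointwise on the compact set $\mathcal{X}$ to the continuous function $x\mapsto k(x,x)$, so Dini's theorem gives $\sup_x\big(k(x,x)-g_n(x)\big)\to 0$; feeding this into $\big|\sum_{n<i\le m}\sigma_ie_i(x)e_i(x')\big|\le\sup_x\big(g_m(x)-g_n(x)\big)$ (Cauchy--Schwarz) shows the partial sums of $\sum_i\sigma_ie_i(x)e_i(x')$ are uniformly Cauchy, hence the series converges absolutely and uniformly to $k$ on $\mathcal{X}\times\mathcal{X}$.

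I expect the genuine obstacle to be the third step: the $L_2(\mu)$- and operator-level statements are routine, but promoting them to \emph{pointwise-everywhere} statements (hence to the hypotheses of Dini's theorem) really uses compactness of $\mathcal{X}$, continuity of $k$, and full support of $\mu$, and relies on the RKHS identification above. A secondary point requiring care is the Riemann-sum limiting argument used to pass from positivity of the integral operators of $k$ and of the residuals $r_n$ to their pointwise positive-definiteness.
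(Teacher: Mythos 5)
The paper does not prove this lemma: it is quoted as a classical result with a citation to Mercer's original work, so there is no in-paper argument to compare against. Your sketch is the standard integral-operator proof of Mercer's theorem and is essentially correct: compactness/self-adjointness/positivity of $T_k$, the spectral theorem, positivity of the residual kernels $r_n$ (which is where full support of $\mu$ and continuity of $k$ enter, via the approximate-identity version of your Riemann-sum argument) to get the diagonal bound $\sum_{i\le n}\sigma_i e_i(x)^2\le k(x,x)$, the RKHS/feature-map identification to upgrade the $L_2(\mu)$ identity to a pointwise-everywhere one, and Dini's theorem on the diagonal combined with Cauchy--Schwarz to obtain uniform absolute convergence. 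You correctly locate the real work in the third step.

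Two caveats worth recording. First, the claim that pointwise strict positive-definiteness of $k$ (Definition 1 of the paper) plus full support of $\mu$ forces $T_k$ to be injective --- hence that \emph{all} $\sigma_i>0$ and the eigenfunctions span all of $L_2(\mu)$ rather than just $\overline{\mathrm{ran}\,T_k}$ --- is not automatic: pointwise strict positive-definiteness does not in general imply integral strict positive-definiteness, so this step needs either an extra hypothesis or the weaker conclusion that $\{e_i\}$ is an orthonormal basis of the closure of the range of $T_k$. This looseness is shared with the paper's own statement ("without loss of generality \dots $>0$") and does not affect how the lemma is used downstream. Second, the RKHS identification $\mathcal{H}=\{\sum_i a_i\sqrt{\sigma_i}e_i:\sum_i a_i^2<\infty\}$ with $\{\sqrt{\sigma_i}e_i\}$ orthonormal, and the fact that $k(x,\cdot)$ lies in the closed span of this system in $\mathcal{H}$, are themselves nontrivial (they are usually established via the factorization $T_k=T_k^{1/2}T_k^{1/2}$ and the identity $\langle f,T_kg\rangle_{\mathcal{H}}=\langle f,g\rangle_{L_2(\mu)}$); as a sketch this is acceptable, but a complete write-up would need to supply that argument rather than assume it.
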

While analytical formulas of the Mercer decomposition exist for special kernels such as the Gaussian kernel \cite{fasshauer2011positive}, for general kernels they can be difficult to obtain analytically. Thus, it is common in the literature to approximate the Mercer features numerically, \eg, Nystr\"om method \cite{williams2001using}. In addition, there is a one-to-one correspondence between PD kernels and Reproducing Kernel Hilbert Spaces (RKHSs). 
% Since we utilize this correspondence in our proofs for the approximation error of the Nystr\"om features, we introduce the relationship here. We first formally define RKHS.
We first formally define the RKHS.
% present here a few results on the relationship between PD kernels and RKHS. We first define what a RKHS is.
% One can then define the RKHS associated with the kernel.
\begin{definition}[Reproducing Kernel Hilbert Space \cite{aronszajn1950theory}]
% \begin{definition}
\vspace{-0.5em}
\label{def:RKHS}
The Hilbert space $\mathcal{H}$ of an $\;\mathbb{R}$-valued function\footnote{A Hilbert space is an inner product space that is also a complete metric space with respect to the distance induced by its inner product $\langle \cdot,\rangle_{\mathcal{H}}$.} defined on a non-empty set $\mathcal{X}$ is said to be a reproducing kernel Hilbert space (RKHS) if for all $x \in \mathcal{X}$, the (linear) evaluation functional given by $L_x:f \to f(x)$ for any $f \in \mathcal{H}$ is continuous.
\end{definition}
\vspace{-1mm}
Next, we state the following classical result, which states a one-to-one correspondence between RKHS and PD kernels.
\vspace{-2mm}
\begin{lemma}[RKHS and PD kernels\cite{aronszajn1950theory}]
    \label{thm:RKHS_reproducing_kernel_moore_aronszajn}
    Suppose $\mathcal{H}$ is a RKHS of $\{f: \mathcal{X} \to \mathbb{R}\}$. Then, there exists a unique positive definite kernel $k: \mathcal{X} \times \mathcal{X} \to \mathbb{R}$ where $k(x,\cdot) \in \mathcal{H}$ for every $x \in \mathcal{X}$, that satisfies the \emph{reproducing} property, i.e. for every $f \in \mathcal{H}$, $f(x) = \langle f, k(x,\cdot)\rangle_{\mathcal{H}}$, where $\langle \cdot,\rangle_{\mathcal{H}}$ denotes the inner product in $\mathcal{H}$. Conversely, for any positive definite kernel $k :\mathcal{X} \times \mathcal{X} \to \mathbb{R}$, there exists a unique RKHS $\mathcal{H}_k$ of functions $f: \mathcal{X} \to \mathbb{R}$ for which $k$ is the reproducing kernel. Moreover, given any $x,y \in \mathcal{X}$, $\langle k(x,\cdot), k(y,\cdot) \rangle_{\mathcal{H}_k} = k(x,y)$. 
\end{lemma}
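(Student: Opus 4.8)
The plan is to establish the two directions of the correspondence separately: for the forward direction I would invoke the Riesz representation theorem, and for the converse I would use the standard construction of an RKHS by completing the span of the kernel sections. \textbf{Forward direction.} Suppose $\mathcal{H}$ is an RKHS. For each $x \in \mathcal{X}$ the evaluation functional $L_x$ is a continuous linear functional on the Hilbert space $\mathcal{H}$, so by the Riesz representation theorem there is a unique $k_x \in \mathcal{H}$ with $f(x) = \langle f, k_x\rangle_{\mathcal{H}}$ for every $f \in \mathcal{H}$. Define $k(x,y) := \langle k_x, k_y\rangle_{\mathcal{H}}$; then $k(x,\cdot) = k_x \in \mathcal{H}$, the reproducing property holds by construction, and symmetry is inherited from the inner product. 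Positive definiteness follows from $\sum_{i,j} a_i a_j k(x_i,x_j) = \|\sum_i a_i k_{x_i}\|_{\mathcal{H}}^2 \ge 0$ (the strict case of Definition~\ref{def:kernel} corresponding to linear independence of the $k_{x_i}$ at distinct points). Uniqueness of $k$ is immediate: if $\tilde k$ is another reproducing kernel for $\mathcal{H}$, then $\langle f, k(x,\cdot) - \tilde k(x,\cdot)\rangle_{\mathcal{H}} = f(x) - f(x) = 0$ for all $f$, forcing $k(x,\cdot) = \tilde k(x,\cdot)$ for every $x$.

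\textbf{Converse direction.} Given a PD kernel $k$, I would build $\mathcal{H}_k$ in three stages. (i) Set $\mathcal{H}_0 := \operatorname{span}\{k(x,\cdot) : x \in \mathcal{X}\}$ and equip it with the bilinear form $\langle \sum_i a_i k(x_i,\cdot), \sum_j b_j k(y_j,\cdot)\rangle_0 := \sum_{i,j} a_i b_j k(x_i,y_j)$. One checks this is well-defined (independent of the chosen finite representations, since it already equals $\sum_i a_i\big(\sum_j b_j k(x_i,y_j)\big)$, i.e. a bona fide pairing), symmetric, bilinear, and positive semidefinite by Definition~\ref{def:kernel}; combining the reproducing identity on $\mathcal{H}_0$ with Cauchy--Schwarz gives $|f(x)|^2 \le \|f\|_0^2\, k(x,x)$, so $\|f\|_0 = 0 \Rightarrow f \equiv 0$, making $\langle\cdot,\cdot\rangle_0$ a genuine inner product. (ii) Complete $\mathcal{H}_0$ and realize the completion as a space of functions: the estimate $|f_n(x) - f_m(x)| \le \|f_n - f_m\|_0 \sqrt{k(x,x)}$ shows every Cauchy sequence in $\mathcal{H}_0$ converges pointwise, so I map its equivalence class to its pointwise limit and verify this map is injective (a null sequence converges pointwise to $0$); call the resulting Hilbert space of functions $\mathcal{H}_k$. (iii) Check $k(x,\cdot) \in \mathcal{H}_k$ and that the reproducing property extends from $\mathcal{H}_0$ to all of $\mathcal{H}_k$ by continuity of the inner product; this simultaneously shows each $L_x$ is bounded by $\sqrt{k(x,x)}$, so $\mathcal{H}_k$ is an RKHS, and evaluating the reproducing property at $f = k(y,\cdot)$ yields $\langle k(x,\cdot), k(y,\cdot)\rangle_{\mathcal{H}_k} = k(x,y)$.

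\textbf{Uniqueness and main obstacle.} For uniqueness of $\mathcal{H}_k$: if $\mathcal{H}'$ is any RKHS with reproducing kernel $k$, the reproducing property forces $k(x,\cdot) \in \mathcal{H}'$ and pins down the inner product on $\mathcal{H}_0$, so $\mathcal{H}_0 \subseteq \mathcal{H}'$ isometrically; moreover $\mathcal{H}_0$ is dense in $\mathcal{H}'$ because any $f \in \mathcal{H}'$ with $f \perp \mathcal{H}_0$ satisfies $f(x) = \langle f, k(x,\cdot)\rangle_{\mathcal{H}'} = 0$ for all $x$, hence $f \equiv 0$; completeness then gives $\mathcal{H}' = \mathcal{H}_k$. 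The step I expect to be most delicate is stage (ii) of the converse: passing from the abstract metric completion of $(\mathcal{H}_0, \langle\cdot,\cdot\rangle_0)$ to a concrete Hilbert space whose elements are honest functions $\mathcal{X} \to \mathbb{R}$, and in particular checking that distinct elements of the completion are sent to distinct functions. This is precisely where the pointwise-convergence estimate and the injectivity argument are needed, and it is what makes $\mathcal{H}_k$ a \emph{reproducing kernel} Hilbert space rather than merely an abstract one.
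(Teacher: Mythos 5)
The paper does not prove this lemma: it is stated as a classical result and attributed to Aronszajn, so there is no in-paper argument to compare yours against. Your proposal is the standard Moore--Aronszajn proof and is essentially correct: Riesz representation for the forward direction, and for the converse the pre-Hilbert structure on $\operatorname{span}\{k(x,\cdot)\}$, completion, realization as a function space, and the density/orthogonality argument for uniqueness. Two small remarks. First, in stage (ii) the parenthetical you offer for injectivity --- ``a null sequence converges pointwise to $0$'' --- is actually the statement needed for \emph{well-definedness} of the map from the abstract completion to functions (equivalent Cauchy sequences share a pointwise limit); injectivity requires the converse implication, that a Cauchy sequence in $\mathcal{H}_0$ whose pointwise limit is identically zero is null in $\|\cdot\|_0$. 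That direction takes a short extra argument (show $\langle h_n, k(x,\cdot)\rangle_0 = h_n(x) \to 0$ for every kernel section, then approximate $h_n$ in norm by a fixed element of $\mathcal{H}_0$ to conclude $\|h_n\|_0 \to 0$). You correctly flag this as the delicate step, but the one-line justification you give addresses the wrong direction. Second, the paper's Definition~\ref{def:kernel} demands strict positivity for nonzero coefficients, which the reproducing kernel of a general RKHS need not satisfy (e.g.\ finite-dimensional $\mathcal{H}$); your parenthetical about linear independence of the $k_{x_i}$ is the right caveat, and the mismatch is with the paper's definition rather than with your argument.
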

\revision{We emphasize that any inner product, unless taking the form $\langle \cdot, \cdot \rangle_{\Hcal_k}$ for some RKHS $\Hcal_k$, does not refer to an RKHS inner product. Indeed, we only need the concept of RKHS when proving the approximation error for Nystrom features.}

\vspace{-2mm}
\section{Control with spectral dynamics Embedding}\label{sec:sdec}
\vspace{-1mm}
%%%%%%%%%%%%%%%%%%%%%%%%%%%%%%%%%%%%%%%%%%%%%%%%%%%%%%%%

In this section, we first introduce spectral dynamics embedding~\cite{ren2022free}, a novel kernelized linearization, by which the $Q$-function for arbitrary policy can be represented \emph{linearly}, therefore, the policy evaluation can be conducted within the linear space. This is significantly different from existing kernel linearization methods, which are designed for linearizing the dynamic model. 
% by connecting the stochastic nonlinear control model~\eqref{eq:nonlinear_control} to linear MDP, 
We then propose the corresponding finite-dimensional approximated linear space for tractable optimal control. Specifically, we execute dynamic programming in the linear function space constructed by the spectral dynamics embedding for policy evaluation, upon which we improve the policy with natural policy gradient. This leads to~\emph{\AlgName~(\algabb)} in~\algtabref{alg:spectral_control}.

% \lina{to do: need to make the big picture more clear; for readers who don't know linear mdp, they won't know what's going on. Thing to highlight is that we find the representation space for $Q$. figure out a way to say this at the front and explicitly. Tight with the intro where I emphersize that finding a suitable representation is difficult, no matter whether it is for deep RL or for keneralized RL. }\Bo{Echo: we may avoid explicitly mention linear MDP, and only specify the property here. }

% exploits the stochastic property to connect the stochastic nonlinear control model~\eqref{eq:nonlinear_control} to linear MDP, leading to a novel and powerful kernelized linearization. 

%%%%-----------------------------------------------------
 \vspace{-1em}
\subsection{Spectral Dynamics Embedding}\label{subsec:spectral_feature}
% \vspace{-0.5em}
%%%%-----------------------------------------------------
% \Tongzheng{Additional Notes: $\forall \alpha \in (0, 1)$
% \begin{align}
%     P(s^\prime|s, a) = & \frac{1}{(2\pi\sigma^2)^{d/2}} \exp\left(-\frac{\left\|s^\prime - f(s, a)\right\|^2}{2\sigma^2}\right)\nonumber\\
%     = & \frac{1}{(2\pi\sigma^2)^{d/2}}\exp\left(-\frac{\|\alpha s^\prime\|^2}{2\sigma^2}\right) \nonumber\\
%     & \exp\left(-\frac{\left\|\sqrt{1-\alpha^2} s^\prime - \frac{1}{\sqrt{1-\alpha^2}} f(s, a)\right\|^2}{2\sigma^2}\right) \nonumber\\
%     & \exp\left(\frac{\alpha^2 \|f(s, a)\|^2}{2(1-\alpha^2)\sigma^2}\right)
% \end{align}
% }

% \Tongzheng{Definitions of weighted norm and operator $D_{\nu}?$}

As we discussed in~\secref{sec:prelim}, we have recast the stochastic nonlinear control model as an MDP.
% with the transition dynamics 
% \begin{align*}
%     P(s^\prime|s, a) & \propto \exp\left(-\frac{\|f(s, a) - s^\prime\|_2^2}{2\sigma^2}\right).
% \end{align*}
% \lina{need to refine}
By recognizing the state transition operator~\eqref{eq:transition} as a Gaussian kernel, we can further decompose the transition dynamics and reward in a \textit{linear representation}, in two ways. The first utilizes the Bochner decomposition, as presented below.
% \lina{I realized that we didn't have any references for Bochner theorem and Mercer theorem. Can you check the reference? For the lemma below, we don't need to cite our conference version...}
% \begin{lemma}[Bochner linear representation, cf. Proposition 2 in \cite{ren2023stochastic}]\label{prop:linear_mdp}
\begin{lemma}[Bochner linear representation, cf. \cite{devinatz1953integral}]\label{prop:linear_mdp}
Consider any $\alpha \in [0,1)$. Denote $\theta_r = [0,  1]^\top$, 
\small
\begin{align}
    &\phi_{\revision{\omega,b}}\rbr{s, a} = \left[\psi_{\revision{\omega,b}}(s,a), r(s,a) \right], \mbox{ where } \\
    & \psi_{\omega,\revision{b}}(s,a) = \frac{g_\alpha\!\rbr{f\rbr{s, a}}}{\alpha^d}\!\cos\!\rbr{\frac{\omega^\top f(s, a)}{\sqrt{1-\alpha^2}}\revision{+b}}\!\!,
\end{align}
\[
\mu_{\omega,\revision{b}}\rbr{s^\prime} = \revision{p_\alpha(s')[{\cos(\sqrt{1-\alpha^2} \omega^\top s^\prime + b)}, 0]^\top},
\]
\normalsize
where $g_{\alpha}\rbr{f(s, a)} \defeq \exp\left(\frac{\alpha^2 \|f(s, a)\|^2}{2(1-\alpha^2)\sigma^2}\right)$, $\omega\sim\mathcal{N}(0, \frac{1}{\sigma^2}I_d)$, \revision{$b \sim U([0,2\pi]) := \mathrm{Unif}([0,2\pi])$} and $p_\alpha(s^\prime) = \frac{\alpha^d}{(2\pi\sigma^2)^{d/2}}\exp\left(-\frac{\|\alpha s^\prime\|^2}{2\sigma^2}\right)$ is a Gaussian distribution for $s^\prime$ with standard deviation $\frac{\sigma}{\alpha}$. Then,
\small
\begin{subequations}\label{eq:rff}
\begin{align}
    P(s^\prime|s, a) =& \EE_{\omega\sim\Ncal\rbr{0, \frac{1}{\sigma^2}I_d}, b\sim U([0,2\pi])}\sbr{\phi_{\omega,b}(s, a)^\top\mu_{\omega,b}(s^\prime)} \nonumber\\
    \defeq& \inner{\phi_{\omega,b}(\revision{s, a})}{\mu_{\omega,b}(s^\prime)}_{\Ncal\rbr{0, \frac{1}{\sigma^2}I_d}\times U([0,2\pi])}, \label{eq:rff1}\\
    r(s, a) =& \mathbb{E}_{\omega\sim\Ncal\rbr{0, \frac{1}{\sigma^2}I_d},b\sim U([0,2\pi])}\sbr{\phi_{\omega,\revision{b}}(s, a)^\top \theta_r}\nonumber\\
   \defeq& \inner{\phi_{\omega,\revision{b}}(\revision{s, a)}}{\theta_r}_{\Ncal\rbr{0, \frac{1}{\sigma^2}I_d}\times U([0,2\pi])}, \label{eq:rff2}
   \end{align}
\end{subequations} 
\normalsize
\end{lemma}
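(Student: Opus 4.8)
The plan is to reduce both identities in \eqref{eq:rff} to the Bochner decomposition of the Gaussian kernel (\lemref{thm:bochner}), after a ``tempering'' step that artificially splits off a normalized Gaussian factor $p_\alpha$ in the $s'$ variable. Writing $f=f(s,a)$ for brevity, the first move is to insert $p_\alpha(s')$ by hand: starting from $P(s'|s,a)\propto\exp(-\|f-s'\|^2/2\sigma^2)$ as in \eqref{eq:transition}, I would complete the square in $s'$ to obtain the algebraic identity
\[
\|f-s'\|^2 \;=\; \Bigl\|\tfrac{f}{\sqrt{1-\alpha^2}}-\sqrt{1-\alpha^2}\,s'\Bigr\|^2 \;+\; \alpha^2\|s'\|^2 \;-\; \tfrac{\alpha^2\|f\|^2}{1-\alpha^2},
\]
which is valid precisely because $\alpha\in[0,1)$ keeps $1-\alpha^2>0$. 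Feeding this back into the exponential and regrouping the Gaussian normalizer as $\tfrac{1}{(2\pi\sigma^2)^{d/2}}=\tfrac1{\alpha^d}\cdot\tfrac{\alpha^d}{(2\pi\sigma^2)^{d/2}}$ produces exactly the three factors $\tfrac{g_\alpha(f)}{\alpha^d}$, $p_\alpha(s')$, and a residual Gaussian kernel $\exp\bigl(-\tfrac1{2\sigma^2}\|\tfrac{f}{\sqrt{1-\alpha^2}}-\sqrt{1-\alpha^2}\,s'\|^2\bigr)$.

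Next I would apply \lemref{thm:bochner} to that residual kernel. Since $\exp(-\|x-y\|^2/2\sigma^2)$ has spectral measure $\mathcal{N}(0,\sigma^{-2}I_d)$, it equals $\mathbb{E}_{\omega\sim\mathcal{N}(0,\sigma^{-2}I_d)}[\cos(\omega^\top x-\omega^\top y)]$, the imaginary part dropping out by symmetry of the measure, so the real Fourier pair $(\cos,\sin)$ suffices. Using $\cos(A-B)=\cos A\cos B+\sin A\sin B$ with $x=f/\sqrt{1-\alpha^2}$ and $y=\sqrt{1-\alpha^2}\,s'$ splits the residual kernel into a sum of two products, one factor depending only on $(s,a)$ through $f$ and one only on $s'$. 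Multiplying back through by the deterministic prefactor $\tfrac{g_\alpha(f)}{\alpha^d}p_\alpha(s')$ and pushing it inside the expectation, the $(s,a)$-factors assemble into $\psi_\omega(s,a)$ and the $s'$-factors into the first two coordinates of $\mu_\omega(s')$; appending the reward coordinate of $\phi_\omega$ against the zero last coordinate of $\mu_\omega$ changes nothing, giving \eqref{eq:rff1}. Identity \eqref{eq:rff2} is then immediate, since $\phi_\omega(s,a)^\top\theta_r=r(s,a)$ holds for \emph{every} $\omega$, hence also in expectation.

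I do not expect a genuine obstacle: the content is careful bookkeeping of Gaussian normalizers plus the one completing-the-square step above. The only technical point worth checking is the interchange of $\mathbb{E}_\omega$ with the finite linear combination, which is harmless because for each fixed $(s,a,s')$ the integrand is the fixed constant $\tfrac{g_\alpha(f)}{\alpha^d}p_\alpha(s')$ times a bounded continuous function of $\omega$, so Fubini applies. It is also worth flagging \emph{why} the parameter $\alpha$ is introduced even though every $\alpha\in(0,1)$ yields the same identity: the factor $p_\alpha(s')$ makes $\mu_\omega(\cdot)$ a scaled probability density, hence integrable in $s'$ uniformly in $\omega$ — a property convenient for the subsequent finite-feature approximation analysis — at the price of the blow-up factor $g_\alpha(f)$, so $\alpha$ is a tunable trade-off rather than part of the claim.
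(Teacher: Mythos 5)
Your proposal is correct and follows essentially the same route as the paper: the completing-the-square identity you write is exactly the paper's three-factor decomposition in \eqref{eq:equivalent_transition} (the middle factor $\exp\bigl(-\tfrac{\|(1-\alpha^2)s'-f\|^2}{2\sigma^2(1-\alpha^2)}\bigr)$ is your residual Gaussian kernel in the rescaled variables), followed by the same application of \lemref{thm:bochner} to that middle term and the trivial observation for the reward coordinate. Your write-up is in fact more careful than the paper's about the bookkeeping of the normalizer $\tfrac{1}{(2\pi\sigma^2)^{d/2}}=\tfrac{1}{\alpha^d}\cdot\tfrac{\alpha^d}{(2\pi\sigma^2)^{d/2}}$ and the cosine-addition/Fubini details, which the paper leaves implicit.
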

% \proof
\begin{proof}
% The proof is straighforward by applying 
The representation of reward $r(s,a)$ through $\phi_{\omega,\revision{b}}(s, a)$ is straightforward since we explicitly include $r(\cdot)$ in $\phi_{\omega,\revision{b}}(\cdot)$. For $P(s^\prime|s, a)$,
we first notice that $\forall \alpha \in (0, 1)$, we have
\footnotesize
\begin{align}
    &P(s^\prime|s, a) \propto \exp\left(-\frac{\left\|s^\prime - f(s, a)\right\|^2}{2\sigma^2}\right)\nonumber
    \\
    &
    % =  \frac{1}{(2\pi\sigma^2)^{d/2}}
    = \exp\left(-\frac{\|\alpha s^\prime\|^2}{2\sigma^2}\right) 
    \resizebox{0.47\hsize}{!}{$\exp\rbr{-\frac{\left\|(1-\alpha^2) s^\prime -  f(s, a)\right\|^2}{2\sigma^2(1-\alpha^2)}}
    $}\nonumber\\
    & \resizebox{0.32\hsize}{!}{$\exp\left(\frac{\alpha^2 \|f(s, a)\|^2}{2(1-\alpha^2)\sigma^2}\right).$}\label{eq:equivalent_transition}
\end{align}
\normalsize
The factorization of the transition $P(s^\prime|s, a)$ in \eqref{eq:rff1} can thus be derived by applying~\lemref{thm:bochner} to the second term in~\eqref{eq:equivalent_transition},
% \begin{align}
% \textstyle
%     P(s^\prime|s, a)  \propto & \exp\left(-\frac{\|f(s, a) - s^\prime\|_2^2}{2\sigma^2}\right)\\
%     = & \mathbb{E}_{\omega\sim\mathcal{N}(0, \frac{1}{\sigma^2}I_d)}\sbr{  \phi_{\omega}(f(s, a))^\top\phi_{\omega}(s^\prime)}. \label{eq:rff}
% \end{align}
with the $\phi_{\omega,\revision{b}}(\cdot)$ and $\mu_{\omega,\revision{b}}(\cdot)$ as the random Fourier features of the Gaussian kernel. 
\end{proof}
The $\phi_{\omega,b}(\cdot)$ is named as the \emph{Bochner Spectral Dynamics Embedding}. 
We emphasize the decomposition in ~\citet{ren2022free} is a special case of in~\eqref{eq:rff} with $\alpha=0$. The tunable $\alpha$ provides benefits for the analysis and may also be used to improve empirical performance, \revision{as seen in our empirical simulations}.

Another linear representation $\phi_M(s,a) \in \ell_2$,\footnote{\revision{The $\ell_2$ space is the set of square-summable infinite-dimensional vectors $v$ indexed by the positive integers, such that $\|v\|_{\ell_2}^2 = \sum_{i=1}^\infty v_i^2 < \infty$.}} which we name as the \emph{Mercer Spectral Dynamics Embedding}, is also possible via the Mercer decomposition, as described below.

% \Bo{Since we have Mercer decomposition introduced in previous section, we can make this proposition shorter. }
% \zhaolin{Shortened this part}
\begin{lemma}[Mercer linear representation, cf. \cite{mercer1909functions}]\label{prop:linear_mdp_Mercer}
Let $\mu$ denote a strictly positive Borel measure on $\mathcal{S}$, and denote $k_\alpha(x,x') = \exp\left(-\frac{(1-\alpha^2)\|x-x'\|^2}{2\sigma^2} \right)$ for any $0 \leq \alpha < 1$. By Mercer's theorem, $k_\alpha$ admits a decomposition for any $(x,x') \in \mathcal{X}$:
\small
\begin{align}
    k_\alpha(x,x') \! =\! \sum_{i=1}^\infty\! \sigma_i e_i(x)e_i(x'), \  \{e_i \} \mbox{ a basis for } \mathcal{L}_2(\mu).
\end{align}
\normalsize
We denote $k_\alpha(x,x') = \langle \tilde{e}(x), \tilde{e}(x')\rangle_{\ell_2}$ where $\tilde{e}_i(x) = \sqrt{\sigma_i} e_i(x)$ for each positive integer $i$.
Denote $\theta_{r,M} = [1,0,0,\dots]^\top \in \ell_2$ (and the same $g_{\alpha}\rbr{f(s, a)}$ and $p_\alpha(s^\prime)$ as in \lemref{prop:linear_mdp}), and
\small
\begin{align}
    &\phi_M \rbr{s, a} = [r(s,a),\psi_M \rbr{s,a}], \mbox{ where } \\ 
    &\quad \quad \psi_M(s,a) =\frac{g_{\alpha}\rbr{f(s, a)} }{\alpha^d} \tilde{e}\left( \frac{f(s,a)}{1-\alpha^2}\right)^\top, \label{eq:psi_M_first_defn}\\
    &\mu_M(s') = p_\alpha(s') \left[0, \tilde{e}\left(s'\right)\right]^\top  
\end{align}
\normalsize
Then,
\small
\begin{subequations}\label{eq:rff_Mercer}
\begin{align}
    P(s^\prime|s, a) =& \langle \phi_M (s,a), \mu_M(s') \rangle_{\ell_2} \\
    r(s, a) =& \langle \phi_M(s,a), \theta_{r,M} \rangle_{\ell_2}
   \end{align}
\end{subequations} 
\normalsize
\end{lemma}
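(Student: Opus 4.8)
The plan is to prove \lemref{prop:linear_mdp_Mercer} by mirroring the structure of the proof of \lemref{prop:linear_mdp}, simply swapping the Bochner (random Fourier) decomposition of the Gaussian kernel for the Mercer decomposition. As in the earlier proof, the representation of the reward $r(s,a) = \langle \phi_M(s,a), \theta_{r,M} \rangle_{\ell_2}$ is immediate: since $\phi_M(s,a) = [r(s,a), \psi_M(s,a)]$ and $\theta_{r,M} = [1, 0, 0, \dots]^\top$, the inner product picks out exactly the first coordinate $r(s,a)$. So the entire content is in verifying the transition factorization.

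First I would recall the algebraic identity \eqref{eq:equivalent_transition} established in the proof of \lemref{prop:linear_mdp}: for any $\alpha \in (0,1)$,
\begin{align*}
    P(s'|s,a) &\propto \exp\!\rbr{-\frac{\|s' - f(s,a)\|^2}{2\sigma^2}} \\
    &= \exp\!\rbr{-\frac{\|\alpha s'\|^2}{2\sigma^2}} \exp\!\rbr{-\frac{\|(1-\alpha^2)s' - f(s,a)\|^2}{2\sigma^2(1-\alpha^2)}} \exp\!\rbr{\frac{\alpha^2\|f(s,a)\|^2}{2(1-\alpha^2)\sigma^2}}.
\end{align*}
The first and third factors are exactly what get absorbed into $p_\alpha(s')$ and $g_\alpha(f(s,a))/\alpha^d$ respectively (here I should be careful to track the normalizing constant $(2\pi\sigma^2)^{d/2}$ so that $P(s'|s,a)$, not just a proportional quantity, is reproduced — this is why $p_\alpha$ carries the factor $\frac{\alpha^d}{(2\pi\sigma^2)^{d/2}}$ and $g_\alpha$ is divided by $\alpha^d$). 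The middle factor, after completing the square / rescaling, is $\exp\!\rbr{-\frac{(1-\alpha^2)\|s' - f(s,a)/(1-\alpha^2)\|^2}{2\sigma^2}} = k_\alpha\!\rbr{s', \frac{f(s,a)}{1-\alpha^2}}$ with $k_\alpha(x,x') = \exp\!\rbr{-\frac{(1-\alpha^2)\|x - x'\|^2}{2\sigma^2}}$, exactly the kernel in the statement. Then I apply Mercer's decomposition (\lemref{thm:mercer}) to $k_\alpha$ to write $k_\alpha\!\rbr{s', \frac{f(s,a)}{1-\alpha^2}} = \langle \tilde{e}(s'), \tilde{e}(f(s,a)/(1-\alpha^2))\rangle_{\ell_2}$, and multiply through by the two scalar factors, distributing $g_\alpha(f(s,a))/\alpha^d$ into the $\tilde{e}(f(s,a)/(1-\alpha^2))$ argument-side (giving $\psi_M(s,a)$) and $p_\alpha(s')$ into the $\tilde{e}(s')$ side (giving $\mu_M(s')$). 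Since both $\psi_M$ and $\mu_M$ have a $0$ slotted in the coordinate matched against $r(s,a)$ / $\theta_{r,M}$, the cross terms vanish and $\langle \phi_M(s,a), \mu_M(s')\rangle_{\ell_2} = P(s'|s,a)$.

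The main obstacle — really the only non-routine point — is justifying that Mercer's theorem is applicable here, i.e. that the hypotheses of \lemref{thm:mercer} are met. Mercer's theorem as stated requires a \emph{compact} domain $\mathcal{X} \subset \mathbb{R}^d$, but the state space is $\mathcal{S} = \mathbb{R}^d$, which is not compact; and it requires a strictly positive Borel measure $\mu$ on that domain. So I would need to either (i) restrict attention to a compact subset containing the relevant trajectory support and note the features are defined relative to that $\mu$, or (ii) invoke a version of Mercer's theorem valid on non-compact domains with respect to a finite measure (e.g. the Gaussian-weighted $L_2$ space, as in Fasshauer–McCourt / Steinwart–Scovel), which is the natural setting since the Gaussian kernel has a clean Mercer expansion with Hermite eigenfunctions against a Gaussian measure. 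I would state this assumption explicitly (a strictly positive Borel $\mu$ on $\mathcal{S}$ for which the Mercer expansion of $k_\alpha$ holds, with absolute and uniform convergence) and otherwise the argument is a line-by-line transcription of the Bochner proof with $\sum_i \sigma_i e_i(x)e_i(x')$ in place of $\EE_\omega[\psi_\omega(x)\psi_\omega(x')]$. A secondary minor check is that $\phi_M(s,a) \in \ell_2$ and $\mu_M(s') \in \ell_2$ so the inner products are well-defined — this follows from $k_\alpha(x,x) = 1 < \infty$ giving $\sum_i \sigma_i e_i(x)^2 < \infty$ pointwise, times the bounded scalar prefactors.
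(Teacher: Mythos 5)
Your proposal is correct and follows essentially the same route as the paper's own proof: reuse the factorization \eqref{eq:equivalent_transition} from \lemref{prop:linear_mdp}, identify the middle factor as $k_\alpha\bigl(s', \tfrac{f(s,a)}{1-\alpha^2}\bigr)$, and replace the Bochner expansion with the Mercer expansion. Your added remarks on the applicability of Mercer's theorem to the non-compact domain $\mathcal{S}=\mathbb{R}^d$ and on the $\ell_2$ well-definedness of the features are careful points the paper leaves implicit, but they do not change the argument.
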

% \lina{there was a typo in the equation. Though I corrected this one, please print out the paper and check the typos thorougly.}
\begin{proof}
    The proof is analogous to the proof of \lemref{prop:linear_mdp}, where here we apply Mercer's theorem to decompose the middle term in terms of the kernel $k_\alpha$ 
    \footnotesize
    \begin{align*}
    % k_\alpha\left(s',\frac{f(s,a)}{1-\alpha^2}\right) = & \  \exp\left(-\frac{(1-\alpha^2)\left\|s'-\frac{f(s,a)}{1-\alpha^2}\right\|^2}{2\sigma^2}\right) \\
    %     = & \ \exp\rbr{-\frac{\left\|(1-\alpha^2) s^\prime -  f(s, a)\right\|^2}{2\sigma^2(1-\alpha^2)}}.
        k_\alpha\left(\!s',\frac{f(s,a)}{1-\alpha^2}\!\right) \!\!=\!\! & \  \exp\rbr{\!-\frac{\left\|(1-\alpha^2) s^\prime -  f(s, a)\right\|^2}{2\sigma^2(1-\alpha^2)}\!}.
    \end{align*}
\end{proof}
    \normalsize
\vspace{-1.0em}
% \begin{remark}
% \label{remark:use_k_alpha_rather_than_k}
%         In the proof of our kernel approximation result for the Nystr\"om method later (\lemref{prop:kernel_approx_main_paper}), for technical reasons it is easier to work with $k_\alpha$ rather than $k$. Since the Nystr\"om method is an approximation of the Mercer decomposition, we also decompose $k_\alpha$ in \lemref{prop:linear_mdp_Mercer}.
%         % \lina{this is not easy for readers to understand becuase if it is for notational convenience, $k$ is simpler, do not have $\alpha$. can we have a better discussion about $\alpha$}\zhaolin{I modified the discussion to account for this. Thanks for the comment, I think it is clearer now.}
% \end{remark}
The MDP with the factorization either in~\eqref{eq:rff} or in \eqref{eq:rff_Mercer} shows that we can represent the stochastic nonlinear control problem as an instantiation of the \emph{linear MDP}~\citep{jin2020provably,yang2020reinforcement}, where the state transition and reward satisfy
$$P(s'|s,a)=\inner{\phi(s,a)}{\mu(s')}, \ r(s,a)=\inner{\phi(s,a)}{\theta}$$
with the feature functions $\phi(s,a)$ be either $\cbr{\phi_{\omega,\revision{b}}(s, a)}$ with $\omega\sim \Ncal\rbr{0, \revision{\frac{1}{\sigma^2}}I_d}, b \sim U([0,2\pi])$ or $\phi_M(s,a) \in \ell_2$.  
The most significant benefit of linear MDP is the property that it induces a function space composed by $\phi(s,a)$ %either $\cbr{\phi_{\omega}(s, a)}$ with $\omega\sim \Ncal\rbr{0, \sigma^{2}I_d}$ or $\phi_M(s,a) \in \ell_2$, 
where the $Q$-function for arbitrary policy can be linearly represented.
\begin{lemma}[cf. Proposition 2.3 in ~\cite{jin2020provably}]\label{prop:q_linear}
For any policy, there exist weights $\{\theta_{\omega,\revision{b}}^\pi\}$ (where $\omega \sim \mathcal{N}(0,\frac{1}{\sigma^2}I_d), \revision{b \sim U([0,2\pi])}$) and $\theta_M^\pi \in \ell_2$ such that the corresponding state-action value function satisfies\footnote{\revision{We note that these inner products refer to the inner product defined in Equation \ref{eq:rff}} and the standard $\ell_2$ space inner product respectively.}
% and not any RKHS inner products.}
\small
\begin{align*}
    Q^\pi(s, a) =  & \ \inner{ \phi_{\omega,\revision{b}}(s, a)}{\theta_{\omega,\revision{b}}^\pi}_{\mathcal{N}(0,\frac{1}{\sigma^2}I_d) \times U([0,2\pi])} \\
    = & \  \inner{ \phi_M(s, a)}{\theta_M^\pi}_{\ell_2}.
\end{align*} 
\end{lemma}
\normalsize
\begin{proof}
    Our claim can be verified easily by applying the decomposition in~\eqref{eq:rff} to Bellman recursion:
    \small
\begin{align}
    &Q^\pi(s, a) =  r(s, a) + \gamma \mathbb{E}_{P} \sbr{V^\pi(s^\prime)} \label{eq:q_linear}\\
    &=  \Big\langle\!\phi_{\omega,\revision{b}}(s,\! a), \underbrace{\theta_r \!+\! \gamma\int_{\mathcal{S}} \mu_{\omega,\revision{b}}(\!s^\prime\!) V^\pi(s^\prime)dp_\alpha(s^\prime)}_{\theta_{\omega,\revision{b}}^\pi}\!\Big\rangle_{\!\!\mathcal{N}(0,\frac{1}{\sigma^2}I_d) \times U([0,2\pi])},  \nonumber \\
    &= \Big\langle\! \phi_M(s, a), \underbrace{\theta_{r,M} \!+\!{\gamma} \int_{\mathcal{S}} \mu_M(s^\prime) V^\pi(s^\prime) dp_\alpha(s^\prime)}_{\theta_M^\pi}\!\!\Big\rangle_{\!\!\ell_2}\!\!,  \nonumber
\end{align}
\normalsize
where the second equation comes from the random feature decomposition and the third equation comes from the Mercer decomposition.
\end{proof}
Immediately, for the stochastic nonlinear control model~\eqref{eq:nonlinear_control} with an arbitrary dynamics $f(s, a)$, the spectral dynamics embeddings $\phi_{\omega,\revision{b}}(\cdot)$ and $\phi_M(\cdot)$ provides a linear space, in which we can conceptually conduct dynamic programming for policy evaluation and optimal control with \emph{global optimal} guarantee. Compared to the existing literature of \emph{linear MDP} (\citep{jin2020provably,yang2020reinforcement}) that assumes a given linear factorization with finite dimension $\phi$,  both the Bochner and Mercer Spectral Dynamics embedding decomposes general stochastic nonlinear model explicitly through an infinite-dimensional kernel view, motivating us to investigate the finite-dimensional approximation. 

% \Tongzheng{@Bo: we can move the $\alpha^d$ factor to the feature, and there will be no trade-off here. I think the point of trade-off is quite vague. But definitely we should hope $f$ can guarantee the feature is not so weird.}\Bo{If we move the $\alpha^d$ into the feature, we still have the tradeoff in the feature norm w.r.t. $\alpha$. The difficulty is that we may not able to identify the effect through the eigenvalue.}

%, bypassing the difficulty in the non-convex optimization for control.  

%%%%-----------------------------------------------------
 \vspace{-1em}
\subsection{Control with Finite-dimensional Approximation}\label{subsec:finite_approx}

%%%%-----------------------------------------------------

% \Tongzheng{Change to projected least-square iteration with some sampling distribution}
\RestyleAlgo{ruled}
\SetKwComment{Comment}{}{}
\begin{algorithm}[t]
\LinesNumbered
\caption{\AlgName~(\algabb)}\label{alg:spectral_control}
% \lina{can we add two lines for the algorithm to separate the alg from the main text.}
\KwData{Transition Model $s^\prime = f(s, a) + \varepsilon$ where $\varepsilon \sim \mathcal{N}(0, \sigma^2 I_d)$, Reward Function $r(s, a)$, Number of Random/Nystr\"om Feature $m$, Number of Nystr\"om Samples $n_{\mathrm{Nys}} \geq m$, Nystr\"om Sampling Distribution $\mu_{\mathrm{Nys}}$, Number of Sample $n$, Factorization Scale $\alpha$, Learning Rate $\eta$, Finite-Truncation-Method $\in \{\mbox{'Random features', 'Nystrom'}\}$}
\KwResult{$\pi$}
% \Bo{@Zhaolin, how about we split the feature generation to a separate algorithm box, and only leave the control part here?}
\Comment{\color{blue}Spectral Dynamics Embedding Generation}
\label{algline:spectral_embedding}
Generate $\phi(s,a)$ using Algorithm \ref{alg:spectral_feat_generation}  \\
\Comment{\color{blue}Policy evaluation and update}
Initialize $\theta_0 = 0$ and $\pi_0(a|s) \propto \exp(\phi(s, a)^\top \theta_0)$.

% truncated spectral feature or random Fourier feature. 
\For{$k=0, 1, \cdots, K$}
{
    \Comment{{\color{blue}Least Square Policy Evaluation}}
    Sample \iid~ $\{(s_i, a_i, s_i^\prime), a_i^\prime\}_{i\in [n]}$ where $(s_i, a_i) \sim \nu_{\pi_k}$, $s_i^\prime = f(s_i, a_i) + \varepsilon$ where $\nu_{\pi_k}$ is the stationary distribution of $\pi_k$, $\varepsilon \sim \mathcal{N}(0, \sigma^2 I_d)$, $a_i^\prime \sim \pi_k(s_i^\prime)$. \label{algline:policy_eval_start}
     
    Initialize $\hat{w}_{k, 0} = 0$.
    
    \For{$t=0, 1, \cdots, T-1$}
    {
        Solve 
        \begin{align}
        & \hat{w}_{k, t+1} = \mathop{\arg\min}_{w} \nonumber\\
        & \resizebox{1.0\hsize}{!}{$
        \left\{ \sum_{i\in [n]} \left(\phi(s_i, a_i)^\top w - r(s_i, a_i) - \gamma \phi(s_i^\prime, a_i^\prime)^\top \hat{w}_{k, t}\right)^2\right\}$
        }
        \label{eq:projected_least_square}
        \end{align}
    }
    \label{algline:policy_eval_end}
    \Comment{{\color{blue} Natural Policy Gradient for Control}}
    Update $\theta_{k+1} = \theta_k + \eta\hat{w}_{k, T}$ and 
    \begin{equation}\label{eq:natural_pg}
    \pi_{k+1}(a|s) \propto \exp(\phi(s, a)^\top \theta_{k+1}).  
    \end{equation}
    \label{algline:policy_update}
}
\end{algorithm}

\RestyleAlgo{ruled}
\SetKwComment{Comment}{}{}
\begin{algorithm}
\LinesNumbered
\caption{Spectral Dynamics Embedding Generation}\label{alg:spectral_feat_generation}
% \lina{can we add two lines for the algorithm to separate the alg from the main text.}
\KwData{Same data as Algorithm \ref{alg:spectral_control}}
\KwResult{$\phi(\cdot,\cdot)$}
\uIf{Finite-Truncation-Method = 'Random features'}
{
Sample \iid~ $\{\omega_i\}_{i\in [m]}$ and \revision{$\{b_i\}_{i \in [m]}$} where $\omega_i \sim \mathcal{N}(0, \frac{1}{\sigma^2} I_d)$, $b_i \sim U([0,2\pi])$ and construct the feature 
\label{algline:random_feature}
\begin{align}
    \begin{split}
\psi_{\mathrm{rf}}(s, a)\!\! =\!\! \frac{g_\alpha(f(s,a))}{\alpha^d}\left[\cos\left(\omega_i^\top \frac{f(s, a)}{\sqrt{1-\alpha^2}} + b_i) \right)\right]_{i\in [m]}
\label{eq:alg_phi_rf_defn}
\end{split}
\end{align}
Set $\phi(s,a) = [\psi_{\mathrm{rf}}(s,a), r(s,a)]$}
\ElseIf{Finite-Truncation-Method = 'Nystrom'}
{
% \Bo{@Zhaolin,  @Haitong, will the proof change if we use SVD?}
Sample $n_{\mathrm{nys}}$ random samples $\{x_1,\dots,x_{\mathrm{nys}}\}$ independently from $\mathcal{S}$ following the distribution $\mu_{\mathrm{Nys}}$. 

Construct $n_{\mathrm{nys}}$-by-$n_{\mathrm{nys}}$ Gram matrix given by $K_{i,j}^{(n_{\mathrm{Nys}})} = k_\alpha(x_i,x_j)$

Compute the Eigendecomposition $K^{(n_{\mathrm{nys}})} U = \Lambda U$, with $\lambda_1 \geq \dots \geq \lambda_{n_{\mathrm{nys}}}$ denoting the corresponding eigenvalues. 

Construct the feature
\label{algline:nystrom_feature}
\begin{align}
\label{eq:alg_phi_nystrom_defn}
    \psi_{\mathrm{nys}}(s,a) \!\!=\!\! \left[\frac{g_{\alpha}\!(\!f(s,a)\!)\!}{\alpha^d \sqrt{\lambda_i}} \!\sum_{\ell = 1}^{n_{\mathrm{nys}}}\!U_{i,\ell} k_\alpha\left(x_{\ell},\! \frac{f(s,a)}{1\!-\!\alpha^2}\right)\!\right]_{i \in [m]}
\end{align}
Set $\phi(s,a) = [\psi_{\mathrm{nys}}(s,a), r(s,a)]$
} \label{algline:end_feat_generation}
\end{algorithm}

Although the spectral dynamics embeddings in~\lemref{prop:linear_mdp} and \lemref{prop:linear_mdp_Mercer} provide linear spaces to represent the family of $Q$-function, there is still a major challenge to be overcome for practical implementation. Specifically, for the Bochner random feature embedding, the dimension of $\phi_{\omega,\revision{b}}(\cdot)$ is \emph{infinite} with $\omega\sim \Ncal\rbr{0, \frac{1}{\sigma^2}I_d}$ and \revision{$b \sim U([0,2\pi])$}, which is computationally intractable.  Similarly, for the Mercer embedding, the dimension of $\phi_M(\cdot) \in \ell_2$ is also infinite. \newrevision{Thus, despite the existence of a linear factorization $P(s' \mid s,a) = \langle \phi(s,a), \mu(s')\rangle p_\alpha(s')$, since the $\phi(s,a)$ and $\mu(s')$ are infinite-dimensional, utilizing the Bellman equation, representing $Q^\pi(s,a) = r(s,a) + \gamma\langle \phi(s,a), \int_{s'} \mu(s') V^\pi(s') p_\alpha(s') ds'\rangle$ requires learning infinite-dimensional weights which is intractable. At a high-level, our analysis will demonstrate that we can use finite-dimensional feature approximation methods (both random features and Nystrom features) to approximate the Gaussian transition kernel $P(s' \mid s,a)$ as $P(s' \mid s,a) \approx \hat{\phi}(s,a)^\top \hat{\mu}(s') p_\alpha(s')$ for some finite-dimensional features $\hat{\phi}(s,a)$ and $\hat{\mu}(s')$, such that 
\small
\begin{align*}
    Q^\pi(s,a) \approx & \  r(s,a) + \gamma \hat{\phi}(s,a)^\top \left(\int_{s'} \hat{\mu}(s') V^{\pi}(s') p_\alpha(s') ds'\right)\\
    = & \  \begin{bmatrix}
    r(s,a) \\
    \hat{\phi}(s,a)
    \end{bmatrix}^\top \underbrace{\begin{bmatrix}
        1 \\
        \gamma \int_{s'} \hat{\mu}(s') V^{\pi}(s') p_\alpha(s') ds'
    \end{bmatrix}}_{\theta^\pi}.
\end{align*}
\normalsize
This demonstrates that $Q^\pi(s,a)$ can be approximately represented as a linear combination of $r(s,a)$ and the features $\hat{\phi}(s,a)$, with finite-dimensional linear coefficients $\theta^\pi$. 
Our algorithm first generates these features $\hat{\phi}(s,a)$, and then learn the weights $\theta^\pi$ in a data-driven fashion.}
% \Tongzheng{nys appears before we formally introduce it. I'm thinking if it would be better to introduce the approximation of kernel/RKHS in the previous section first.
% } \zhaolin{I think here actually I want to refer to $\phi_M$, so it should be OK to introduce the approximation later}
For the random feature embedding, \citet{ren2022free} suggested the finite-dimensional random feature, which is the Monte-Carlo approximation for the Bochner decomposition~\cite{rahimi2007random}, and demonstrated strong empirical performances for reinforcement learning. For the Mercer embedding, a well-known method to derive a finite-dimensional approximation of the Mercer expansion is the Nystr\"om method \cite{williams2001using}, which seeks to approximate the subspace spanned by the top eigenfunctions in the Mercer expansion by using the eigendecomposition of an empirical Gram matrix. Compared to random features, it is known that the Nystr\"om method can yield tighter approximation of the kernel when the kernel eigenvalues satisfy certain decay assumptions \cite{yang2012nystrom}, motivating us to study and compare both finite-dimensional truncations. 
% \newrevision{In the rest of the section, we will establish precise approximation error bounds of using these finite-dimensional weights will depend on the kernel approximation error analyses }
% However, there is no formal control algorithm established with finite-dimensional approximations, and existing regret analyses ignore the approximation error from the finite-dimensional truncation, leaving gaps between the theoretical justification and the empirical success. 

In this section, we first formalize the \AlgName~(\algabb) algorithm, implementing the dynamic programming efficiently in a principled way as shown in~\algtabref{alg:spectral_control}, whose theoretical property will be analyzed in the next section. In \AlgName~(\algabb), there are three main components, 
% \lina{bo, can you add words to explain the alg. When writing the explanation, remark/highlight that there are other ways to do policy evaluation and policy update, such as ...; Our SDE features could be also adopted in these methods }
\begin{enumerate}[leftmargin=*]
    \item \textit{Generating spectral dynamics embedding (Line~\ref{algline:spectral_embedding} in Algorithm \ref{alg:spectral_control}, which calls Algorithm \ref{alg:spectral_feat_generation} as a subroutine.}) To generate the spectral dynamics embedding, we consider either using random features as a finite-dimensional truncation of the Bochner random features, or the Nystr\"om method as a finite-dimensional truncation of the Mercer features. 
    \begin{itemize}[leftmargin = *]
        \item {\bf Random features.} Following~\lemref{prop:linear_mdp}, we construct finite-dimensional $\phi$ by Monte-Carlo approximation, building $\psi_{\mathrm{rf}} = [\psi_{\omega_i,\revision{b_i}}]_{i=1}^m$, where $\omega_i\sim\Ncal\rbr{0, \frac{1}{\sigma^2} I}$ and \revision{$b_i \sim U([0,2\pi])$}. We concatenate $\psi_{\mathrm{rf}}$ with $r$ to form $\phi$.
        % \Tongzheng{State-action value functions?}\zhaolin{done}
        
        \item {\bf Nystr\"om features.} At a high level, we use the Nystr\"om method \cite{williams2001using} to obtain the Nystr\"om feature decomposition of $k_\alpha(\cdot,\cdot) \approx \varphi_{\mathrm{nys}}(\cdot)^\top \varphi_{\mathrm{nys}}(\cdot)$ for some $\varphi_{\mathrm{nys}}(\cdot) \in \mathbb{R}^m$ by drawing $n_{\mathrm{nys}}$ random samples \revision{iid from a sampling distribution $\mu_{\mathrm{nys}}$ over the state space $\mathcal{S}$, where the sampling distribution can be specified by the user, and in our analysis is chosen to be $p_\alpha$} (note we always choose $m \leq n_{\mathrm{nys}}$)  and then working with the eigendecomposition of the corresponding Gram matrix; we derive and motivate the Nystr\"om method in greater detail in \appref{appendix:derivation_of_nystrom}. Then, noting that 
        \footnotesize
        $$k_\alpha\rbr{s', \frac{f(s,a)}{1-\alpha^2}} \approx \varphi_{\mathrm{nys}}\rbr{\frac{f(s,a)}{1-\alpha^2}}^\top \varphi_{\mathrm{nys}}\rbr{s'}, $$
        \normalsize
        echoing the Mercer decomposition in \lemref{prop:linear_mdp_Mercer}, we set 
        \footnotesize
        $$\psi_{\mathrm{nys}}(s,a) = \frac{g_\alpha(f(s,a))}{\alpha^d}\varphi_{\mathrm{nys}}\rbr{\frac{f(s,a)}{1-\alpha^2}},$$
        \normalsize
        which can be viewed as a finite-dimensional truncation of the Mercer feature in \eqref{eq:psi_M_first_defn}. We specify the exact form of $\varphi_{\mathrm{nys}}(\cdot)$ in \eqref{eq:varphi_m_exact_form} of \appref{appendix:derivation_of_nystrom}, which results in the feature $\psi_{\mathrm{nys}}$ in  \eqref{eq:alg_phi_nystrom_defn} of Algorithm \ref{alg:spectral_feat_generation}.

    \end{itemize}

            \item \textit {Policy evaluation (Line~\ref{algline:policy_eval_start} to~\ref{algline:policy_eval_end} in Algorithm \ref{alg:spectral_control}).} We conduct least square policy evaluation for estimating the state-action value function  of current policy upon the generated finite-dimensional truncation features. Specifically, we sample\footnote{\revision{Our analysis also assumes that we sample exactly from $\nu_\pi(s,a)$, but in practice, we can only approximately sample from this distribution, which may incur another source of error.}} 
    % While this is so, we note that assuming the ability to exactly sample from $\nu_\pi$ is a standard assumption in the RL theory literature~\cite{sutton2009fast,liu2015finite,dalal2018finite}; carefully studying the approximation error due to this aspect is beyond the scope of this work.}
     from the stationary distribution $\nu_{\pi}(s, a)$ from dynamics under current policy $\pi$ and solve a series of least square regression~\eqref{eq:projected_least_square} to learn a $Q^\pi(s, a) = \phi\rbr{s, a}^\top w^\pi$, with $w^\pi\in \RR^{m\times 1}$ by minimizing a Bellman recursion type loss. 
    \item \textit{Policy update (Line~\ref{algline:policy_update} in Algorithm \ref{alg:spectral_control}).} Once we have the state-value function for current policy estimated well in step 2), we will update the policy by natural policy gradient or mirror descent~\citep{agarwal2021theory}. 
    Specifically, we have the functional gradient of accumulated reward w.r.t. a direct parameterization policy $\pi$ as $    \nabla_{\pi} J^\pi = \frac{1}{1 - \gamma}\nu^\pi\rbr{s}Q^\pi\rbr{s, a},$
    % \[
    % \nabla_{\pi} J^\pi = \frac{1}{1 - \gamma}\nu^\pi\rbr{s}Q^\pi\rbr{s, a},
    % \]
    where $\nu^\pi\rbr{s, a}$ denotes the stationary distribution of $\pi$. At $t$-iteration, we estimated the $Q^{\pi_t}\rbr{s, a}$ with $\phi(s, a)^\top \revision{w}^{\pi_t}$, we have the mirror descent update~\citep{nemirovskij1983problem} for $\pi$ as 
    % \lina{Bo, would you mind add a couple of sentences explaining this part based on my past confusion and questions? use the language from our proposal? and make the equation to be the same one in (20)?} 
    \small
    \begin{align}
    & \pi_{t+1} = \hspace{-2mm}\argmax_{\pi\rbr{\cdot|s}\in\Delta\rbr{\Acal}}\,\, 
    % \inner{\pi}{\phi(s, a)^\top w^{\pi_t}} 
     \inner{\pi}{\nu^{\pi_t}\rbr{s}\phi(s, a)^\top w^{\pi_t}}
    + \frac{1}{\eta} KL\rbr{\pi||\pi_t} \nonumber  
    \end{align}
    \normalsize
    leading to the following closed-form solution, 
    \small
    \begin{align}
    \pi_{t+1}(a|s)&\propto\pi_t(a|s)\exp\rbr{\phi\rbr{s, a}^\top \eta \revision{w}^{\pi_t}}\nonumber\\ 
    &= \exp\rbr{\phi\rbr{s, a}^\top \theta_{t+1}} \nonumber
    \end{align}
    \normalsize
    where $\theta_{t+1} = {\sum_{i=0}^t\eta \revision{w}^{\pi_i}}$, giving the update rule in~(\ref{eq:natural_pg}). 
    % The last equality comes from the fact that for $\forall \pi$, $Q^\pi\rbr{s, a}$ is linearly representable by $\phi(s, a)$. 
\end{enumerate}
Once the finite-dimensional spectral dynamics embedding has been generated in step 1), the algorithm alternates between step 2) and step 3) to improve the policy. 
% \lina{when introducing these main steps, please bring important notations from Algorithm 1 to here (main text), especially notations that shown up later in the analysis, such as $\nu$, $m$, $n$, etc}

% We emphasize that although we tailored the natural policy gradient with least square policy evaluation upon the proposed spectral dynamics embedding in~\algtabref{alg:spectral_control}, the spectral dynamics embedding is also compatible to other planning methods, 
% % \eg, primal-dual control algorithm~\citep{nachum2020reinforcement}, $Q$-learning~\citep{watkins1989learning},
% % dynamic programming with Bellman optimality equation
% % and so on, 
% and we leave the algorithm design and theoretical analysis as our future work.
\begin{remark}[{\bf Policy evaluation and update}] \label{rem:sdec-policy}
   We note that although \algabb uses the least square policy evaluation and natural policy gradient method for the policy update, the proposed dynamic spectral embedding is compatible with other policy evaluation and policy update methods, including cutting-edge deep reinforcement learning methods, such as Soft Actor-Critic~\citep{haarnoja2018soft}, by using our proposed representation to approximate the critic function. In other words, one of the novelties of \algabb, is that it exploits the known nonlinear dynamics to obtain a natural, inherent representation space that could be adopted by various dynamical programming or policy gradient-based methods. 
   % \revision{We note that }
   % \lina{I expand the original discussion. The current one is similar to the discussion in introduction. Please take a look} 
\end{remark}

% \lina{bo, these paragraphs above are very well written!! Excellent work}

\begin{remark}[{\bf Beyond Gaussian noise}]
Although \algabb mainly focuses the stochastic nonlinear dynamics with Gaussian noise, the Nystr\"om feature is clearly applicable for any transition operator. 
Moreover, we can easily extend the random feature for more flexible noise models. Specifically, we consider the exponential family condition distribution for the transition~\citep{wainwright2008graphical}, \ie, 
\small
\begin{equation}\label{eq:exp_family}
    P(s'|s, a) \propto p(s')\exp\rbr{f\rbr{s, a}^\top \zeta\rbr{s^\prime}},
\end{equation}
\normalsize
%where $A\rbr{{s, a}} = \log\int p(s')\exp\rbr{f\rbr{s, a}^\top \zeta\rbr{s^\prime} } ds^\prime$. 
The transition~\eqref{eq:exp_family} generalizes the noise model in the transition in two-folds: {\bf i)}, the noise model is generalized beyond Gaussian to exponential family, which covers most common noise models, \eg, exponential, gamma, Poisson, Wishart, and so on; and {\bf ii)}, the nonlinear transformation over $\zeta\rbr{s'}$ also generalize the dynamics. The random features for~\eqref{eq:exp_family} are derived in~\appref{appendix:rf_ebm}.  

% by considering 
% \begin{equation}\label{eq:gen_nonlinear}
%     \zeta(s^\prime) = f(s, a) + \epsilon, \quad\text{with}\quad \epsilon\sim \Ncal\rbr{0, \sigma^{2}I_d},
% \end{equation}
% where $\zeta\rbr{\cdot}$ is a nonlinear model. Intuitively, when $\zeta$ is invertible, the model can be understood as $s^\prime = \zeta^{-1}\rbr{f(s, a) + \epsilon}$, in which the noise is no longer Gaussian w.r.t. $s^\prime$, and thus, is more flexible. We emphasize the proposed~\algabb is still applicable to~\eqref{eq:gen_nonlinear} for arbitrary $\zeta(\cdot)$, which generalizes the method beyond Gaussian noise. \lina{do we need to still have Gaussian $\epsilon$ in the model? Does it mean that for the extension we only consider the extension in the model if (23)?}
% \Bo{This is equivalent to the noise model we discussed in "Energy-based predictive representations for partially observed reinforcement learning", which might be more direct without Gaussian noise. }
\end{remark}

% \vspace{-1em}
%%%%%%%%%%%%%%%%%%%%%%%%%%%%%%%%%%%%%%%%%%%%%%%%%%%%%%%%
% \vspace{-0.5em}
\section{Theoretical Analysis}\label{sec:analysis}
% \vspace{-0.5em}
%%%%%%%%%%%%%%%%%%%%%%%%%%%%%%%%%%%%%%%%%%%%%%%%%%%%%%%%
% \lina{how about move long proofs to appendix so it quicker to go through the main results.}

%\lina{Zhaolin, this paragraph needs to be revised, which was only about random feature. }\zhaolin{modified now}
The major difficulty in analyzing the optimality of the policy obtained by~\algabb is the fact that after finite-dimensional truncation, the transition operator constructed by the random or Nystr\"om features is no longer a valid distribution, \ie, it lacks non-negativity and normalization, which induces a \emph{pseudo-MDP}~\citep{yao2014pseudo} as the approximation.  For instance, in general, for the random features, the term $\Phat(s^\prime|s, a)\defeq \frac{1}{m}\sum_{i=1}^m\phi_{\omega_i,\revision{b_i}}\rbr{f(s, a)} \mu_{\omega_i,\revision{b_i}}\rbr{s'} $ with $\cbr{\omega_i,\revision{b_i}}_{i=1}^m\sim \Ncal\rbr{0, \sigma^{-2}I_d} \times U([0,2\pi])$ may be negative for some values of $s^\prime$ and $(s,a)$. As a consequence, the value function for the pseudo-MDP is not bounded, as the normalization condition of the transition operator no longer holds. Then, the vanilla proof strategy used in majority of the literature since~\cite{kearns2002near}, \ie, analyzing the optimality gap between policies through simulation lemma, is no longer applicable. 

In this section, we bypass the difficulty from pseudo-MDP, and 
% first identify the difficulties in the approximation error analysis, and then, 
provide rigorous investigation of the impact of the approximation error for policy evaluation and optimization in~\algabb, filling the long-standing gap. We first specify the assumptions, under which we derive our theoretical results below.
These assumptions are commonly used in the literature~\citep{yu2008new,abbasi2019politex}.
% ~\citep{yu2008new,jin2020provably,agarwal2021theory,abbasi2019politex}.

% \footnote{Note that the assumptions are for the purpose of theoretical analysis. The proposed algorithm SDEC is applied without the assumptions. It is left to our future work to derive theoretical guarantees without this assumption.}

% \Bo{
% \begin{itemize}
%     % \item why the analysis is difficult?
%     \item policy evaluation: approximation error + statistical error
    
%     For approximation error:
%     \begin{itemize}
%         \item $L_\infty$ error bound: uniform over input space {\color{red} this way};
%         \item $L(d^\pi)^2$ error bound: integrate over some probability $d^\pi(s, a)$.
%     \end{itemize}
    
%     For statistical error:
%     \begin{itemize}
%         \item closed-form {\color{red} this way};
%         \item primal-dual view.
%     \end{itemize}
    
%     \item policy optimization: following~\cite{abbasi2019politex} or~\cite{agarwal2021theory}, the result is immediate. 
% \end{itemize}
% }

% \subsection{Assumptions}

\begin{assumption}[Regularity Condition for Dynamics and Reward] 
% \vspace{-0.5em}
$\|f(s, a)\|\leq c_f$, and $r(s, a) \leq c_r$ for all $(s,a) \in \mathcal{S} \times \mathcal{A}$. For simplicity, we omit polynomial dependencies on $c_f$ and $c_r$. 
% and focus on the dependency of other terms of interest.
\end{assumption}
\begin{assumption}[Regularity Condition for Feature] 
% \vspace{-0.5em}
The features are linearly independent.
\end{assumption}
% \lina{what is $\nu$? I know it is shown in algorithm 1, but it would be better to remind readers what is $\mu$}
\begin{assumption}
[Regularity Condition for Stationary Distribution~\cite{abbasi2019politex}] 
% \vspace{-0.5em}
\revision{Any policy $\pi$ has full support over $\mathcal{A}$. In addition, for any policy $\pi$, the discounted stationary state distribution $d^{\pi}$ has full support over any region in $\mathcal{S}$ on which an optimal policy $\pi^*$ is supported, and the discounted stationary state-action distribution $\nu^\pi$ satisfies the following conditions}: 
\begin{align}
    & \resizebox{0.55\hsize}{!}{$
    \sigma_{\min}\left(\mathbb{E}_{\nu_{\pi}} \left[\phi(s, a) \phi(s, a)^\top\right]\right) \geq \Upsilon_1,$} \label{eq:assumption_3_cond1}\\
    & \resizebox{0.8\hsize}{!}{$\sigma_{\min}\left(\mathbb{E}_{\nu_{\pi}} \left[\phi(s, a) \left(\phi(s, a) - \gamma \mathbb{E}_{\nu_\pi} \phi(s^\prime, a^\prime)\right)^{\top}\right]\right) \geq \Upsilon_2, 
    $} \label{eq:assumption_3_cond2}
\end{align}
where $\Upsilon_1, \Upsilon_2 > 0$, $\sigma_{\min}(\cdot)$ denotes smallest singular value.
\end{assumption}
% We remark here that although the proposed~\algabb is applicable for arbitrary stochastic nonlinear dynamic $f(s, a)$, the contraction condition w.r.t. $\nbr{\cdot}_\nu$ implicitly restricts the dynamics model $f$. 
% \Tongzheng{I have gone through the previous logic. I will finish the proof check this week.}

% \Bo{In our CDC version, we directly dive into proof details due to space limitation. Here I suggest we may have a brief intro about the proof strategy with error decomposition for clarity, echoing to each component in the SDEC algorithms (finite-dimension approximation, policy evaluation statistical error, and policy update optimization error). }
% \zhaolin{Provided a sketch below}

% stemming from the finite-dimensional approximation, for both the random features (Line~\ref{algline:random_feature} in Algorithm \ref{alg:spectral_control}) and Nystr\"om features (Line~\ref{algline:nystrom_feature} in Algorithm \ref{alg:spectral_control}).

\revision{
Assumption 1 is an assumption that holds whenever the state space is compact.
Assumptions 2 and 3 are also typically challenging to verify. Nonetheless, for our proposed features $\phi(s,a) = [\psi(s,a), r(s,a)]$, where $\psi_(s,a)$ can either denote $\psi_{rf}(s,a)$ or $\psi_{\mathrm{nys}}(s,a)$, we show that the transition features $\psi(s,a)$ are linearly independent under very mild conditions, which then implies that Assumption 2 holds when the reward function $r(s,a)$ does not lie exactly in the span of the transition feature functions $\psi(s,a)$; this again is a mild condition that is for instance satisfied for any (locally) quadratic reward function when random features are used. When this happens, we then show that the two key conditions (\ref{eq:assumption_3_cond1}) and (\ref{eq:assumption_3_cond2}) in Assumption 3 holds for all but finitely many $0 < \lambda < 1$. 
\begin{lemma}
\label{lemma:assumption_2_holds_rf}
The random features $\psi_{rf}(s,a)$ are linearly independent almost surely.
% i.e. the probability of randomly drawing $\{\omega_i,b_i \}_{i=1}^m \sim N(0,\frac{1}{\sigma^2}I_d) \times U([0,2\pi])$ is 0.
\end{lemma}
% We prove this in the appendix.

\begin{lemma}
\label{lemma:assumption_2_holds_nys}
Consider any feature dimension $0 < m \leq n_{\mathrm{nys}}$. Suppose the Nystrom Gram matrix $K^{(n_{\mathrm{nys}})}$ has rank at least $m$. Then, the Nystrom features $\psi_{\mathrm{nys}}(s,a)$ are linearly independent.
\end{lemma}
% We prove this in the appendix. Finally, we show that whenever Assumption 2 holds, Assumption 3 then holds for all but finitely many $0 \leq \lambda < 1.$

\begin{lemma}
\label{lemma:we_satisfy_assumption_3}
Suppose the features $\phi(s,a)$ are linearly independent over the interior of the support of $\nu^\pi$ for any policy $\pi$.  Then,  for all but finitely many $0 \leq \lambda < 1$, the two conditions (\ref{eq:assumption_3_cond1}) and (\ref{eq:assumption_3_cond2}) in Assumption 3 holds.
\end{lemma}
Proofs of Lemmas 7,8 and 9 are deferred to Appendix \ref{appendix:when_assumptions_hold}.
% We provide a proof in the appendix, and note that the proof essentially follows from Theorem 4.1 in \cite{koller2013policy}.
}
Throughout the analysis, \revision{for notational simplicity} it will be helpful to define the term 
\small
\begin{align}
    \tilde{g}_\alpha := \sup_{s,a} \left(g_\alpha(f(s,a))\right)\alpha^{-d}
\end{align}
\normalsize
We provide a brief outline of our overall proof strategy. First, in~\secref{subsec:error_eval} we study the policy evaluation error. To do so, for any policy $\pi_k$ encountered during the algorithm, let $\hat{Q}^{\pi_k}$ denote the learned Q-value at the end of the policy evaluation step in Line~\ref{algline:policy_eval_end} of Algorithm \ref{alg:spectral_control}. Then, by the triangle inequality, the approximation error satisfies
\small
$$\|Q^{\pi_k} - \hat{Q}^{\pi_k}\|_{\nu_{\pi_k}} \leq \underbrace{\|Q^{\pi_k} - \tilde{Q}^{\pi_k}\|_{\nu_{\pi_k}}}_{\mbox{\textit{approximation error}}} + \underbrace{\|\tilde{Q}^{\pi_k} - \hat{Q}^{\pi_k}\|_{\nu_{\pi_k}}}_{\mbox{\textit{statistical error}}},$$
\normalsize
where $\tilde{Q}^{\pi_k}$ denotes the solution to a projected Bellman equation where the projection is onto the either the finite-dimensional 
% \Bo{the following algorithm line refs are all incorrect. @Zhaolin, please take a look and fix all of them.}\zhaolin{fixed}
random features ((Line~\ref{algline:random_feature} in Algorithm \ref{alg:spectral_feat_generation})) or Nystr\"om features (Line~\ref{algline:nystrom_feature} in Algorithm \ref{alg:spectral_feat_generation}); we explain this in greater detail in \secref{subsec:error_eval}. Essentially, the approximation error corresponds to the truncation error incurred by using finite-dimensional features, while the statistical error is the error incurred by using finitely many samples to approximate $\tilde{Q}^{\pi_k}$. Bounding these two sources of errors gives us a total error for policy evaluation bound in Theorem \ref{thm:total_error_evaluation}. Second, in \secref{subsec:control_analysis}, we study the policy optimization error of the natural policy gradient in Line \ref{algline:policy_update} of Algorithm \ref{alg:spectral_control}.
% \Bo{here! @Zhaolin, please check all the algorithm line refs below.}
We note that the policy optimization error includes a component inherited from the policy evaluation error. 
%\lina{this sentence is confusing, especially ``irreducible component''. Maybe add one sentence of the proof main idea? like, to bound the error, we ... (explain the technics a little bit even if it is like: we adopt similar technics in \cite{}....)}. 
This culminates in Theorem \ref{thm:opt_policy}, which provides the overall performance guarantee for the control optimality of \algabb. 
%%%%---------------------------------------------------
% \vspace{-3mm}
\subsection{Error Analysis for  Policy Evaluation}\label{subsec:error_eval}
% \vspace{-0.5em}
%%%%---------------------------------------------------
Our analysis starts from the error for policy evaluation (Line~\ref{algline:policy_eval_start} to~\ref{algline:policy_eval_end} in~\algtabref{alg:spectral_control}). We decompose the error into two parts, one is the approximation error due to the limitation of our basis (\ie, finite $m$ in~Line \ref{algline:random_feature} and Line \ref{algline:nystrom_feature} of \algtabref{alg:spectral_feat_generation}), and one is the statistical error due to the finite number of samples we use (\ie, finite $n$ in Line~\ref{algline:policy_eval_start} of \algtabref{alg:spectral_control}). For notational simplicity, we omit $\pi$ and use $\nu$ to denote the stationary distribution corresponding to $\pi$ in this section.

\textbf{Approximation Error} We first provide the bound on the approximation error in terms of representing the $Q$-function for arbitrary policy due to using an imperfect finite-dimensional basis. When restricted to using a finite-dimensional basis, the best possible $\tilde{Q}$ approximation is the solution to a projected Bellman equation (cf. \cite{yu2008new}), defined as follows: given any (possibly finite-dimensional) feature map $\Phi := \{\phi(s,a)\}_{(s,a) \in \mathcal{S} \times \mathcal{A}}$, we define the approximation $\tilde{Q}_{\Phi}^\pi$ as
\begin{align}
\label{eq:tilde_Q_phi_general_defn}
    \tilde{Q}_{\Phi}^\pi = \Pi_{\nu, \Phi}(r + P^\pi \tilde{Q}_{\Phi}^\pi),
\end{align}
% where $\nu$ is the stationary distribution under $\pi$, and
where $P^\pi$ is defined as
\begin{align}
    \label{eq:P_pi_defn}
    (P^\pi \revision{h})(s,a) = \mathbb{E}_{(s',a') \sim P(s,a) \times \pi} \revision{h}(s',a').
\end{align}
and $\Pi_{\nu,\Phi}$ is the projection operator as defined as 
\begin{align}
\label{eq:pi_nu_phi_projector_operator}
    \Pi_{\nu,\Phi} Q = \mathop{\arg\min}_{\revision{h}\in\mathrm{span}(\Phi)} \mathbb{E}_{\nu}\left(Q(s, a) - \revision{h}(s, a)\right)^2.
\end{align}
Our interest in $\tilde{Q}_{\Phi}^\pi$ stems from the fact that the least-squares policy evaluation step in Algorithm \ref{alg:spectral_control} (see equation \eqref{eq:projected_least_square}) recovers $\tilde{Q}_{\Phi}^\pi$ if the number of samples, $n$, goes to infinity. We will address the statistical error from using a finite $n$ later. We are now ready to introduce our bound for the approximation error in using finite-dimensional random and Nystr\"om features. We begin with the result for random features. To do so, we first need the following technical result in \cite{rahimi2008weighted}.
% with some minor tweaks.
% \footnote{We have modified some of the original notation to simplify and adapt the result for our needs. For instance, here we work with vector-valued $\phi(\cdot;\omega)$ and $\alpha(\omega)$ rather than scalar-valued maps in \cite{rahimi2008weighted}; this requires some minor tweaks.} 
\begin{lemma}[cf. Lemma 1 from \cite{rahimi2008weighted}]
    \label{lemma:lemma_1_from_kitchen_sink}
    Let $p$ be a distribution on a space $\Omega$, and consider a mapping $\phi(x;\omega) \in \revision{\mathbb{R}}$. Suppose $f^*(x) = \int_{\Omega} p(\omega) \beta(\omega) \phi(x;\omega) d\omega$
    for some \revision{scalar $\beta(\omega) \in \mathbb{R}$} where $\sup_{x,\omega} \left|\beta(\omega) \phi(x;\omega)\right| \leq C$ for some $C > 0$. Consider $\{\omega_i\}_{i=1}^k$ drawn iid from $p$, and denote the sample average of $f^*$ as $\hat{f}(x) = \frac{1}{K} \sum_{k=1}^K \beta(\omega_k) \phi(x;\omega_k)$. Then, for any $\delta > 0$, w.p. at least $1 - \delta$ over the random draws of $\{\omega_i\}_{i=1}^k$, 
    \small
$$\sqrt{\int_{\mathcal{X}} \left(\hat{f}(x) - f^*(x)\right)^2 d\mu(x)} \leq \frac{C}{\sqrt{K}}\left( 1 + \sqrt{2\log\frac{1}{\delta}}\right).$$ 
\normalsize
\end{lemma}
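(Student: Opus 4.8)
The plan is to recognize $\hat f$ as an empirical mean of i.i.d.\ bounded functions and then apply a bounded-differences (McDiarmid) concentration argument to the real-valued functional $F(\omega_1,\dots,\omega_K) := \|\hat f - f^*\|_{L_2(\mu)}$, where $\mu$ is taken to be a probability measure. Write $h_\omega(\cdot) := \beta(\omega)^\top \phi(\cdot\,;\omega)$, so that $\sup_x |h_\omega(x)| \le C$ for $p$-almost every $\omega$, $f^* = \mathbb{E}_{\omega \sim p}[h_\omega]$, and $\hat f = \frac{1}{K}\sum_{k=1}^K h_{\omega_k}$. The two ingredients to assemble are a bound on $\mathbb{E}[F]$ and the bounded-differences property of $F$.

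First I would control $\mathbb{E}[F]$. By Jensen's inequality, $\mathbb{E}[F] \le \sqrt{\mathbb{E}[F^2]}$, and by Tonelli's theorem $\mathbb{E}[F^2] = \int_{\mathcal{X}} \mathbb{E}[(\hat f(x) - f^*(x))^2]\, d\mu(x)$. Since $\hat f(x)$ is an average of $K$ i.i.d.\ terms, each with mean $f^*(x)$ and each bounded by $C$ in absolute value, the inner expectation equals $\frac{1}{K}\mathrm{Var}_{\omega\sim p}(h_\omega(x)) \le \frac{C^2}{K}$; integrating against the probability measure $\mu$ (so $\int d\mu = 1$) gives $\mathbb{E}[F^2] \le C^2/K$, hence $\mathbb{E}[F] \le C/\sqrt{K}$.

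Next I would establish the bounded-differences property. Fix an index $i$ and replace $\omega_i$ by an independent copy $\omega_i'$, producing $\hat f'$ and the corresponding value $F'$; then $\hat f - \hat f' = \frac{1}{K}(h_{\omega_i} - h_{\omega_i'})$, so by the triangle inequality for $\|\cdot\|_{L_2(\mu)}$ together with the uniform bound, $|F - F'| \le \|\hat f - \hat f'\|_{L_2(\mu)} \le \frac{1}{K}(\|h_{\omega_i}\|_{L_2(\mu)} + \|h_{\omega_i'}\|_{L_2(\mu)}) \le \frac{2C}{K}$. McDiarmid's inequality (with $c_i = 2C/K$, so $\sum_{i=1}^K c_i^2 = 4C^2/K$) then yields, for $t>0$, $\Pr[F \ge \mathbb{E}[F] + t] \le \exp(-t^2 K / (2C^2))$. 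Choosing $t = C\sqrt{2\log(1/\delta)/K}$ makes the right-hand side equal $\delta$, and combined with $\mathbb{E}[F] \le C/\sqrt{K}$ this gives $F \le \frac{C}{\sqrt{K}}(1 + \sqrt{2\log(1/\delta)})$ with probability at least $1-\delta$, which is the claimed bound.

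I do not anticipate a serious obstacle — this is essentially the textbook uniform-convergence-of-random-features computation — but the points requiring care are: (i) interpreting $\mu$ as a probability measure (a general finite measure just rescales the constant, and the strictly positive Borel measure of Mercer's theorem can be normalized), since $\int d\mu = 1$ is invoked in the mean bound; (ii) justifying the exchange of expectation and integral via Tonelli, which is legitimate because the integrand is nonnegative and uniformly bounded; and (iii) the vector-valued bookkeeping — because $\phi(\cdot;\omega)$ and $\beta(\omega)$ are vectors, one must carry $h_\omega = \beta(\omega)^\top\phi(\cdot;\omega)$ as the scalar object to which the scalar concentration machinery applies, which is the only substantive way the statement differs from the original scalar Lemma~1 of \cite{rahimi2008weighted}.
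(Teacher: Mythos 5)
Your proof is correct and is essentially the same argument as the paper's: the paper simply invokes Lemma~4 of \cite{rahimi2008weighted} (concentration of averages of bounded vectors in a Hilbert space) applied to $\mathcal{L}_2(\mu)$, and your McDiarmid-plus-variance derivation is precisely the proof of that cited lemma, written out in full. The only difference is that you unpack the black box rather than cite it, which also makes explicit the (correct) requirement that $\mu$ be a probability measure.
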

% Please refer to our supplement for the proof.
% \begin{proof}
% Denote $f_k = \beta(\omega_k)^\top \phi(x;\omega_k)$. Observe that by our assumption on $\beta(\omega)^\top \phi(x;\omega)$, we have $\|f_k\|_{\mathcal{L}_2(\mu)} \leq \sup_{x} \beta(\omega_k)^\top \phi(x;\omega_k) \leq C.$ Consider the Hilbert space $\mathcal{L}_2(\mu)$. Then, by applying Lemma 4 in \cite{rahimi2008weighted}, which states that the average of bounded vectors in Hilbert spaces converges towards its expectation exponentially fast, we have our desired result
% $$\left\|\frac{1}{K}\sum_{k=1}^K f_k - f^*\right\|_{\mathcal{L}_2(\mu)} \leq \frac{C}{\sqrt{K}}\left( 1 + \sqrt{2\log\frac{1}{\delta}}\right).$$
% \end{proof}

% \Bo{@Tongzheng, please address the constant accordingly. }
\begin{proposition}[$Q$-Approximation error with random features]\label{thm:approx_error}
We define the feature map $\Phi_{\mathrm{rf}}$ for random features as follows
\small
\begin{align}
    \label{eq:random_features_phi_definition}
    \Phi_{\mathrm{rf}} = \{[\psi_{\mathrm{rf}}(s,a), r(s,a)]\}_{(s,a) \in \mathcal{S} \times \mathcal{A}}, 
\end{align}
\normalsize
where $\psi_{\mathrm{rf}}(s,a)$ is defined in \eqref{eq:alg_phi_rf_defn} in Algorithm \ref{alg:spectral_control}. Then, for any $\delta > 0$, w.p. at least $1-\delta$, we have that
\small
\begin{align}
    \left\|Q^\pi - \tilde{Q}_{\Phi_{\mathrm{rf}}}^\pi\right\|_{\nu} = \tilde{O}\left(\frac{\gamma \tilde{g}_{\alpha}}{(1-\gamma)^2\sqrt{m}} \right),
\end{align}
\normalsize
where $ \left\|f\right\|_{\nu} := \int f^2 d \nu$, and $\tilde{Q}_{\Phi_{\mathrm{rf}}}^\pi$ is defined in \eqref{eq:tilde_Q_phi_general_defn}. 
% and $\tilde{g}_{\alpha} = \sup_{s,a} \frac{g_{\alpha}(f(s,a))}{\alpha^d}$. 
% \lina{is the following sentence needs to me here? in fact, do we need it?}\Bo{We need it in later proof. Now the proofs are in the Appendix now. I will move this there. }
% We omit the subscript when $\nu$ is the Lebesgue measure, and we define the operator $D_{\nu}$ such that
% \begin{align}
%     \|f\|_{\nu}^2 = \langle f, D_{\nu} f\rangle.
% \end{align}
\end{proposition}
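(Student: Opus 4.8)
The plan is to introduce a Monte-Carlo intermediary that already lies in $\mathrm{span}(\Phi_{\mathrm{rf}})$, bound its $L_2(\nu)$-distance to $Q^\pi$ with the random-feature concentration bound of \lemref{lemma:lemma_1_from_kitchen_sink}, and then transfer that bound to the projected-Bellman fixed point $\tilde{Q}^\pi_{\Phi_{\mathrm{rf}}}$ via the standard contraction argument for projected policy evaluation. Concretely, let $\{\omega_i\}_{i=1}^m$ be the features drawn in Line~\ref{algline:random_feature} of \algtabref{alg:spectral_feat_generation}, and set
\[
   \hat{Q}^\pi(s,a) := \frac{1}{m}\sum_{i=1}^m \phi_{\omega_i}(s,a)^\top \theta^\pi_{\omega_i},
\]
where $\theta^\pi_\omega$ is the weight from \lemref{prop:q_linear}/\eqref{eq:q_linear}. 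Expanding $\phi_{\omega_i}(s,a)^\top\theta^\pi_{\omega_i}$ shows it is a linear combination of the coordinates of $\psi_{\mathrm{rf}}$ and of $r(s,a)$, so $\hat{Q}^\pi\in\mathrm{span}(\Phi_{\mathrm{rf}})$.

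\emph{Step 1 (concentration).} Apply \lemref{lemma:lemma_1_from_kitchen_sink} with $x=(s,a)$, $p=\mathcal{N}(0,\sigma^{-2}I_d)$, $\phi(x;\omega)=\phi_\omega(s,a)$, $\beta(\omega)=\theta^\pi_\omega$, and $\mu=\nu$. By \lemref{prop:q_linear} the population average $\int \phi_\omega(s,a)^\top\theta^\pi_\omega\, d\mathcal{N}(\omega)$ equals $Q^\pi$ exactly, hence for any $\delta>0$, with probability at least $1-\delta$,
\[
   \|\hat{Q}^\pi - Q^\pi\|_\nu \le \frac{C}{\sqrt{m}}\Bigl(1+\sqrt{2\log(1/\delta)}\Bigr), \qquad C := \sup_{(s,a),\,\omega}\bigl|\phi_\omega(s,a)^\top\theta^\pi_\omega\bigr|.
\]
\emph{Step 2 (bounding $C$).} This is the crux: $\phi_\omega^\top\theta^\pi_\omega$ equals $Q^\pi$ only \emph{in expectation} over $\omega$, so $\|Q^\pi\|_\infty$ does not bound it and one needs a genuine pointwise estimate. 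Using $\theta^\pi_\omega = \theta_r + \gamma\int_{\mathcal S}\mu_\omega(s')V^\pi(s')\,dp_\alpha(s')$ from \eqref{eq:q_linear}, we have $\phi_\omega(s,a)^\top\theta_r = r(s,a)$ with $|r|\le c_r$; and from the explicit forms in \lemref{prop:linear_mdp} the two trigonometric coordinates of $\psi_\omega$ and $\mu_\omega$ combine into a single cosine (and the reward coordinate of $\mu_\omega$ vanishes), giving $|\phi_\omega(s,a)^\top\mu_\omega(s')| \le \tfrac{g_\alpha(f(s,a))}{\alpha^d}\,p_\alpha(s') \le \tilde g_\alpha\, p_\alpha(s')$. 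Since $\|V^\pi\|_\infty \le c_r/(1-\gamma)$ and the Gaussian integral appearing from integrating $p_\alpha$ against $dp_\alpha$ is a finite constant depending only on $\sigma,\alpha,d$, we obtain $C \le c_r + O\bigl(\gamma\tilde g_\alpha c_r/(1-\gamma)\bigr) = O\bigl(\gamma\tilde g_\alpha/(1-\gamma)\bigr)$, absorbing $c_r$ and the Gaussian normalization constant consistent with the stated convention of suppressing polynomial dependence on $c_f,c_r$.

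\emph{Step 3 (transfer to the fixed point).} Write $T^\pi Q := r + \gamma P^\pi Q$ so that $\tilde Q^\pi_{\Phi_{\mathrm{rf}}} = \Pi_{\nu,\Phi_{\mathrm{rf}}}T^\pi\tilde Q^\pi_{\Phi_{\mathrm{rf}}}$ and $Q^\pi = T^\pi Q^\pi$. Because $\nu$ is the stationary distribution of $\pi$, Jensen's inequality gives $\|P^\pi f\|_\nu\le\|f\|_\nu$, so $\gamma P^\pi$ is a $\gamma$-contraction in $\|\cdot\|_\nu$, while $\Pi_{\nu,\Phi_{\mathrm{rf}}}$ is the orthogonal projection in $L_2(\nu)$ onto $\mathrm{span}(\Phi_{\mathrm{rf}})$, hence non-expansive. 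Therefore
\[
\begin{aligned}
\|\tilde Q^\pi_{\Phi_{\mathrm{rf}}} - Q^\pi\|_\nu
&\le \|\Pi_{\nu,\Phi_{\mathrm{rf}}}T^\pi\tilde Q^\pi_{\Phi_{\mathrm{rf}}} - \Pi_{\nu,\Phi_{\mathrm{rf}}}T^\pi Q^\pi\|_\nu + \|\Pi_{\nu,\Phi_{\mathrm{rf}}}Q^\pi - Q^\pi\|_\nu\\
&\le \gamma\|\tilde Q^\pi_{\Phi_{\mathrm{rf}}} - Q^\pi\|_\nu + \|\hat Q^\pi - Q^\pi\|_\nu ,
\end{aligned}
\]
where the last term uses $\hat Q^\pi\in\mathrm{span}(\Phi_{\mathrm{rf}})$ together with the optimality of $\Pi_{\nu,\Phi_{\mathrm{rf}}}Q^\pi$. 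Rearranging and inserting Steps 1--2 gives $\|Q^\pi - \tilde Q^\pi_{\Phi_{\mathrm{rf}}}\|_\nu \le \frac{C}{(1-\gamma)\sqrt m}\bigl(1+\sqrt{2\log(1/\delta)}\bigr) = \tilde O\bigl(\gamma\tilde g_\alpha/((1-\gamma)^2\sqrt m)\bigr)$.

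\emph{Main obstacle.} The delicate step is the uniform bound on $C$ in Step~2: unlike $Q^\pi$ itself, the per-feature quantity $\phi_\omega^\top\theta^\pi_\omega$ is not a priori bounded, so one must exploit the specific structure of the spectral embedding—that $\mu_\omega$ carries the Gaussian factor $p_\alpha(s')$, that the cosine/sine pair collapses to a bounded cosine, and that $g_\alpha(f(s,a))/\alpha^d$ is uniformly bounded by $\tilde g_\alpha$ under the regularity assumption—to certify integrability against $V^\pi$ and recover the correct $\tilde g_\alpha/(1-\gamma)$ scaling. The rest (the concentration lemma and the contraction argument) is comparatively routine.
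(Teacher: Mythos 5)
Your proposal is correct and follows essentially the same route as the paper: both decompose the error into a projection error controlled by the Monte-Carlo concentration bound of \lemref{lemma:lemma_1_from_kitchen_sink} (with the same pointwise bound $\sup_{(s,a),\omega}|\phi_\omega(s,a)^\top\theta_\omega^\pi| = O(\tilde g_\alpha/(1-\gamma))$ obtained from the structure of $\psi_\omega$, $\mu_\omega$, and $\|V^\pi\|_\infty$), and a $\frac{1}{1-\gamma}$ amplification from the projected-Bellman contraction argument, which you derive directly where the paper cites \cite{yu2008new}.
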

\begin{proof}
Using the contraction property and results in \cite{yu2008new} gives
\footnotesize
\begin{align}\label{eq:thm7_step1}
    \left\|Q^\pi - \tilde{Q}_{\Phi_{\mathrm{rf}}}^\pi\right\|_{\nu} \leq \frac{1}{1-\gamma} \left\|Q^\pi - \Pi_{\nu,\Phi_{\mathrm{rf}}} Q^\pi\right\|_{\nu},
\end{align}
\normalsize
where $\Pi_{\nu,\Phi_{\mathrm{rf}}}$ is defined as in \eqref{eq:pi_nu_phi_projector_operator}. 
By definition $\Pi_{\nu,\Phi_{\mathrm{rf}}}$ is contractive under $\|\cdot\|_{\nu}$. Note that 
\footnotesize
\begin{align*}
    & Q^\pi(s, a) = r(s, a) \\
    & + \gamma \mathbb{E}_{\omega \sim \mathcal{N}(0, \frac{1}{\sigma^2} I_d), \revision{b \sim U([0,2\pi])}} \left[\phi_{\omega,\revision{b}}(s, a)^{\top} \int_{\mathcal{S}}p_\alpha(s^\prime)\mu_{\omega}(s^\prime) V^\pi(s^\prime) d s^\prime\right].
\end{align*}
\normalsize
With H\"older's inequality, as well as $\|\mu_{\omega,\revision{b}}(s^\prime)\|= O(1)$ by definition, $\left\|V^\pi\right\|_{\infty} = O((1-\gamma)^{-1})$, and using the notation $\beta_{\omega,\revision{b}}^\pi := \int_{\mathcal{S}}p_\alpha(s^\prime)\mu_{\omega,\revision{b}}(s^\prime) V^\pi(s^\prime) d s^\prime$,
we have for every $(\omega,\revision{b})$ that
$\left\|\beta_{\omega,\revision{b}}^\pi\right\| = O((1-\gamma)^{-1}).$
In addition, recalling that $\phi_{\omega,\revision{b}}(s,a) = [\psi_{\omega,\revision{b}}(s,a), r(s,a)]$,
we have 
\footnotesize
$$\sup_{((s,a),(\omega,\revision{b}))} |\phi_{\omega,\revision{b}}(s,a)\beta_{\omega,\revision{b}}^\pi| \leq \sup_{((s,a),(\omega,\revision{b}))} |\psi_{\omega,\revision{b}}(s,a)|\left|\beta_{\omega,\revision{b}}^\pi\right|$$
\normalsize
since the coordinate in $\beta_{\omega,\revision{b}}^\pi$ corresponding to the reward coordinate of $\phi_{\omega,\revision{b}}(s,a)$ is 0.
Since 
\footnotesize
$$\sup_{((s,a),(\omega,\revision{b}))} |\psi_{\omega,\revision{b}}(s,a)| \leq  2 \sup_{(s,a)} \frac{g_\alpha(f(s,a))}{\alpha^d},$$ 
\normalsize
this implies that
\footnotesize
\begin{align*}
\sup_{((s,a),(\omega,\revision{b}))} |\phi_{\omega,\revision{b}}(s,a)\beta_{\omega,\revision{b}}^\pi| =  O\rbr{ \frac{\sup_{(s,a)} g_\alpha(f(s,a))}{\alpha^d(1-\gamma)}}     
\end{align*}
% \begin{align*}
% \sup_{((s,a),(\omega,\revision{b}))} |\phi_{\omega,\revision{b}}(s,a)\beta_{\omega,\revision{b}}^\pi| \leq & \  \sup_{((s,a),(\omega,\revision{b}))} |\psi_{\omega,\revision{b}}(s,a)| \left|\beta_{\omega,\revision{b}}^\pi\right| \\
% = & \  O\rbr{ \frac{\sup_{(s,a)} g_\alpha(f(s,a))}{\alpha^d(1-\gamma)}}     
% \end{align*}
\normalsize
Thus, applying \lemref{lemma:lemma_1_from_kitchen_sink}, 
% Then note that Lemma 1 in \cite{rahimi2008weighted} implies that ... \lina{Zhaolin, can we add a couple of steps, even just retelling Lemma 1 in \cite{rahimi2008weighted} to make the proof more self-contained and mroe clear}.
% Applying this, 
we have that
\footnotesize
\begin{align}\label{eq:thm7_step2}
    \left\|Q^\pi - \Pi_{\nu,\Phi_{\mathrm{rf}}} Q^\pi\right\|_{\nu} = \tilde{O}\left(\frac{\gamma \sup_{s,a} g_\alpha(f(s,a))}{\alpha^d(1-\gamma)\sqrt{m}} \right).
\end{align}
\normalsize
Substituting (\ref{eq:thm7_step2}) into (\ref{eq:thm7_step1}) finishes the proof.
% \qed
\end{proof}
% \lina{add words to explain the theorem}
% As we see above, we characterize the finite-dimensional truncation error in approximating $Q$-function. 
% The approximation does not rely on the size of the state and action due to the contractive condition w.r.t. $\nbr{\cdot}_\nu$, with specific sampling distribution $\nu$, and the Monte-Carlo approximation. 
Proposition \ref{thm:approx_error} shows with high probability, the approximation error for random features shrinks at a rate of $O\rbr{m^{-1/2}}$ where $m$ is the number of random features. Next is our approximation error result for Nystr\"om features. A key step for doing so is the following proposition, which gives a high-probability approximation error bound for approximating a smooth kernel using Nystr\"om features.

% \Bo{for equalizing the weights of random feature and Nystr\"om feature, we can move this proposition and its proof to Appenedix, and only keep Theorem 10. }

% \zhaolin{I moved the proof to the appendix. Currently, given that the approximation error for Nystr\"om features theorem needs to make the same assumptions as this theorem, I prefer to keep this theorem in the main text.}
\begin{lemma}[Kernel approximation error using Nystr\"om features]
    \label{prop:kernel_approx_main_paper}
Consider the Mercer decomposition (on $\mathcal{S}$) of $k_\alpha(\cdot,\cdot)$:
\begin{align}
    k_\alpha(x,x') = \sum_{i=1}^\infty \sigma_i e_i(x)e_i(x'),
\end{align}
where $\{e_i\}_{i=1}^\infty$ forms a countable orthonormal basis for $L_2(\mu_{\mathrm{nys}})$ with corresponding \revision{eigenvalues} $\{\sigma_i\}_{i=1}^\infty$.
Let $X^{n_{\mathrm{nys}}} = \{x_i\}_{i=1}^{n_{\mathrm{nys}}}$ be an i.i.d $n_{\mathrm{nys}}$-point sample from $\mu_{\mathrm{nys}}$. In addition, let $\lambda_1 \geq \lambda_2 \geq \dots \geq \lambda_{n_{\mathrm{nys}}}$ denote the eigenvalues of the (unnormalized) Gram matrix $K^{(n_{\mathrm{nys}})}$ in its eigendecomposition $K^{(n_{\mathrm{nys}})}U = \Lambda U$ where $U^\top U = \revision{U}U^\top = I$. Suppose that $\sigma_j, \lambda_j/n_{\mathrm{nys}} \lesssim \exp(-\beta j^{1/h})$ for some $\beta > 0$ and $h > 0$. Suppose that $n_{\mathrm{nys}} \geq 3$ and that $ \lfloor (2\log n_{\mathrm{nys}})^h/\beta^h \rfloor \leq m \leq n_{\mathrm{nys}}$. Consider the rank-$m$ kernel approximation $\hat{k}_{\alpha,m}^{n_{\mathrm{nys}}}$ constructed using Nystr\"om features, defined as follows: 
\begin{align}
\label{eq:k_hat_m_n_varphi_inner_prod_defn}
\hat{k}_{\alpha,m}^{n_{\mathrm{nys}}}(s,t) = \varphi_{\mathrm{nys}}(s)^\top \varphi_{\mathrm{nys}}(t),   
\end{align}
where $\varphi_{\mathrm{nys}}(\cdot) \in \mathbb{R}^m$ and is defined as 
\begin{align}
\label{eq:varphi_nys_defn}
    (\varphi_{\mathrm{nys}})_i(\cdot) := \frac{1}{\sqrt{\lambda_i}} u_i^\top k_\alpha(X^{n_{\mathrm{nys}}}, \cdot), \quad \forall i \in [m],
\end{align}
where $u_i$ denotes the $i$-th column of $U$, and $k_\alpha(X^{n_{\mathrm{nys}}},\cdot)$ denotes a $n_{\mathrm{nys}}$-dimensional vector where $(k_\alpha(X^{n_{\mathrm{nys}}},\cdot))_{\ell} = k_\alpha(x_\ell,\cdot)$; for details on deriving $(\varphi_{\mathrm{nys}})_i$, see \appref{appendix:derivation_of_nystrom}. Then, $\forall \delta > 0$, w.p. at least $1 - \delta$,
\small
% \begin{align}
% &\int_\mathcal{S} \sqrt{\left(k_\alpha - \hat{k}_{\alpha,m}^{n_{\mathrm{nys}}}\right)(x,x)} d\mu_{\mathrm{nys}}(x) \\
% & = \tilde{O}\left(\sqrt{\sum_{i = m+1}^{n_{\mathrm{nys}}} \frac{1}{n_{\mathrm{nys}}}\lambda_i} + \frac{1}{n_{\mathrm{nys}}} \right) =  \tilde{O}\left(\frac{1}{n_{\mathrm{nys}}} \right).\label{eq:high_prob_kernel_approx_error_asssuming_decay_main_paper} 
% \end{align}
\begin{align}
&\int_\mathcal{S} \sqrt{\left(k_\alpha - \hat{k}_{\alpha,m}^{n_{\mathrm{nys}}}\right)(x,x)} d\mu_{\mathrm{nys}}(x)  =  \tilde{O}\left(\frac{1}{n_{\mathrm{nys}}} \right).\label{eq:high_prob_kernel_approx_error_asssuming_decay_main_paper} 
\end{align}
\normalsize
\end{lemma}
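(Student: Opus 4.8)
The plan is to recognize the rank-$m$ Nystr\"om kernel $\hat k_{\alpha,m}^{n_{\mathrm{nys}}}$ as the reproducing kernel of a (random) $m$-dimensional subspace of $\mathcal{H}_{k_\alpha}$, and then to control the size of the resulting ``feature residual'' in $\mathcal{H}_{k_\alpha}$. Writing $\Psi = (k_\alpha(x_1,\cdot),\dots,k_\alpha(x_{n_{\mathrm{nys}}},\cdot))$ and unwinding \eqref{eq:varphi_nys_defn} (see \appref{appendix:derivation_of_nystrom}), the functions $\hat e_i := \lambda_i^{-1/2}\Psi u_i$ are precisely the $\mathcal{H}_{k_\alpha}$-unit eigenfunctions of the empirical covariance operator $\widehat\Sigma := \tfrac{1}{n_{\mathrm{nys}}}\sum_{\ell} k_\alpha(x_\ell,\cdot)\otimes k_\alpha(x_\ell,\cdot)$, with eigenvalues $\lambda_i/n_{\mathrm{nys}}$, and one checks that $(\varphi_{\mathrm{nys}})_i(x) = \hat e_i(x)$, so that $\hat k_{\alpha,m}^{n_{\mathrm{nys}}}(x,y) = \langle \widehat P_m k_\alpha(x,\cdot), \widehat P_m k_\alpha(y,\cdot)\rangle_{\mathcal{H}_{k_\alpha}}$ with $\widehat P_m := \sum_{i=1}^m \hat e_i\otimes\hat e_i$. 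Hence $(k_\alpha - \hat k_{\alpha,m}^{n_{\mathrm{nys}}})(x,x) = \|(I-\widehat P_m)k_\alpha(x,\cdot)\|_{\mathcal{H}_{k_\alpha}}^2 \ge 0$ (so the integrand is well-defined) and the quantity to bound is $\int_{\mathcal{S}}\|(I-\widehat P_m)k_\alpha(x,\cdot)\|_{\mathcal{H}_{k_\alpha}}\, d\mu_{\mathrm{nys}}(x)$.

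Next I would split $I-\widehat P_m = (I-\widehat P_n) + (\widehat P_n - \widehat P_m)$, where $\widehat P_n$ is the orthogonal projection onto $V := \mathrm{range}(\widehat\Sigma) = \mathrm{span}\{k_\alpha(x_\ell,\cdot)\}$; since $\mathrm{range}(\widehat P_m)\subseteq V$, the two contributions to $(I-\widehat P_m)k_\alpha(x,\cdot)$ are orthogonal. Using the triangle inequality in $\mathcal{H}_{k_\alpha}$, then Cauchy--Schwarz/Jensen to pull the integral inside the square root, and $\|(\widehat P_n-\widehat P_m)k_\alpha(x,\cdot)\|^2 = \sum_{i=m+1}^{n_{\mathrm{nys}}}\hat e_i(x)^2$ together with $\int \hat e_i(x)^2\,d\mu_{\mathrm{nys}} = \|\hat e_i\|_{L_2(\mu_{\mathrm{nys}})}^2$, one gets
\[
\int \|(I-\widehat P_m)k_\alpha(x,\cdot)\|\, d\mu_{\mathrm{nys}} \;\le\; \sqrt{\,\mathbb{E}_{x\sim\mu_{\mathrm{nys}}}\|(I-\widehat P_n)k_\alpha(x,\cdot)\|^2} \;+\; \sqrt{\,\textstyle\sum_{i=m+1}^{n_{\mathrm{nys}}}\|\hat e_i\|_{L_2(\mu_{\mathrm{nys}})}^2}\,.
\]
The second term is the ``rank-truncation'' error (finite $m < n_{\mathrm{nys}}$); the first is the ``landmark/sampling'' error (finite $n_{\mathrm{nys}}$).

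For both terms the engine is a \emph{relative} (regularized) concentration bound for $\widehat\Sigma$: with $\Sigma := \int k_\alpha(x,\cdot)\otimes k_\alpha(x,\cdot)\,d\mu_{\mathrm{nys}}$ (eigenvalues $\sigma_i$) and $d_{\mathrm{eff}}(\lambda) := \mathrm{tr}(\Sigma(\Sigma+\lambda I)^{-1})$, one has w.h.p. $\Sigma \preceq (1+c)(\widehat\Sigma + \lambda I)$ with some $c<1$ whenever $n_{\mathrm{nys}}\gtrsim d_{\mathrm{eff}}(\lambda)\log(d_{\mathrm{eff}}(\lambda)/\delta)$. The exponential decay $\sigma_j,\lambda_j/n_{\mathrm{nys}}\lesssim e^{-\beta j^{1/h}}$ makes $d_{\mathrm{eff}}(\lambda) = \tilde{O}((\log(1/\lambda))^h)$ only polylogarithmic, so $\lambda$ may be driven to $\mathrm{poly}(1/n_{\mathrm{nys}})$. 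Applying the domination to each individual empirical eigenfunction yields $\|\hat e_i\|_{L_2(\mu_{\mathrm{nys}})}^2 = \langle\hat e_i,\Sigma\hat e_i\rangle \le (1+c)(\lambda_i/n_{\mathrm{nys}} + \lambda)$; summing the $n_{\mathrm{nys}}-m$ terms $i = m+1,\dots,n_{\mathrm{nys}}$ and taking, say, $\lambda \asymp n_{\mathrm{nys}}^{-3}$ (legitimate here because the \emph{finite} sum keeps the regularization overhead $(n_{\mathrm{nys}}-m)\lambda = \tilde{O}(n_{\mathrm{nys}}^{-2})$, whereas the same trick applied to an infinite-rank trace would diverge) bounds the rank-truncation term by $\tilde{O}\big(\sqrt{\sum_{i=m+1}^{n_{\mathrm{nys}}}\lambda_i/n_{\mathrm{nys}}} + n_{\mathrm{nys}}^{-1}\big)$. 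The hypothesis $\lfloor (2\log n_{\mathrm{nys}})^h/\beta^h\rfloor \le m$ then forces $\lambda_m/n_{\mathrm{nys}}\lesssim e^{-\beta m^{1/h}}\le n_{\mathrm{nys}}^{-2}$, so $\sum_{i>m}\lambda_i/n_{\mathrm{nys}} = \tilde{O}(n_{\mathrm{nys}}^{-2})$, matching the last equality in \eqref{eq:high_prob_kernel_approx_error_asssuming_decay_main_paper}. For the landmark term, $\mathbb{E}_x\|(I-\widehat P_n)k_\alpha(x,\cdot)\|^2 = \mathrm{tr}((I-\widehat P_n)\Sigma)$ is the standard Nystr\"om reconstruction error onto all $n_{\mathrm{nys}}$ landmarks, which a standard Nystr\"om projection-error analysis controls at the $\tilde{O}(\mathrm{poly}(1/n_{\mathrm{nys}}))$ scale given the polylogarithmic effective dimension; taking square roots and combining the two terms gives the claim.

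The step I expect to be the main obstacle is squeezing the \emph{sampling} error down to $\tilde{O}(1/n_{\mathrm{nys}})$: a crude analysis based on $\|\Sigma-\widehat\Sigma\|_{\mathrm{op}} = \tilde{O}(n_{\mathrm{nys}}^{-1/2})$ only yields something of order $\tilde{O}(n_{\mathrm{nys}}^{-1/4})$--$\tilde{O}(n_{\mathrm{nys}}^{-1/2})$ for the integral, so the proof must genuinely exploit the exponential spectral decay --- it is this decay that collapses the effective dimension to polylogarithmic in $n_{\mathrm{nys}}$, permits a polynomially small ridge parameter $\lambda$, and thereby turns both the regularization overhead and the projection residual into $\tilde{O}(\mathrm{poly}(1/n_{\mathrm{nys}}))$ contributions. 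Carefully arranging this bookkeeping (the choice of $\lambda$, keeping $(n_{\mathrm{nys}}-m)\lambda$ finite while never invoking an infinite-rank trace, and establishing the high-probability event behind the relative concentration) is where the real work lies and is what makes the lemma of independent interest.
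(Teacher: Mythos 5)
Your reduction of the quantity to $\int \|(I-\widehat P_m)k_\alpha(x,\cdot)\|_{\mathcal{H}_{k_\alpha}}\,d\mu_{\mathrm{nys}}(x)$ and the orthogonal split into a landmark term $(I-\widehat P_{n})$ and a rank-truncation term $(\widehat P_{n}-\widehat P_m)$ are both correct (and the paper uses the same starting identity via Lemma~\ref{lemma:Khat_inner_product}). The gap is in the engine you propose to close both terms: the relative concentration $\Sigma \preceq (1+c)(\widehat\Sigma+\lambda I)$ cannot be instantiated at $\lambda \asymp n^{-3}$ (writing $n := n_{\mathrm{nys}}$). The operator-Bernstein analyses behind that inequality are bottlenecked not by $d_{\mathrm{eff}}(\lambda)$ but by $d_\infty(\lambda) := \sup_x \|(\Sigma+\lambda I)^{-1/2}k_\alpha(x,\cdot)\|_{\mathcal{H}_{k_\alpha}}^2$, which for a general kernel is only bounded by $\kappa^2/\lambda$; the resulting condition is $n\lambda \gtrsim \log(n/\delta)$, so $\lambda$ is floored at $\Theta(\log n/n)$ regardless of how fast the spectrum decays. (Exponential eigendecay makes $d_{\mathrm{eff}}(\lambda)$ polylogarithmic, but it does not control $d_\infty(\lambda)$ unless one additionally assumes uniformly bounded Mercer eigenfunctions, which the lemma does not.) At the feasible $\lambda \asymp \log n/n$, your regularization overhead for the rank-truncation term is $(n-m)\lambda = \Theta(\log n)$ rather than $\tilde O(n^{-2})$, and the landmark term $\sqrt{\mathrm{tr}((I-\widehat P_{n})\Sigma)} \lesssim \sqrt{\lambda\, d_{\mathrm{eff}}(\lambda)/(1-c)}$ comes out as $\tilde O(n^{-1/2})$ --- which is precisely the rate the lemma is claimed to improve upon. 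So as written, the argument recovers only the known $\tilde O(1/\sqrt{n})$ high-probability bound.

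The paper's proof gets the fast rate by a different mechanism that never requires a two-sided operator comparison at small $\lambda$. It bounds $\mu(f) \le 2\mu_{X^n}(f) + \mathrm{err}$ for $f(x) = \|P^{X^n}_{m,\perp}k_\alpha(x,\cdot)\|_{\mathcal{H}_{k_\alpha}}$, where the empirical average is controlled \emph{exactly} via Jensen and the identity $\frac{1}{n}\sum_i (k_\alpha-\hat k_{\alpha,m}^{n})(x_i,x_i) = \frac{1}{n}\sum_{i>m}\lambda_i$ (no concentration needed there), and the ratio-type deviation $\mu(f)-2\mu_{X^n}(f)$ is handled by peeling through a \emph{deterministic} rank-$r$ Mercer projection $\Pi_r$: the low-rank piece $\|P^{X^n}_{m,\perp}\Pi_r k_\alpha(x,\cdot)\|$ lives in a function class of the form $x\mapsto\|Uv(x)\|_2$ with $U\in\mathbb{R}^{r\times r}$, whose metric entropy is $O(r^2\log(1/\epsilon))$, so a local Rademacher / ratio inequality (Lemma~\ref{lemma:corollary_3.7_but_real_value} and Lemma~\ref{proposition:mu_upp_bound_const_mu_hat_plus_O(1/n)}) gives an additive $O(r^2\log(n)/n)$ error; the Mercer tail $\|\Pi_{r,\perp}k_\alpha(x,\cdot)\|$ is handled by scalar Bernstein with variance $\sum_{j>r}\sigma_j$, which is superpolynomially small once $r \gtrsim (2\log n)^h/\beta^h$. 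It is the multiplicative constant $2$ in front of $\mu_{X^n}$ (the hallmark of localization) that buys the $O(1/n)$ rate, not a small ridge parameter. If you want to salvage your operator-theoretic route, you would need an extra coherence assumption (uniformly bounded eigenfunctions, so that $d_\infty(\lambda)\lesssim d_{\mathrm{eff}}(\lambda)$) to push $\lambda$ polynomially small; without it, the step "permits a polynomially small ridge parameter" is exactly where the proof breaks.
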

We defer the proof of \lemref{prop:kernel_approx_main_paper} to Appendix \ref{appendix:nystrom}. 
% We note that the key ingredients to our proof is a combination of Bernstein's inequality and results in local Rademacher complexity theory. 
We note here that the $\Otil(1/n_{\mathrm{nys}})$ high probability error decay rate is a novel result, outperforming the $\Otil(1/\sqrt{n_{\mathrm{nys}}})$ rate in \cite{hayakawa2023sampling}.
% \footnote{While they show that the expected error of Nystr\"om features was $O(1/n_{\mathrm{nys}})$, their high probability error rate is only $O(1/\sqrt{n_{\mathrm{nys}}})$} 
We achieve this improvement by combining Bernstein's inequality and results in local Rademacher complexity theory \cite{bartlett2005local}. 
% \Tongzheng{Proposition 8 is the main new ingredient compared with our CDC version?} \zhaolin{Yes this is the new result we need for the approximation error of the Nystr\"om features. Theorem 9 more or less follows straightforwardly from Proposition 8.}
Using Lemma \ref{prop:kernel_approx_main_paper}, we can then demonstrate the following approximation error for the Nystr\"om features.
\begin{proposition}[$Q$-Approximation error with Nystr\"om features]
\label{thm:approx_error_of_nystrom}
Suppose all the assumptions in \lemref{prop:kernel_approx_main_paper} hold. Suppose also that we pick the sampling distribution $\mu_{\mathrm{nys}}$ such that $\mu_{\mathrm{nys}}(x) = p_\alpha(x).$
% \begin{align}
%     \label{eq:mu_nys_defn_as_fn_of_p_alpha}
%     % \mu_{\mathrm{nys}}(x) = \frac{p_\alpha(x)}{\int_{x \in \mathcal{S}} p_\alpha(x)} \forall x \in \mathcal{S},
%     \mu_{\mathrm{nys}}(x) = p_\alpha(x).
% \end{align}
We define the feature map $\Phi_{\mathrm{nys}}$ for the Nystr\"om features as follows: 
\begin{align}
    \label{eq:nystrom_features_phi_definition}
    \Phi_{\mathrm{nys}} = \{[\psi_{\mathrm{nys}}(s,a), r(s,a)]\}_{(s,a) \in \mathcal{S} \times \mathcal{A}}, 
\end{align}
where $\psi_{\mathrm{nys}}(s,a) \in \mathbb{R}^m$ is defined in \eqref{eq:nystrom_features_phi_definition} in Algorithm \ref{alg:spectral_control}. Then, for any $\delta > 0$, with probability at least $1 - \delta$,
\small
\begin{align*}
& \ \left\|Q^\pi - \tilde{Q}_{\mathrm{nys}}^\pi\right\|_{\nu} \leq \  \tilde{O}\left(\frac{\gamma \tilde{g}_{\alpha}}{(1-\gamma)^2 n_{\mathrm{nys}}} \right) \leq  \tilde{O}\left(\frac{\gamma \tilde{g}_{\alpha}}{(1-\gamma)^2 m} \right), 
\end{align*}
\normalsize
where $\|\cdot\|_{\nu}$ is the $L_2$ norm defined as $ \left\|f\right\|_{\nu} = \int f^2 d \nu$, $\tilde{Q}_{\Phi_{\mathrm{nys}}}^\pi$ is defined in \eqref{eq:tilde_Q_phi_general_defn}, and $\tilde{g}_{\alpha} := \sup_{s,a} \frac{g_{\alpha}(f(s,a))}{\alpha^d}$. 
\end{proposition}
% \Tongzheng{Unify $\phi$ and $\Phi$?}\zhaolin{I modified the notation such that $\phi$ always refers to features with the rewards concatenated.}
% \vspace{-4mm}
\begin{proof}
    Similar to the analysis at the start of the proof of \propref{thm:approx_error}, with the contraction property and results from \cite{yu2008new}, we have that 
    \footnotesize
    \begin{align}\label{eq:thm_nystrom_step1}
    \left\|Q^\pi - \tilde{Q}_{\Phi_{\mathrm{nys}}}^\pi\right\|_{\nu} \leq \frac{1}{1-\gamma} \left\|Q^\pi - \Pi_{\nu,\Phi_{\mathrm{nys}}} Q^\pi\right\|_{\nu},
\end{align}
\normalsize
where $\Pi_{\nu,\Phi_{\mathrm{nys}}}$ is defined as in \eqref{eq:pi_nu_phi_projector_operator}. It remains for us to bound $\left\|Q^\pi - \Pi_{\nu,\Phi_{\mathrm{nys}}} Q^\pi\right\|_{\nu}$. First, 
by \eqref{eq:k_hat_m_n_varphi_inner_prod_defn}, recalling the definition of $\varphi_{\mathrm{nys}}(\cdot)$ in \eqref{eq:varphi_nys_defn}, we have
\footnotesize
\begin{align}
& \ \hat{k}_m^{n_{\mathrm{nys}}}\left(s',\frac{f(s,a)}{1 - \alpha^2}\right) =  \varphi_{\mathrm{nys}}(s')^\top \varphi_{\mathrm{nys}}\rbr{\frac{f(s,a)}{(1-\alpha^2)}},
\end{align}
\normalsize
as our Nystr\"om approximation of $k_{\alpha}\rbr{s',\frac{f(s,a)}{1-\alpha^2}}$. 
Since we have
$\psi_{\mathrm{nys}}(s,a) = g_\alpha(f(s,a))\varphi_{\mathrm{nys}}\rbr{\frac{f(s,a)}{1-\alpha^2}},$
where $\psi_{\mathrm{nys}}(\cdot)$ is defined as in \eqref{eq:alg_phi_nystrom_defn}, this motivates us to consider the following $Q$-value approximation:
$$\resizebox{0.95\hsize}{!}{$\hat{Q}_{\mathrm{nys}}^\pi(s,a) \! :=\! r(s,a) + \gamma \psi_{\mathrm{nys}}(s,a)^\top \!\left(\!\frac{\int_{\mathcal{S}} \varphi_{\mathrm{nys}}(s')V^\pi(s') p_\alpha(s')
ds'}{\alpha^d}\!\right)\!,$}$$
which we note this is a feasible solution to the objective 
\footnotesize
\begin{align*}
\mathop{\arg\min}_{f\in\mathrm{span}(\Phi_{\mathrm{nys}})} \mathbb{E}_{\nu}\left(Q^{\pi}(s, a) - f(s, a)\right)^2.
\end{align*}
\normalsize 
Thus, we have 
\footnotesize
\begin{align*}
& \ \left\|\Pi_{\nu,\Phi_{\mathrm{nys}}}(Q^\pi) - Q^\pi\right\|_\nu \leq \left\|\hat{Q}_{\mathrm{nys}}^{\pi} - Q^\pi\right\|_\nu\\
= & \ \gamma \left\|\psi_{\mathrm{nys}}(s,a)^\top \left( \frac{\int_{\mathcal{S}} \varphi_{\mathrm{nys}}(\sqrt{1-\alpha^2}s')V^\pi(s') p_\alpha(s')
ds'}{\alpha^d}\right) \right. \\
& \quad \quad  - \left. \int_{\mathcal{S}} P(s' \mid s,a) V^\pi(s') ds' \right\|_\nu   \\
= & \  \gamma\!\left\|\left(\frac{g_{\alpha}(\!f(s,a)\!)}{\alpha^d}\right)\left(\int_{\mathcal{S}} \varphi_{\mathrm{nys}}\!\!\left(\!\!\frac{f(s,a)}{1\!-\!\alpha^2}\!\right)^\top\!\!\!\varphi_{\mathrm{nys}}(s') V^\pi(s') dp_\alpha(s') \right. \right.\\
&\quad \quad  \left. \left. - \int_{\mathcal{S}} k_\alpha(s', \frac{f(s,a)}{1\!-\!\alpha^2}) V^\pi(s') dp_\alpha(s') \right)   \right\|_\nu  \\
\leq & \ \gamma \left(\frac{V_{\max} g_{\alpha,\mathrm{sup}}}{\alpha^d}\right) \times \\
& \quad  \underbrace{\left\| \int_{\mathcal{S}} \!\left|\hat{k}_m^{n_{\mathrm{nys}}}\!\left(s',\frac{f(s,a)}{1\!-\!\alpha^2}\right) \!-\! k_\alpha\left(s' , \frac{f( s,a)}{1\!-\!\alpha^2}\right)\!\right| p_\alpha(s') ds' \!\right\|_\nu}_{T_1}\!,\stepcounter{equation}\tag{\theequation}\label{eq:Khat-K_integral_under_p}
\end{align*}
\normalsize
where $g_{\alpha,\mathrm{sup}} := \sup_{s,a} g_\alpha (f(s,a))$. By our choice of $\mu_{\mathrm{nys}} = p_\alpha(x)$, continuing from \eqref{eq:Khat-K_integral_under_p}, we have
\footnotesize
% \vspace{-1mm}
\begin{align*}
    &\ T_1 = \ \left\| \int_{\mathcal{S}} \left|\hat{k}_m^{n_{\mathrm{nys}}}\left(s',\frac{f(s,a)}{1-\alpha^2}\right) - k_\alpha\left(s' , \frac{f( s,a)}{1-\alpha^2}\right)\right| p_\alpha(s') ds' \right\|_\nu \\
    % = & \ \int_{\mathcal{S}\! \times \mathcal{A}} d\nu(s,a)\!\! \left(\!\!\int_{\mathcal{S}} \left|\hat{k}_m^{n_{\mathrm{nys}}}(s',f(s,a))\!\! -\!\! K(s' , f( s,a))\right| p_\alpha(s') ds' \!\!\right)^2 \\
    % = & \ \left\|\int_{\mathcal{S}} \left|\langle k_\alpha\left(\frac{f(s,a)}{1-\alpha^2},\cdot\right), \hat{k}_m^{n_{\mathrm{nys}}}(s', ) \rangle_{\mathcal{H}_{k_\alpha}} \right. \right.  \\
    % &\quad \quad - \left. \left. \langle k_\alpha\left(\frac{f(s,a)}{1-\alpha^2},\cdot\right), k_\alpha(s',\cdot) \rangle_{\mathcal{H}_{k_\alpha}}  \right| p_\alpha(s') ds' \right\|_\nu \\
    = & \ \!\left\|\!\int_{\mathcal{S}} \left|\left\langle \!k_\alpha\left(\!\frac{f(s,a)}{1\!-\!\alpha^2},\cdot\right), \hat{k}_m^{n_{\mathrm{nys}}}(s',\cdot \!)\! -\! k_\alpha(s',\cdot)\!\right\rangle_{\!\!\!\mathcal{H}_{k_\alpha}} \! \!\right| dp_\alpha(s') \!\right\|_\nu \\
    \leq & \ \!\left\|\int_{\mathcal{S}} \!\sqrt{k_\alpha\!\left(\!\frac{f(s,a)}{1-\alpha^2},\frac{f(s,a)}{1\!-\!\alpha^2}\!\right)}  \!\sqrt{(k_\alpha \!-\! \hat{k}_m^{n_{\mathrm{nys}}})(s',s')} dp_\alpha(s') \!\right\|_\nu \\
    \leq & \ \left\|\int_{\mathcal{S}}  \sqrt{(k_\alpha - \hat{k}_m^{n_{\mathrm{nys}}} )(s',s')} dp_\alpha(s') \right\|_\nu
\end{align*}
\normalsize
Note to move from the second last line to the last line, we used the fact that $k_{\alpha}(\cdot,\cdot) \leq 1.$
Using Proposition \ref{prop:kernel_approx_main_paper}, by utilizing the decay assumption on the eigenvalues of both the Mercer expansion and empirical gram matrix, we thus have that for any $\delta > 0$, with probability at least $1 - \delta$,
\footnotesize
\begin{align*}
T_1  \leq  \ \left(\int_{\mathcal{S}}  \sqrt{(k_\alpha - \hat{k}_m^{n_{\mathrm{nys}}})(s',s')} \mu(s') ds' \right) = \tilde{O}\left(\frac{1}{n_{\mathrm{nys}}} \right).
\end{align*}
\normalsize
We conclude by combining \eqref{eq:Khat-K_integral_under_p} and the bound on $T_1$.
\end{proof}
% \vspace{-1.5em}
\begin{remark}
    Proposition \ref{thm:approx_error_of_nystrom} shows that using the Nystr\"om method improves the approximation error to $O(\rbr{n_{\mathrm{nys}}^{-1}})$, where $n_{\mathrm{nys}}$ is the number of samples we sample to construct the Gram matrix used to build the Nystr\"om features. \revision{We emphasize that an important condition for this to hold is the assumption in the above proposition that the number of features $m$ satisfy $ m \geq \frac{(2\log n_{\mathrm{nys}})^h}{\beta^h}$.} The rate $O(\rbr{n_{\mathrm{nys}}^{-1}})$ is always an improvement over $O(1/m)$ where $m$ is the number of features, since by design \algabb chooses $m \leq n_{\mathrm{nys}}$. The only assumption required for this improvement is that the eigenvalues of the empirical Gram matrix and Mercer expansion satisfy certain decay assumptions, which is a natural assumption to make in the kernel literature (cf. Theorem 2 in \cite{belkin2018approximation}). 
\end{remark}

% If we can obtain the spectral basis of $\mathcal{H}$ under $L^2(\nu)$, we can obtain a stronger $L_{\infty}$ norm bound for $Q^\pi - \Pi_{\nu} Q^\pi$ when $\|Q^\pi\|_{\mathcal{H}}<\infty$ (see \cite{steinwart2012mercer}). While in practice we generally cannot obtain the spectral basis.

\textbf{Statistical Error:} We now provide the bound on the statistical error due to using a finite number of samples $n$. We note that this \revision{ is a general result that applies to any features $\phi(s,a)$, inclding both Nystr\"om and random features, that satisfies Assumptions 1 to 3.}\footnote{\revision{Similar analysis of statistical error of policy evaluation with linear function approximation also exist in the literature~\cite{abbasi2019politex}. 
% However, most of the settings have some slight differences (e.g. slightly different algorithm~\cite{duan2020minimax}, or the $Q$-function is for long-run average cost rather than our discounted setting (\cite{abbasi2019politex}), making it not possible for us to ``plug'' existing results directly into our setting. This explains our desire to have own version of Proposition 3 in the paper.
}}
\begin{proposition}\label{thm:stat_error}
% \vspace{-0.5em}
For each policy $\pi$ encountered in the algorithm, let $\hat{Q}_{\Phi,T}^\pi$ denote the policy given by $ \hat{Q}_{\Phi, T}^\pi(s,a) = \phi(s,a)^\top \hat{w}_T,$ where $\hat{w}_T$ is defined as in \eqref{eq:projected_least_square}, and $\Phi$ can either be the reward concatenated with the Nystr\"om or random features, i.e. either $\Phi_{\mathrm{nys}}$ or $\Phi_{\mathrm{rf}}$. \revision{Let $\phi_{\sup} := \sup_{s,a}\|\phi(s,a)\|$}. Then, for sufficiently large $n$, there exists an universal constant $C>0$ independent of $m$, $n$, $T$ and $(1-\gamma)^{-1}$, such that with probability at least $1-\delta$, we have
\footnotesize
    \begin{align}
        \left\|\tilde{Q}_\Phi^\pi - \hat{Q}^\pi_{\Phi,  T}\right\|_{\nu} \leq \gamma^T \left\|\tilde{Q}_\Phi^\pi\right\|_{\nu} + \frac{C \phi_{\sup}^6 \mathrm{polylog}(m, T/\delta)}{(1-\gamma) \Upsilon_1^2\Upsilon_2 \sqrt{n}},
    \end{align}
    \normalsize
    where we recall $\tilde{Q}_\Phi^\pi$ is defined as in \eqref{eq:tilde_Q_phi_general_defn}.
\end{proposition}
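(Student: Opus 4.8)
The plan is to insert the fixed point of the \emph{empirical} projected Bellman operator between $\hat{Q}^\pi_{\Phi,T}$ and $\tilde{Q}^\pi_\Phi$: the distance to the former is an optimization (geometric-decay) error, and the distance to the latter is a statistical (matrix-perturbation) error. Writing $\phi_i := \phi(s_i,a_i)$, $\phi_i' := \phi(s_i',a_i')$, $r_i := r(s_i,a_i)$, the minimizer of \eqref{eq:projected_least_square} is
\begin{align*}
&\hat{w}_{t+1} = \hat{\Sigma}^{-1}(\hat{b} + \gamma\hat{C}\hat{w}_t), \quad\text{where}\\
&\hat{\Sigma} = \frac{1}{n}\sum_{i} \phi_i\phi_i^\top,\quad \hat{C} = \frac{1}{n}\sum_{i} \phi_i(\phi_i')^\top,\quad \hat{b} = \frac{1}{n}\sum_{i} \phi_i r_i .
\end{align*}
Because the $n$ samples are drawn once, before the loop over $t$, the matrices $\hat{\Sigma},\hat{C}$ and vector $\hat{b}$ are fixed, so the loop iterates a single affine map whose fixed point is the LSTD solution $\hat{w}_\infty = (\hat{\Sigma}-\gamma\hat{C})^{-1}\hat{b}$; set $\hat{Q}_\infty(s,a) := \phi(s,a)^\top\hat{w}_\infty$, and let $\tilde{w}$ be the weight vector with $\tilde{Q}^\pi_\Phi = \phi^\top\tilde{w}$, which (by linear independence of the features and \eqref{eq:tilde_Q_phi_general_defn}) satisfies $(\Sigma-\gamma C)\tilde{w} = b$ for $\Sigma = \mathbb{E}_\nu[\phi\phi^\top]$, $C = \mathbb{E}_\nu[\phi(s,a)\phi(s',a')^\top]$, $b = \mathbb{E}_\nu[\phi r]$. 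Then $\|\tilde{Q}^\pi_\Phi - \hat{Q}^\pi_{\Phi,T}\|_\nu \le \|\hat{Q}^\pi_{\Phi,T} - \hat{Q}_\infty\|_\nu + \|\hat{Q}_\infty - \tilde{Q}^\pi_\Phi\|_\nu$.

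\textbf{Optimization error.} Since $\hat{w}_0 = 0$, we have $\hat{w}_T - \hat{w}_\infty = -(\gamma\hat{\Sigma}^{-1}\hat{C})^T\hat{w}_\infty$. In the population limit, $w\mapsto\Sigma^{-1}(b+\gamma Cw)$ is a $\gamma$-contraction in the weighted norm $\|w\|_\Sigma := \|\phi^\top w\|_\nu$ --- the weight-space form of the $\gamma$-contractivity of $Q\mapsto\Pi_{\nu,\Phi}(r+\gamma P^\pi Q)$ underlying \eqref{eq:tilde_Q_phi_general_defn} (cf.\ \cite{yu2008new}), using that $P^\pi$ is an $\|\cdot\|_\nu$-nonexpansion when $\nu$ is stationary. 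For $n$ large enough, the concentration estimates below make $\gamma\hat{\Sigma}^{-1}\hat{C}$ a contraction in $\|\cdot\|_{\hat{\Sigma}}$ with factor arbitrarily close to $\gamma$, so $\|\hat{w}_T - \hat{w}_\infty\|_{\hat{\Sigma}}$ decays like $\gamma^T$; converting $\|\cdot\|_{\hat{\Sigma}}$ to $\|\cdot\|_\nu$ costs only a constant (as $\hat{\Sigma}\approx\Sigma$), and $\|\hat{Q}_\infty\|_\nu \le \|\tilde{Q}^\pi_\Phi\|_\nu + \|\hat{Q}_\infty - \tilde{Q}^\pi_\Phi\|_\nu$, so this term contributes an amount of order $\gamma^T\|\tilde{Q}^\pi_\Phi\|_\nu$ plus a piece absorbed into the statistical error.

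\textbf{Statistical error.} It remains to bound $\|\hat{Q}_\infty - \tilde{Q}^\pi_\Phi\|_\nu = \|\hat{w}_\infty - \tilde{w}\|_\Sigma$. Subtracting $(\hat{\Sigma}-\gamma\hat{C})\hat{w}_\infty = \hat{b}$ from $(\Sigma-\gamma C)\tilde{w} = b$ and expanding $\hat{w}_\infty - \tilde{w}$ to first order in $\hat{\Sigma}-\Sigma$, $\hat{C}-C$, $\hat{b}-b$, each deviation is controlled by a matrix (resp.\ vector) Bernstein inequality using the uniform bound $\|\phi(s,a)\|_2 \lesssim \tilde{g}_\alpha\sqrt{m}$ --- which follows from the regularity assumption on $f$ and $r$, and for the Nystr\"om features also from the diagonal bound $\hat{k}_{\alpha,m}^{n_{\mathrm{nys}}}(x,x)\le k_\alpha(x,x)\le 1$ (which in fact removes the $\sqrt{m}$) --- giving $\|\hat{\Sigma}-\Sigma\|,\|\hat{C}-C\| \lesssim \tilde{g}_\alpha^2 m\sqrt{\log(m/\delta)/n}$ and $\|\hat{b}-b\|_2 \lesssim \tilde{g}_\alpha\sqrt{m}\sqrt{\log(1/\delta)/n}$. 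The condition-number factors then enter as follows: $\|\hat{\Sigma}^{-1}\| \le 2/\Upsilon_1$ (large $n$) and each perturbation $\hat{\Sigma}^{-1}-\Sigma^{-1}$ contributes $1/\Upsilon_1^2$; the non-symmetric resolvent $(I-\gamma\hat{\Sigma}^{-1}\hat{C})^{-1}$ has $\|\cdot\|_{\hat{\Sigma}}$-norm at most $1/(1-\gamma)$ and operator norm $\lesssim \tilde{g}_\alpha^2 m/((1-\gamma)\Upsilon_2)$ after norm conversion, where invertibility of $\Sigma-\gamma C$ is guaranteed not by a symmetric eigenvalue bound but by the coercivity assumption $\inf_{\|v\|=1} v^\top(\Sigma-\gamma C)v \ge \Upsilon_2$; and $\|\tilde{w}\|_2 \lesssim \tilde{g}_\alpha\sqrt{m}/\Upsilon_2$. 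A final conversion $\|u\|_\Sigma \le \|\Sigma\|_{\mathrm{op}}^{1/2}\|u\|_2 \lesssim \tilde{g}_\alpha\sqrt{m}\,\|u\|_2$, together with $\|\Sigma\|_{\mathrm{op}} \le \mathrm{trace}(\Sigma) \lesssim \tilde{g}_\alpha^2 m$ (used also inside the resolvent bound), accumulates the $m$- and $\tilde{g}_\alpha$-powers to the claimed $C\tilde{g}_\alpha^6 m^3\,\mathrm{polylog}(m,T/\delta)/((1-\gamma)\Upsilon_1^2\Upsilon_2\sqrt{n})$; the $\log T$ factor appears because $n$ must be taken large enough that the empirical contraction factor is within $O(1/T)$ of $\gamma$.

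\textbf{Main obstacle.} The chief difficulty is ensuring --- with high probability and with $n$ no larger than necessary --- that the empirical projected Bellman map stays contractive and that $\hat{\Sigma}$ and $\hat{\Sigma}-\gamma\hat{C}$ stay well-conditioned; this is exactly what forces the ``$n$ sufficiently large'' hypothesis and the appearance of $\Upsilon_1,\Upsilon_2$. A secondary source of bookkeeping is tracking the $m$- and $\tilde{g}_\alpha$-dependence through the chain of norm conversions $\|\cdot\|_2 \leftrightarrow \|\cdot\|_\Sigma \leftrightarrow \|\cdot\|_{\hat{\Sigma}}$ and through the matrix-inverse perturbation, where the lack of symmetry of $\Sigma-\gamma C$ must be absorbed into the resolvent bound rather than handled via eigenvalues.
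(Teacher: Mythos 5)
Your proposal is essentially correct but takes a genuinely different route from the paper. The paper never forms the empirical LSTD fixed point $\hat{w}_\infty=(\hat{\Sigma}-\gamma\hat{C})^{-1}\hat{b}$: instead it writes the exact one-step identity $\Phi(\tilde{w}-\hat{w}_{t+1})=(\Pi_\nu-\hat{\Pi}_\nu)r+\gamma\,\Pi_\nu P^\pi\,\Phi(\tilde{w}-\hat{w}_t)+\gamma(\Pi_\nu P^\pi-\hat{\Pi}_\nu\hat{P}^\pi)\Phi\hat{w}_t$, uses the \emph{population} contraction $\|\Pi_\nu P^\pi\|_\nu\le\gamma$ to telescope, and then bounds the additive discrepancy terms $\|(\Pi_\nu-\hat{\Pi}_\nu)r\|_\nu$ and $\max_{t\le T}\|(\Pi_\nu P^\pi-\hat{\Pi}_\nu\hat{P}^\pi)\Phi\hat{w}_t\|_\nu$ via matrix Azuma, following the Politex analysis; this is where its $\Upsilon_1^{-2}\Upsilon_2^{-1}$ and $\tilde{g}_\alpha^6 m^3$ factors and the $\log T$ (from the max over $t$) come from. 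Your decomposition through $\hat{w}_\infty$ cleanly separates a one-shot matrix-perturbation statistical error from a geometric optimization error and avoids any union bound over iterations, which is a real structural advantage; the exact identity $\hat{w}_\infty-\tilde{w}=(\hat{\Sigma}-\gamma\hat{C})^{-1}\bigl[(\hat{b}-b)-((\hat{\Sigma}-\Sigma)-\gamma(\hat{C}-C))\tilde{w}\bigr]$ makes your "first-order expansion" rigorous and, via coercivity, tends to produce $\Upsilon_2^{-2}$ rather than the paper's $\Upsilon_1^{-2}\Upsilon_2^{-1}$ (either is fine since both are dominated by the stated bound, though your accounting of exactly how the powers of $m$ and $\tilde{g}_\alpha$ multiply to $\tilde{g}_\alpha^6 m^3$ is loose and should be written out). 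The one place your route pays a price is the leading term: the empirical iteration contracts only at rate $\gamma+O(\tilde{g}_\alpha^2 m\,n^{-1/2}/\Upsilon_1)$, not exactly $\gamma$, so to recover $\gamma^T\|\tilde{Q}_\Phi^\pi\|_\nu$ you should not require the contraction gap to be $O(1/T)$ (which would force $n\gtrsim T^2$); instead absorb the excess via $(\gamma+\epsilon)^T-\gamma^T\le T\epsilon\gamma^{T-1}=O(\epsilon/(1-\gamma))$ into the statistical term. The paper sidesteps this entirely because its recursion contracts at the exact population rate.
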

% \Bo{more details}

\begin{proof}
We define $\tilde{w}$ which satisfies the condition
\footnotesize
\begin{align}
    & \tilde{w} \!=\!  \left(\mathbb{E}_{\nu} \left[\phi(s, a) \phi(s, a)^\top\right]\right)^{-1} \label{eq:tilde_w_defn} \\
    & 
    \resizebox{0.8\hsize}{!}{$
    \left(\mathbb{E}_{\nu}\left[\!\phi(s, a) \!\left(r(s, a)\! +\! \gamma\! \mathbb{E}_{(s^\prime, a^\prime) \sim P(s, a) \times \pi} \!\left[\phi(s^\prime,\! a^\prime)^\top \tilde{w}\right]\!\right)\!\right]\right),$}  \nonumber
\end{align}
\normalsize
and let $\tilde{Q}_\Phi^\pi(s, a) = \phi(s, a)^\top \tilde{w}$. It is straightforward to see that $\tilde{w}$ is the fixed point of the population (\ie, $n\to\infty$) projected least square update \eqref{eq:projected_least_square}. 
% with $\|\tilde{w}\| = O(\sqrt{m} \Upsilon_2^{-1})$. 
For notational convenience, we drop the $\Phi$ in the subscript of $\Pi_{\nu,\Phi}$. 
With the update \eqref{eq:projected_least_square}, we have that 
\footnotesize
\begin{align*}
    \left\|\Phi(\tilde{w} - \hat{w}_{t+1})\right\|_{\nu} \leq &  \gamma \left\|\Phi(\tilde{w} - \hat{w}_{t})\right\|_{\nu} + \left\|\left(\Pi_{\nu} - \hat{\Pi}_{\nu}\right) r\right\|_{\nu} \nonumber\\
    & + \gamma \left\|\left(\Pi_{\nu} P^{\pi} - \hat{\Pi}_{\nu} \hat{P}^\pi\right) \Phi \hat{w}_t\right\|_{\nu},
\end{align*}
\normalsize
where we use the contractivity under $\|\cdot\|_{\nu}$. Telescoping gives
\footnotesize
\begin{align*}
    \left\|\Phi(\tilde{w}\! -\! \hat{w}_{T})\right\|_{\nu} \leq & \gamma^T \left\|\Phi(\tilde{w} - \hat{w}_{0})\right\|_{\nu}\! + \!\frac{1}{1-\gamma}\left\|\left(\Pi_{\nu}\! -\! \hat{\Pi}_{\nu}\right)\! r\right\|_{\nu} \\
    & + \frac{\gamma}{1-\gamma} \max_{t\in [T]}\left\|\left(\Pi_{\nu} P^{\pi} - \hat{\Pi}_{\nu} \hat{P}^\pi\right) \Phi \hat{w}_t\right\|_{\nu}.
\end{align*}
\normalsize
With the concentration of the second and third terms as shown in~\appref{appendix:stat_error}, we conclude the proof.
\end{proof}
% \vspace{-3.5mm}
Proposition \ref{thm:stat_error} shows the statistical error of the linear parts can be decomposed into two parts. The first part $\gamma^T \left\|\tilde{Q}^\pi\right\|_{\nu}$ is due to the fact that we start from $\hat{w}_0 = 0$ and shrinks as the number of least square policy evaluation iterations $T$ increases. The second part $\frac{C \tilde{g}_\alpha^6 m^3 \mathrm{polylog}(m, T/\delta)}{(1-\gamma) \Upsilon_1^2\Upsilon_2 \sqrt{n}}$ denotes the statistical error due to the finite sample and shrinks as the number of samples $n$ increases. We can balance the two parts with $T = \Theta(\log n)$ and obtain an estimation error that shrinks as $O(n^{-1/2})$ with high probability.

\textbf{Total Error for Policy Evaluation:} Combining the approximation error in~\propref{thm:approx_error} and~\propref{thm:approx_error_of_nystrom} and the statistical error in~\propref{thm:stat_error}, we have the following bound on the error for least square policy evaluation:
\begin{theorem}
% \vspace{-0.5em}
    \label{thm:total_error_evaluation}
    Let $T = \Theta(\log n)$. With probability at least $1-\delta$, we have that for the random features $\Phi_{\mathrm{rf}}$,
    \footnotesize
    \begin{align}
    \label{eq:policy_eval_error_rf}
        \left\|Q^\pi - \hat{Q}_{\Phi_{\mathrm{rf}}, T}^\pi\right\|_{\nu} = \tilde{O} \left(\underbrace{\frac{\tilde{g}_{\alpha}}{(1\!-\!\gamma)^2\sqrt{m}}}_{\mathrm{approx.\ error}} + \underbrace{\frac{\tilde{g}_\alpha^6 m^3}{(1\!-\!\gamma) \Upsilon_1^2\Upsilon_2 \sqrt{n}}}_{\mathrm{stat.\ error}}\right), 
    \end{align}
    \normalsize
    and for the Nystr\"om features
    \footnotesize
    \begin{align}
        &\left\|Q^\pi - \hat{Q}_{\Phi_{\mathrm{nys}}, T}^\pi\right\|_{\nu} = \tilde{O} \left(\underbrace{\frac{\tilde{g}_{\alpha}}{(1-\gamma)^2 n_{\mathrm{nys}}}}_{\mathrm{approx.\ error}} + \underbrace{\frac{\tilde{g}_\alpha^6 \revision{n_{\mathrm{nys}}^3/\lambda_m^3}}{(1-\gamma) \Upsilon_1^2\Upsilon_2 \sqrt{n}}}_{\mathrm{stat.\ error}}\right).  \label{eq:policy_eval_error_nystrom}
    \end{align}
    \normalsize
\end{theorem}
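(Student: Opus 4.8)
The plan is to obtain both bounds by a single triangle-inequality decomposition, combining the approximation-error propositions with the statistical-error proposition already proved above. Fix a policy $\pi$ encountered by the algorithm, let $\Phi \in \{\Phi_{\mathrm{rf}}, \Phi_{\mathrm{nys}}\}$, and write
\[
\left\|Q^\pi - \hat{Q}_{\Phi,T}^\pi\right\|_\nu \;\le\; \underbrace{\left\|Q^\pi - \tilde{Q}_\Phi^\pi\right\|_\nu}_{\text{approx.}} \;+\; \underbrace{\left\|\tilde{Q}_\Phi^\pi - \hat{Q}_{\Phi,T}^\pi\right\|_\nu}_{\text{stat.}},
\]
with $\tilde{Q}_\Phi^\pi$ the projected-Bellman solution of \eqref{eq:tilde_Q_phi_general_defn}. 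The first term is exactly what \propref{thm:approx_error} bounds (by $\tilde{O}(\tilde{g}_\alpha/((1-\gamma)^2\sqrt m))$) in the random-feature case, and what \propref{thm:approx_error_of_nystrom} bounds (by $\tilde{O}(\tilde{g}_\alpha/((1-\gamma)^2 n_{\mathrm{nys}}))$, which is also $\le \tilde{O}(\tilde{g}_\alpha/((1-\gamma)^2 m))$ since $m \le n_{\mathrm{nys}}$) in the Nystr\"om case. The second term is bounded by \propref{thm:stat_error} as $\gamma^T\|\tilde{Q}_\Phi^\pi\|_\nu + \frac{C\tilde{g}_\alpha^6 m^3\,\mathrm{polylog}(m,T/\delta)}{(1-\gamma)\Upsilon_1^2\Upsilon_2\sqrt n}$, valid for $n$ large enough.

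It then remains to dispose of the residual term $\gamma^T\|\tilde{Q}_\Phi^\pi\|_\nu$ produced by running only $T$ least-squares iterations. Here I would argue $\|\tilde{Q}_\Phi^\pi\|_\nu \le \|Q^\pi\|_\nu + \|Q^\pi - \tilde{Q}_\Phi^\pi\|_\nu$; the first summand is $\le \|Q^\pi\|_\infty = O((1-\gamma)^{-1})$ under Assumption~1, and the second is controlled by the approximation-error bound, so altogether $\|\tilde{Q}_\Phi^\pi\|_\nu = O(\mathrm{poly}((1-\gamma)^{-1}))$ uniformly in $\pi$. Since $\gamma^T = \gamma^{\Theta(\log n)}$, choosing the absolute constant in $T = \Theta(\log n)$ large enough (e.g.\ $T \ge \tfrac{1}{2}\log n/\log(1/\gamma)$) makes $\gamma^T \le n^{-1/2}$, whence $\gamma^T\|\tilde{Q}_\Phi^\pi\|_\nu = O(\mathrm{poly}((1-\gamma)^{-1})\,n^{-1/2})$ is dominated by the stated $n^{-1/2}$ statistical-error term; the $\mathrm{polylog}(m,T/\delta)$ factor is likewise swallowed into $\tilde{O}(\cdot)$.

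Finally I would union-bound over the two high-probability events — the feature-draw event underlying the approximation-error proposition (over $\{\omega_i\}$ for random features, over $\{x_i\}$ for Nystr\"om) and the sample-draw event underlying \propref{thm:stat_error} (over $\{(s_i,a_i,s_i',a_i')\}$) — which are independent; taking each at confidence $1-\delta/2$ only changes constants hidden in $\tilde{O}(\cdot)$. Summing the two bounds yields \eqref{eq:policy_eval_error_rf} and \eqref{eq:policy_eval_error_nystrom}. The proof is mostly bookkeeping; the one step needing genuine (if brief) care is the absorption argument above — verifying that the finite-$T$ geometric residual $\gamma^T\|\tilde{Q}_\Phi^\pi\|_\nu$ is pushed below the $1/\sqrt n$ statistical floor by the choice $T = \Theta(\log n)$ while its $(1-\gamma)^{-1}$ dependence stays benign — since everything else follows immediately from the two cited propositions.
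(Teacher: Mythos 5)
Your proposal is correct and follows exactly the paper's argument: the same triangle-inequality split into the approximation term (bounded by Propositions~\ref{thm:approx_error} and~\ref{thm:approx_error_of_nystrom}) and the statistical term (bounded by Proposition~\ref{thm:stat_error}). Your extra care in absorbing the $\gamma^T\|\tilde{Q}_\Phi^\pi\|_\nu$ residual via $T=\Theta(\log n)$ and in union-bounding the two high-probability events is sound and merely makes explicit what the paper leaves implicit.
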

\begin{proof}
    This directly comes from the triangle inequality, \ie,
    \footnotesize
    \[
    \nbr{Q^\pi - \hat{Q}_{\Phi,T}^\pi}_\nu \le 
    \nbr{Q^\pi - \tilde{Q}_\Phi^\pi}_\nu +  \nbr{\tilde{Q}_\Phi^\pi - \hat{Q}_{\Phi,T}^\pi}_\nu. 
    \]
    \normalsize
    The first approximation error term is bound using \propref{thm:approx_error} and \propref{thm:approx_error_of_nystrom} for random and Nystr\"om features respectively, while the second statistical error term can be bound using~\propref{thm:stat_error}, and \revision{applying the result from Appendix G that for random features, $\sup_{s,a}\|\phi(s,a)\| = O(\tilde{g}_\alpha \sqrt{m})$, and that for Nystrom features, $\sup_{s,a} \| \phi(s,a)\| = O(\tilde{g}_\alpha \sqrt{\frac{n_{\mathrm{nys}}}{\lambda_m}})$, where $\lambda_m$ is the $m$-th largest eigenvalue of the Gram matrix $K^{(n_{\mathrm{nys}})}$}.
    % \qed
\end{proof}
% \lina{add words to explain the results}
% \vspace{-2mm}
Theorem~\ref{thm:total_error_evaluation} provides the estimation error of $Q$ with least square policy evaluation under the $\|\cdot\|_{\nu}$ norm for both the random and Nystr\"om features. 
% It can be used to provide an estimation error of $J^\pi$ shown in the following corollary. 
Meanwhile, it also performs as a cornerstone of control optimality analysis, as we will show in the next section. The bound also reveals a fundamental tradeoff between the approximation error and statistical error: for random features, a larger number $m$ of features will make the finite kernel truncation capable of approximating the original infinite-dimensional function space better (cf. the $\tilde{O}\left( \frac{1}{\sqrt{m}}\right)$ approximation error term in the RHS of \eqref{eq:policy_eval_error_rf}) but it also requires a larger number of learning samples $n$ in the policy evaluation step in order to train the weights well (cf. the $\tilde{O}\left( \frac{m^3}{\sqrt{n}}\right)$ statistical error term in the RHS of \eqref{eq:policy_eval_error_rf}). While the statistical error term in Nystr\"om features remains the same, the approximation error term here (cf. \eqref{eq:policy_eval_error_nystrom}) improves to $\tilde{O}(1/n_{\mathrm{nys}}) \leq \tilde{O}(1/m)$ (which holds since the number of features we pick, $m$, is always less than $n_{\mathrm{nys}}$ which is the number of samples used to obtain the Nystr\"om features), smaller than the $\tilde{O}(1/\sqrt{m})$ approximation error term in \eqref{eq:policy_eval_error_rf}.  
\subsection{Error Analysis of Natural Policy Gradient for Control}\label{subsec:control_analysis}
%%%%---------------------------------------------------

We now provide the error analysis for policy optimization. We start with the following "regret lemma" that is adapted from Lemma 34 in \cite{agarwal2021theory}. We remark that in the following proof we assume the action space $\revision{\Acal}$ is finite for simplicity, however, the proposed~\algabb is applicable for infinite action space without any modification. 
% \lina{in the problem introduction, we (I) intentionally not talk about whether $A$ is finite or continuous. Results in this section assuming $A$ is finite. We should add some introduction words saying that we provide analysis for finite $A$. But our proposed embedding could still be applied to continuous $A$ by combining with techniques....}

% \Bo{add mirror descent rate proof.}

% \vspace{-1mm}
\begin{lemma}[\revision{cf. Lemma 34 in \cite{agarwal2021theory}}]
% \vspace{-0.5em}
% \Bo{@tongzheng, no need to shrink, we are in 8 pages now.}
\label{lem:regret}
\revision{Let $\phi_{\sup} := \sup_{s,a} \|\phi(s,a)\|$.} For a fixed comparison policy $\tilde{\pi}$, we have
\begin{align}
\label{eq:v_min_gap}
\resizebox{0.88\hsize}{!}{$
    \min\limits_{k < K} \left\{V^{\tilde{\pi}} - V^{\pi_k}\right\} = O\left(\frac{1}{1-\gamma}\left(\frac{\log|\mathcal{A}|}{\eta K} + \frac{\eta \revision{\phi_{\sup}^4}}{\Upsilon_2^2} + \sum_{k=0}^{K-1}\frac{\mathrm{err}_k}{K}\right)\right),$}
\end{align}
% \vspace{-2mm}
where (noting that $A^{\pi_k}(s,a) := Q^{\pi_k}(s,a) - V^{\pi_k}(s)$, and $\hat{w}_{k,T}$ is defined in \eqref{eq:projected_least_square}) $\mathrm{err}_k$ is defined as
\begin{align*}
\resizebox{1.0\hsize}{!}{$
    \mathrm{err}_k \defeq \mathbb{E}_{s\sim d^{\tilde{\pi}}, a\sim \tilde{\pi}(\cdot|s)}\left[A^{\pi_k}(s, a) - \hat{w}_{k, T}^{\top}\left(\phi(s, a) - \mathbb{E}_{a^\prime\sim\pi_k(s)}\left[\phi(s, a^\prime)\right]\right)\right]$}.
\end{align*}
%\begin{align*}
%\end{align*}
\end{lemma}
% \lina{If I just read Lemma 9, it is unclear to me what is $\hat{w}$. Moreoever, in the proof, it uses $\tilde{w}$. Can we make the lemma 9 more readable and self-contained?}
\begin{proof}
% \lina{proof sketches}
    We provide a brief sketch here. 
    Note that $\|\phi(s, a)\|_2 = O(\revision{\phi_{\sup}})$. 
    Thus, by Remark 6.7 in \cite{agarwal2021theory}, we know that $\log \pi(a|s)$ is smooth with the smoothness parameter $\beta = O(\revision{\phi_{\sup}^2})$, with which we have
    \footnotesize
    \begin{align*}
        & \mathbb{E}_{s\sim d^{\tilde{\pi}}} \left[\mathrm{KL}(\tilde{\pi}(\cdot|s)\|\pi_k(\cdot|s)) -\mathrm{KL}(\tilde{\pi}(\cdot|s)\|\pi_{k+1}(\cdot|s)) \right]\\
        = & \mathbb{E}_{s\sim d^{\tilde{\pi}}, a\sim \tilde{\pi}(\cdot|s)} \left[\log\frac{\pi_{k+1}(a|s)}{\pi_k(a|s)}\right]\\
        \geq & \eta\mathbb{E}_{s\sim d^{\tilde{\pi}}, a\sim \tilde{\pi}(\cdot|s)}\left[\left(\nabla_{\theta} \log \pi_k(a|s)\right)^\top \hat{w}_{k}^T\right] - \frac{\eta^2 \beta}{2}\left\|\hat{w}_k^T\right\|_2^2\\
        = & \eta\mathbb{E}_{s\sim d^{\tilde{\pi}}, a\sim \tilde{\pi}(\cdot|s)}\left[A^{\pi_k}(s, a)\right] - \frac{\eta^2 \beta}{2}\left\|\hat{w}_k^T\right\|_2^2\\
        & + \eta\mathbb{E}_{s\sim d^{\tilde{\pi}}, a\sim \tilde{\pi}(\cdot|s)}\left[\left(\nabla_{\theta} \log \pi_k(a|s)\right)^\top \hat{w}_{k}^T - A^{\pi_k}(s, a)\right] \\
        = & (1-\gamma) \eta \left(V^{\tilde{\pi}} - V^{\pi_k}\right) - \frac{\eta^2 \beta}{2}\left\|\hat{w}_k^T\right\|_2^2 - \eta \mathrm{err}_k,
    \end{align*}
    \normalsize
    where second inequality comes from smoothness condition and the last step comes from the performance difference lemma (Lemma 3.2 in \cite{agarwal2021theory}). We can then complete the proof by a telescoping sum, and also use the fact that for sufficiently large number of learning samples $n$, the learnt $\hat{w}_{k,T} \approx \tilde{w}$ (where $\tilde{w}$ is defined in \eqref{eq:tilde_w_defn}) and  that $\|\tilde{w}\| = O(\phi_{\sup}\Upsilon_2^{-1})$.
    More details are provided in \appref{appendix:regret}.
\end{proof}
% \vspace{-2.5mm}
Given an arbitrary policy $\tilde{\pi}$, for $K$ steps of the policy update step in \algabb (yielding $\{V^{\pi_k}\}_{k=1}^K$), this lemma bounds a difference term where the largest of the $\{V^{\pi_k}\}_{k=1}^K$'s is subtracted from $V^{\tilde{\pi}}$. We note that the first term of the bound in \eqref{eq:v_min_gap} decreases at the rate of $O\rbr{K^{-1}}$, with the second term in the bound in \eqref{eq:v_min_gap} coming from the smoothness parameter (see \appref{appendix:regret} for details) and the third term in the bound in \eqref{eq:v_min_gap} arising from the estimation error in the policy evaluation step.
% this lemma characterizes the minimum gap between 
% This lemma characterizes the gap between the closest (of $K$ step policy improvement solution to an arbitrary policy. \textcolor{green}{The gap is decreasing in the rate of $O\rbr{K^{-1}}$ with extra terms coming from the smoothness condition and estimation error accumulation, respectively.} 
% \lina{should revise this part as it will cause confusion. Since $\tilde{pi}$ is an arbitrary policy, "the gap is descreasing'' itself is not accurate because the gap can increase if $\tilde{pi}$ is a bad policy. I think you mean the first term. So revise this discussion to make it more clear. Maybe just go ahead to say, the first term, the second term and the last term.}\zhaolin{I changed this a little bit. Please see if it makes sense.} 
We can eventually balance the first and second terms to select the optimal stepsize $\eta$, while the third term is irreducible and inherited from the policy evaluation step. Now, we can provide the performance guarantee for control optimality with natural policy gradient.
\begin{theorem}\label{thm:opt_policy}
% \vspace{-0.5em}
    Let \small $\eta = \Theta\left(\Upsilon_2 \tilde{g}_\alpha^{-2} m^{-1} \sqrt{\log|\mathcal{A}|}\right)$ \normalsize \revision{for random features and \small$\eta = \Theta\left(\Upsilon_2 \tilde{g}_\alpha^{-2} n_{\mathrm{nys}}^{-1}\lambda_m \sqrt{\log|\mathcal{A}|}\right)$ \normalsize  for Nystrom features}. With high probability, for the $V_{\Phi_{\mathrm{rf}}}^{\pi_k}$ learnt using random features $\Phi_{\mathrm{rf}}$, we have
    \begin{align}
        & 
        \resizebox{0.4\hsize}{!}{$\min_{k < K} \left\{V^{\pi^*} - V_{\Phi_{\mathrm{rf}}}^{\pi_k}\right\}
        = $}\\
        &\resizebox{0.96\hsize}{!}{$\tilde{O}\left(\frac{\tilde{g}_\alpha^2 m}{\Upsilon_2}\sqrt{\frac{\log|\mathcal{A}|}{K}} 
        + \frac{1}{1-\gamma} \sqrt{\max_{s, a, \pi, k} \frac{d^{\pi^*}(s)\pi(a|s)}{\nu_{\pi_k}(s, a)}} \left(\frac{\tilde{g}_{\alpha}}{(1\!-\!\gamma)^2\sqrt{m}} + \frac{\tilde{g}_{\alpha}^6 m^3}{(1\!-\!\gamma) \Upsilon_1^2\Upsilon_2 \sqrt{n}}\right)\right)$}.\nonumber 
    \end{align}
    Meanwhile for the $V_{\Phi_{\mathrm{nys}}}^{\pi_k}$ learnt using Nystr\"om features $\Phi_{\mathrm{nys}}$, we have
    \begin{align}
        & 
        \resizebox{0.4\hsize}{!}{$\min_{k < K} \left\{V^{\pi^*} - V_{\Phi_{\mathrm{nys}}}^{\pi_k}\right\}
        = $}\\
        &\resizebox{0.96\hsize}{!}{$\tilde{O}\left(\revision{\frac{n_{\mathrm{nys}} \lambda_m^{-1} \tilde{g}_\alpha^2 }{\Upsilon_2}}\sqrt{\frac{\log|\mathcal{A}|}{K}} 
        + \frac{1}{1-\gamma} \sqrt{\max_{s, a, \pi, k} \frac{d^{\pi^*}(s)\pi(a|s)}{\nu_{\pi_k}(s, a)}} \left(\frac{\tilde{g}_{\alpha}}{(1-\gamma)^2 n_{\mathrm{nys}}} + \frac{\tilde{g}_{\alpha}^6 \revision{n_{\mathrm{nys}}^3/\lambda_m^3)}}{(1-\gamma) \Upsilon_1^2\Upsilon_2 \sqrt{n}}\right)\right)$}.\nonumber 
    \end{align}
\end{theorem}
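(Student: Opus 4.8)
The plan is to instantiate the ``regret lemma'' (Lemma~\ref{lem:regret}) at the comparator $\tilde\pi = \pi^*$ and to couple it with the policy evaluation guarantee of Theorem~\ref{thm:total_error_evaluation}. Setting $\tilde\pi = \pi^*$ in \eqref{eq:v_min_gap} gives
\[ \min_{k<K}\left\{V^{\pi^*} - V^{\pi_k}\right\} = O\!\left(\frac{1}{1-\gamma}\left(\frac{\log|\mathcal{A}|}{\eta K} + \frac{\eta\tilde{g}_\alpha^4 m^2}{\Upsilon_2^2} + \frac1K\sum_{k=0}^{K-1}\mathrm{err}_k\right)\right). \]
The first two terms are the pure optimization error; since one is decreasing in $\eta$ and the other increasing, choosing the stepsize to balance them (so that $\eta$ scales as $\Theta(\Upsilon_2\tilde{g}_\alpha^{-2}m^{-1}\sqrt{\log|\mathcal{A}|/K})$) collapses them to the first displayed term $\tilde{O}\!\big(\tfrac{\tilde{g}_\alpha^2 m}{\Upsilon_2}\sqrt{\tfrac{\log|\mathcal{A}|}{K}}\big)$, and this part is identical for random and Nystr\"om features. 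The remaining work is to bound the average compatible-function-approximation error $\tfrac1K\sum_k \mathrm{err}_k$ by the policy evaluation error that Theorem~\ref{thm:total_error_evaluation} already controls.

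For the $\mathrm{err}_k$ term I would first rewrite it by using $A^{\pi_k}(s,a) = Q^{\pi_k}(s,a) - V^{\pi_k}(s)$ together with the identity $\hat{w}_{k,T}^\top\!\big(\phi(s,a) - \mathbb{E}_{a'\sim\pi_k(s)}\phi(s,a')\big) = \hat{Q}^{\pi_k}_{\Phi,T}(s,a) - \mathbb{E}_{a'\sim\pi_k(s)}\hat{Q}^{\pi_k}_{\Phi,T}(s,a')$, which yields
\[ \mathrm{err}_k = \mathbb{E}_{s\sim d^{\pi^*},\,a\sim\pi^*}\!\left[Q^{\pi_k} - \hat{Q}^{\pi_k}_{\Phi,T}\right] - \mathbb{E}_{s\sim d^{\pi^*},\,a\sim\pi_k}\!\left[Q^{\pi_k} - \hat{Q}^{\pi_k}_{\Phi,T}\right]. \]
Each of the two expectations is then handled by Jensen's inequality followed by a change of measure from $d^{\pi^*}\times\pi'$ (with $\pi'\in\{\pi^*,\pi_k\}$) to the sampling distribution $\nu_{\pi_k}$ used in the policy evaluation step, bounding it by $\sqrt{\max_{s,a}\tfrac{d^{\pi^*}(s)\pi'(a|s)}{\nu_{\pi_k}(s,a)}}\,\|Q^{\pi_k}-\hat{Q}^{\pi_k}_{\Phi,T}\|_{\nu_{\pi_k}}$; this is exactly why the concentrability coefficient in the final bound is the worst case over $\pi\in\{\pi^*,\pi_k\}$ and all $k$. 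Taking a union bound so that Theorem~\ref{thm:total_error_evaluation} holds simultaneously for every policy $\pi_k$ visited over the $K$ iterations (each failing with probability $\delta/K$), averaging over $k$, and substituting the relevant (random-feature or Nystr\"om) policy evaluation bound produces the second term of each displayed bound, with the extra $\tfrac{1}{1-\gamma}$ inherited from Lemma~\ref{lem:regret}. Finally, I would convert the high-probability statement to the claimed statement in expectation by noting that on the $\delta$-probability failure event every value gap is at most $O((1-\gamma)^{-1})$, so taking $\delta$ polynomially small contributes only a negligible additive term; the two displays then differ only through which line of Theorem~\ref{thm:total_error_evaluation} is plugged in.

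I expect the main obstacle to be the careful bookkeeping in the $\mathrm{err}_k$ step: the fitted-advantage error must be split into a $Q$-error piece measured under $d^{\pi^*}\times\pi^*$ and a baseline ($V$-)error piece measured under $d^{\pi^*}\times\pi_k$ (because the subtracted term is an expectation over $a\sim\pi_k$), and only the latter's density ratio involves $\pi_k$ in the numerator, so obtaining the coefficient precisely as $\max_{s,a,\pi,k}\tfrac{d^{\pi^*}(s)\pi(a|s)}{\nu_{\pi_k}(s,a)}$ hinges on tracking this distinction rather than bounding crudely. A secondary point worth care is that the stated stepsize should be read as the one balancing the $O(1/(\eta K))$ and $O(\eta)$ optimization terms, after which both feature constructions share the same $\tilde{O}(\sqrt{1/K})$ optimization rate and the entire difference between the random-feature and Nystr\"om guarantees is carried by the approximation-error term ($\tilde{O}(1/\sqrt{m})$ versus $\tilde{O}(1/n_{\mathrm{nys}})$) imported from Theorem~\ref{thm:total_error_evaluation}.
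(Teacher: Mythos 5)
Your proposal is correct and follows essentially the same route as the paper: instantiate Lemma~\ref{lem:regret} at $\tilde{\pi}=\pi^*$, balance the two optimization terms via the stepsize (correctly reading the stated $\eta$ as carrying a $1/\sqrt{K}$ factor), and control $\mathrm{err}_k$ by rewriting the fitted advantage as a difference of $Q$-errors under $d^{\pi^*}\times\pi^*$ and $d^{\pi^*}\times\pi_k$, then applying Jensen and a change of measure to $\nu_{\pi_k}$. The only cosmetic difference is that the paper splits $\mathrm{err}_k$ through the intermediate $\tilde{w}$ into separate approximation and statistical pieces before invoking Propositions~\ref{thm:approx_error}/\ref{thm:approx_error_of_nystrom} and~\ref{thm:stat_error}, whereas you bound the combined error directly via Theorem~\ref{thm:total_error_evaluation}; these are equivalent.
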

% \lina{proof sketch}
\begin{proof}
This is proved by setting $\tilde{\pi} := \pi^*$ in~\lemref{lem:regret}, selecting the appropriate $\eta$ as our choice in the lemma, and characterizing the term $\mathrm{err}_k$ as in~\appref{appendix:opt_policy}. \revision{Finiteness of the policy distribution ratio is ensured by the coverage conditions in Assumption 3.}
% We first make the following decomposition:
%     \begin{align*}
%         \mathrm{err}_k = & \mathbb{E}_{s\sim d^{\tilde{\pi}}, a\sim \tilde{\pi}(\cdot|s)}\left[Q^{\pi_k}(s, a) - \tilde{Q}^{\pi_k}(s, a)\right] \\
%         & - \mathbb{E}_{s\sim d^{\tilde{\pi}}, a\sim \pi_k(\cdot|s)}\left[Q^{\pi_k}(s, a) - \tilde{Q}^{\pi_k}(s, a)\right]\\
%         & + \mathbb{E}_{s\sim d^{\tilde{\pi}}, a\sim \tilde{\pi}(\cdot|s)}\left[\tilde{Q}^{\pi_k}(s, a) - \hat{Q}^{\pi_k}_T(s, a)\right] \\
%         & - \mathbb{E}_{s\sim d^{\tilde{\pi}}, a\sim \pi_k(\cdot|s)}\left[\tilde{Q}^{\pi_k}(s, a) - \hat{Q}^{\pi_k}_T(s, a)\right]
%     \end{align*}
%    With an application of triangle inequality, Jensen's inequality $\mathbb{E}[X] \leq \sqrt{\mathbb{E}[X^2]}$ and a change of measure, we have concluded the proof. 
% \qed
\end{proof}
% \lina{add words to explain the results}
% \vspace{-1mm}
We emphasize that~\thmref{thm:opt_policy} characterizes the gap between optimal policy and the solution provided by~\algabb, taking in account of finite-step in policy optimization ($K$), finite-dimension ($m$) in the feature space, finite number of samples used to generate Nystr\"om features $(n_{\mathrm{nys}})$ and finite-sample ($n$) approximation in policy evaluation, respectively, which, to the best of our knowledge, is established for the first time. 
% \lina{I like that you remind the meaning of each notations here. Can we do similar things for the discussion after Theorem 1? Or try to add a short phrases in Theorem 1 to remind people what is the notation. I have added one for $m$ for Theorem 1.}
As we can see, for the random features, when $m$ increases, we can reduce the approximation error, but the optimization and sample complexity will also increase accordingly. For Nystr\"om features, this tradeoff may be better especially since the approximation term scales as $O(1/n_{\mathrm{nys}})$), and depending on the spectrum eigendecay, $m$ may be chosen to be significantly smaller than $n_{\mathrm{nys}}$. In either case, we can further balance the terms for the optimal dimension of features. 
\newrevision{Finally, we note that the final bound has a polynomial dependence on $\tilde{g}_{\alpha}$, where $\tilde{g}_\alpha := \sup_{s,a}(g_\alpha(f(s,a)))\alpha^{-d}$, where $g_\alpha(f(s,a)) := \exp\left(\frac{\alpha^2 \| f(s,a)\|^2}{2(1-\alpha^2)\sigma^2} \right)$. While this term scales exponentially with $\frac{c_f^2}{\sigma^2}$, where $c_f := \sup_{s,a}\|f(s,a)\|$,
% This might imply that $c_f$ has to be very small in order for the bounds to be meaningful. 
$\alpha$ can be picked to be $\tilde{O}(\sigma/c_f)$ to cancel out this exponential dependence, leading $\tilde{g}_\alpha$ to instead be on the order of $\alpha^{-d} = O\left(\frac{c_f^d}{\sigma^d} \right)$, which for small to medium dimensional problems can be significantly better than the dependence $\Omega\left(\exp\left(\frac{c_f^2}{\sigma^2}\right)\right)$.}
% in the regime when $\alpha$ is picked to be a constant.}
% \revision{Finally, we note that the final bound has a polynomial dependence on $\tilde{g}_{\alpha}$, which at first glance scales exponentially with $c_f$. However, $\alpha$ can be picked to be $\tilde{O}(1/c_f)$ to cancel out this dependence.}

% \vspace{-1em}
%%%%%%%%%%%%%%%%%%%%%%%%%%%%%%%%%%%%%%%%%%%%%%%%%%%%%%%%%%
 \vspace{-0.5em}
\section{Simulations}\label{sec:simulations}
 \vspace{-0.25em}
%%%%%%%%%%%%%%%%%%%%%%%%%%%%%%%%%%%%%%%%%%%%%%%%%%%%%%%%%%
% \lina{Zhaolin will write the details. We have finished the simulation for these three methods.}
\subsection{Details of SDEC implementation}
 \vspace{-0.25em}
In this section, we will empirically implement the SDEC and compare it with other methods. As we discussed in Remark~\ref{rem:sdec-policy}, the advantage of the proposed spectral dynamics embedding is that it is compatible with a wide range of dynamical programming/reinforcement learning methods, allowing us to take advantage of the computational tools recently developed in deep reinforcement learning. Though in \algabb we stick to a particular type of policy evaluation and policy update for the purpose of theory development,  
in our empirical implementation of SDEC, we combine spectral dynamical embedding with Soft Actor-Critic (SAC) \cite{haarnoja2018soft}, a cutting-edge deep reinforcement learning method that has demonstrated strong empirical performance.
% \lina{Bo and zhaolin, I wrote this part. Can you finish and polish what wrote?} 
Specifically, we use random features or Nystr\"om to parameterize the critic, and use this as the critic in SAC. In SAC, it is necessary to maintain a parameterized function $Q(s,a)$ which estimates the soft $Q$-value (which includes not just the reward but also an entropy term encouraging exploration). For a given policy $\pi$, the soft $Q$-value satisfies the relationship 
%\begin{align*}
$    Q^\pi(s_t,a_t) = r(s_t,a_t) + \gamma \mathbb{E}_{s_{t+1} \sim p}[V(s_{t+1})],$
%\end{align*}
where 
%\begin{align*}
$    V(s_t) = \mathbb{E}_{a_t \sim \pi} [Q(s_t,a_t) - \revision{\beta} \log \pi(a_t \mid s_t)].$
%\end{align*}
Based on the spectral dynamics embedding proposed in our paper, we parameterize the $Q^\pi$-function as 
%\begin{align*}
 $   Q^\pi(s,a) = r(s,a) + \tilde{\phi}_{\omega}(s,a)^\top \tilde{\theta}^\pi $
%\end{align*}
% \Bo{Why we define $L$ function here? Let's keep the notation consistent.}
% where $L(s') = \phi(s')^\top \theta_L$,
where $\tilde{\phi}_{\omega}(s,a)$ is our feature approximations. We use two types of feature approximations, random features and Nystr\"om features. The random features are composed by 
$\tilde{\phi}_{\omega}(s,a) = \begin{bmatrix}
\cos(\omega_1^\top s'+b_1), \cdots , \cos(\omega_m^\top s' + b_m)
\end{bmatrix}^\top,
$
where $s' = f(s,a)$, with $\{\omega_i\}_{i=1}^m\sim \Ncal(0,I_d)$  and $\{b_i\}$ drawn iid from $\mathrm{Unif}([0,2\pi])$.
% \footnote{While slightly different from the random features considered earlier in  \algabb (which comprises both cosines and sines), this is also a valid random features decomposition of the Gaussian kernel, see e.g. \cite{rahimi2007random}, and it is straightforward to use these features in our theoretical analysis. We presented the version with sines and cosines in the theoretical portions of the paper since it only depends on one random variable ($\omega$), which simplified the exposition.} 
The $\tilde{\theta}^\pi \in \mathbb{R}^m$ is updated using (mini-batch) gradient descent for~\eqref{eq:projected_least_square}. For the Nystr\"om feature, we first sample $m$ data $s_1,\dots,s_m$ uniformly in the state space to construct the corresponding Gram matrix $K\in\mathbb R^{m\times m}$. Then we do spectral decomposition of $K$ to get orthogonal eigenvectors $U$ and diagonal matrix $\Lambda$. The Nystr\"om features are composed by $
\tilde{\phi}_{\omega}(s,a) = [k(s',s_1)\dots,k(s',s_m)] U \Lambda^{-1/2}$. We then use this $Q$-function for the actor update in SAC. 
% which takes the form 
% \small
% \begin{align*}
%    \left(L(s'_{t})  - \left(r(s_{t+1},a_{t+1}) + \gamma \left(L_{\mathrm{target}}(s'_{t+1}) - \alpha  \log \pi( a' \mid s_{t+1})\right)\right)\right)^2 
% \end{align*}
% \Bo{same here. }
% \normalsize
% where $s'_{t} := T(s_t,a_t)$, and $s_{t+1}' := T(s_{t+1},a_{t+1})$. In addition, $a'$ is a random sample drawn from $\pi(\cdot \mid s_{t+1})$, and $L_{\mathrm{target}}$ (with smoothing coefficient $\tau = 0.005$) is a target version of $L$ used to stabilize the training (see \cite{haarnoja2018soft} for more details). Following the original SAC code implementation, we also maintain two independent estimates of $L$, $L_1$ and $L_2$, and set $L := \min(L_1,L_2)$ in both actor and critic updates. This technique is known to mitigate positive bias in the policy improvement step \cite{haarnoja2018soft}. 

% We then use this $Q$-function for the actor update in SAC. 
\vspace{-1mm}

%%%%-----------------------------------------------
 \vspace{-2mm}
\subsection{Experimental Setup}\label{subsec:pendulum_comparison}
 \vspace{-1mm}
%%%%-----------------------------------------------
% \lina{Haitong, once you finish the cartpole, please revise things accordingly.}
% \haitong{Revised!}
We implemented the proposed SDEC algorithm on four nonlinear control tasks, Pendulum swingup, pendubot balance, cartpole swingup, and 2D drone hovering. Pendulum and Cartpole are two classic nonlinear control tasks, the tasks are swinging up the pendulum to the upright position. Pendubot is another classic underactuated system where two rods are linked together with an un-acturated joint~\citep{spong1995pendubot}. The task is to balance the two rods in the upright position. The 2D drone task simplifies drones to a 2D plane with two motors. The task is hovering the drone at a fixed position. % More details on the 

 \vspace{-0.5em}

\subsection{Performance versus other nonlinear controllers}
\vspace{-1mm}

%We compare our algorithm against iterative LQR (iLQR), Koopman-based control, and energy-based control. 
 We compare \algabb with iterative LQR (iLQR)~\citep{sideris2005efficient} and Koopman-based control~\citep{korda2018linear},  Energy-based control~\citep{fantoni2000energy,aastrom2000swinging}, and \revision{soft actor-critic (SAC) \cite{haarnoja2018soft}}. \revision{We note that iLQR, the energy-based controller and SDEC all make use of the knowledge of the environment dynamics. DKUC and SAC do not make use of the environment dynamics.} The first two are well-known alternatives for nonlinear control, and the energy-based control is a Lyapunov-based method to design stabilizing controllers. 
 % \revision{SAC is a widely recognized state-of-the-art deep RL algorithm.} 
 For iLQR, we used the implementation in \cite{roulet2022iterative}, where we added a log barrier function to account for the actuator constraint.
Meanwhile, for the Koopman-based control, we adapted an implementation called Deep KoopmanU with Control (DKUC) proposed in the paper \cite{shi2022deep}, where we combine the dynamics learned from DKUC with MPC to enforce the input constraint. We modified the energy-based swing-up controller from \cite{aastrom2000swinging}. 
% To evaluate the performance, we computed the average total episodic reward for each algorithm across 400~(pendulum), 600~(Drones), or 750~(Pendubot and Cartpole) randomly chosen episodes, where each episode comprises 200 time steps. 
For SDEC and DKUC, we trained using 4 different random seeds, and both algorithms have access to the same amount of environment interaction. 
\begin{table*}[htbp]
\caption{Performance comparison. Bold numbers are the best average performance over all algorithms.}\label{tab:performance}
% \vspace{-5mm}
\begin{center}
\begin{tabular}{p{1.8cm}p{1.5cm}p{1.5cm}p{1.6cm}p{1.6cm}p{1.6cm}p{1.6cm}p{1.5cm}p{1.5cm}}
% {lcccccccccccc}
\toprule
& Pendulum & Pendulum & Pendubot & Pendubot  & 2D Drones & 2D Drones & Cartpole & Cartpole \\
&          & (Noisy)&                   &          (Noisy) &           &           (Noisy)&           &           (Noisy)\\
\midrule  
\multirow{4}{*}{}SDEC(RF)  & -279.0$\pm$175 & -240$\pm$173 & {-0.793$\pm$0.190} & -3.883$\pm$0.283 & -5.074$\pm$2.023 &  -7.423$\pm$2.549&{-212$\pm$116}&-211$\pm$93\\
SDEC(Nystr\"om)  & $\textbf{-245.0$\pm$174}$ & $\textbf{-225$\pm$156}$ & -2.635$\pm$1.072 & {-3.880$\pm$1.192} & -2.315$\pm$0.323 & $\textbf{-4.424$\pm$0.924 }$ &-216$\pm$118&\textbf{-207$\pm$90}\\
\hline
 iLQR & -1084$\pm$544 & -1191$\pm$415 & -429.6$\pm$113.8 & -1339$\pm$605  & $\textbf{-0.973$\pm$0.841}$  & 753.7$\pm$119.7 & -430$\pm$87.3&-705$\pm$59\\
 DKUC & -1090$\pm$436 & -1021$\pm$546 & -446.0$\pm $54.2 & -330.8$\pm $140.1 & -750.8$\pm$427.7 &  -867.0$\pm$94.9 & -486$\pm$189 & -774$\pm$309  \\
 Energy-based &-895.8$\pm $128 & -1604$\pm  $288 & -0.797$\pm $0.588 & -16.63$\pm$5.62 & -322.9$\pm$14.6 & -339.8$\pm$35.3 & -779$\pm$247 & -694$\pm$91\\
 SAC & -250$\pm$173 & -261$\pm$179 & \textbf{-0.295$\pm$0.132} & \textbf{-0.751$\pm$0.458} & -10.87$\pm$12.15 & -20.06$\pm$14.82 & \textbf{-206$\pm$100} & -214$\pm$96 \\
\bottomrule 
\end{tabular}

\label{table:pendulum_comparison}
\end{center}
\vspace{-5 mm}
\end{table*}   

\begin{figure}[htp]
    \centering
    \includegraphics[width=0.495\linewidth]{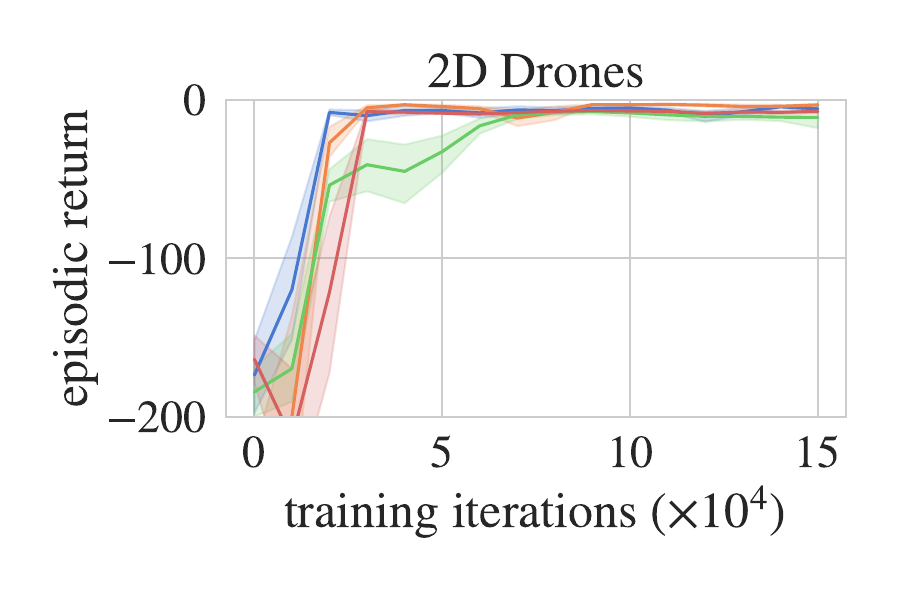}
    \includegraphics[width=0.49\linewidth]{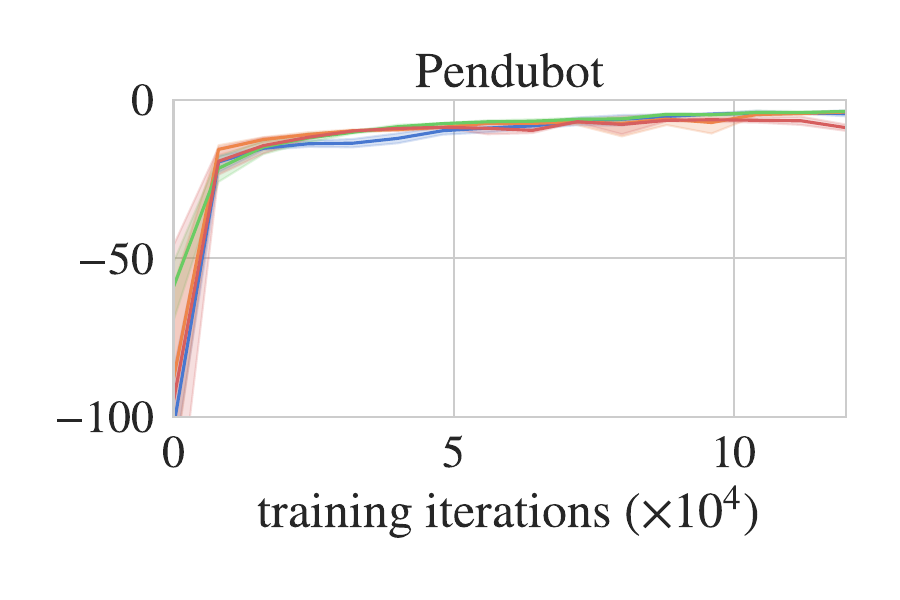}
    \includegraphics[width=0.8\linewidth]{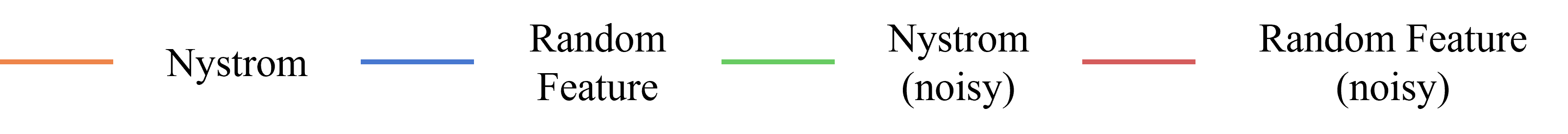}
    \caption{SDEC performances with random features and Nystr\"om on noisy and non-noisy environments. } 
    \label{fig:train_curves}
\end{figure}

\begin{figure}[htp]
% \begin{minipage}
% \vspace{-2mm}
\begin{center}
\includegraphics[height = 0.3\linewidth]{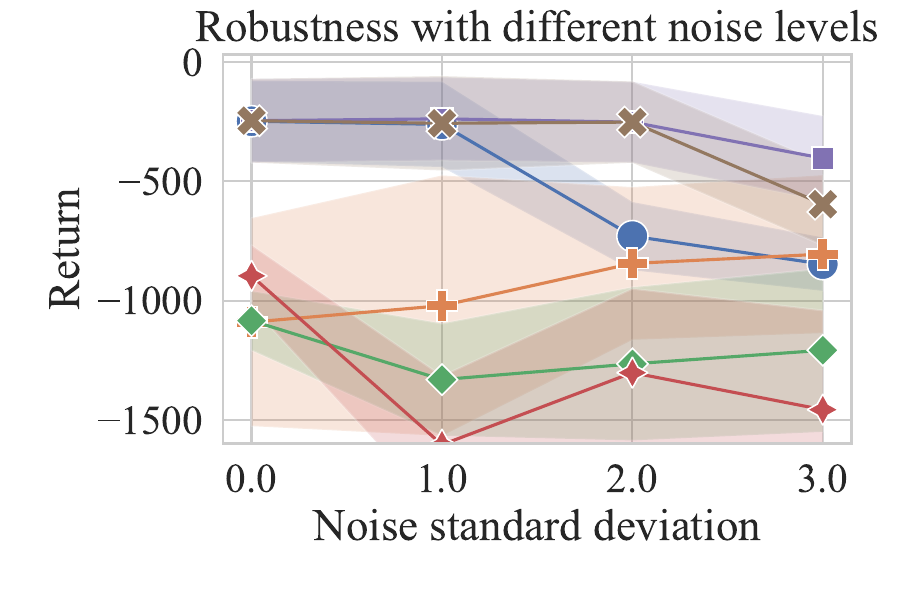}
\includegraphics[height = 0.3\linewidth]{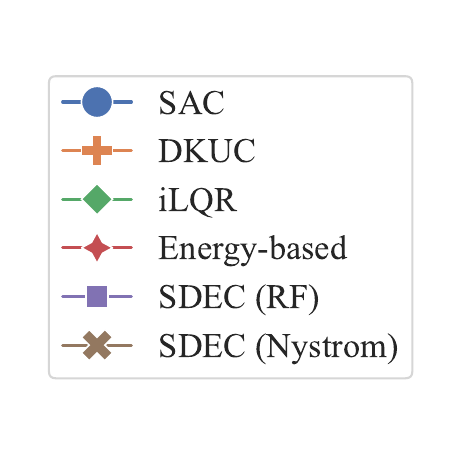}
% \end{minipage}
% \vspace{-3mm}
% \caption{SDEC performance change during the course of its learning, for varying dynamics noise levels $\sigma \in \{0,1,2,3\}$. The evaluation is performed every 25 learning episodes on a fresh evaluation set of 100 episodes, and the $y$-axis in Figure~\ref{fig:pendulum} represents the average episodic reward across these 100 evaluation episodes. We note that the $x$-axis starts at 150 as our implementation uses the first 125 episodes purely for data collection purposes. The shaded regions represent a 1 standard deviation confidence interval (across 4 random seeds).}
\caption{Performances of all algorithms on Pendulum with varying noise levels $\sigma \in \{0,1,2,3\}$. % The evaluation is performed every 25 learning episodes on a fresh evaluation set of 100 episodes, and the $y$-axis represents the average episodic reward on the evaluation set. % We note that the $x$-axis starts at 150 as our implementation uses the first 125 episodes purely for data collection purposes. 
 % The shaded regions represent a 1 standard deviation confidence interval (across 4 random seeds).
}
\label{fig:pendulum}
\end{center}
\vspace{-3mm}
\end{figure}
\begin{figure}[h]
    \centering
\includegraphics[width=0.48\linewidth]{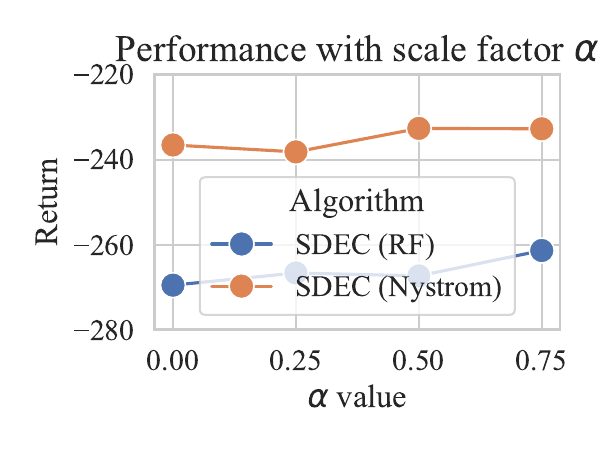}
\includegraphics[width=0.48\linewidth]{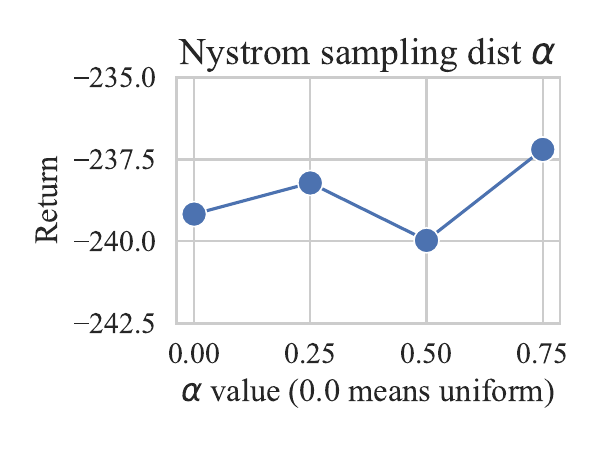}
\vspace{-8pt}
    \caption{SDEC performance on the Pendulum with different $\alpha$ in scaling $f(s, a)$ (left) and in Nystrom sampling distribution (right).}
    \label{fig:tuning_alpha}
\end{figure}
\begin{figure}[h]
    \centering
    \includegraphics[width=\linewidth]{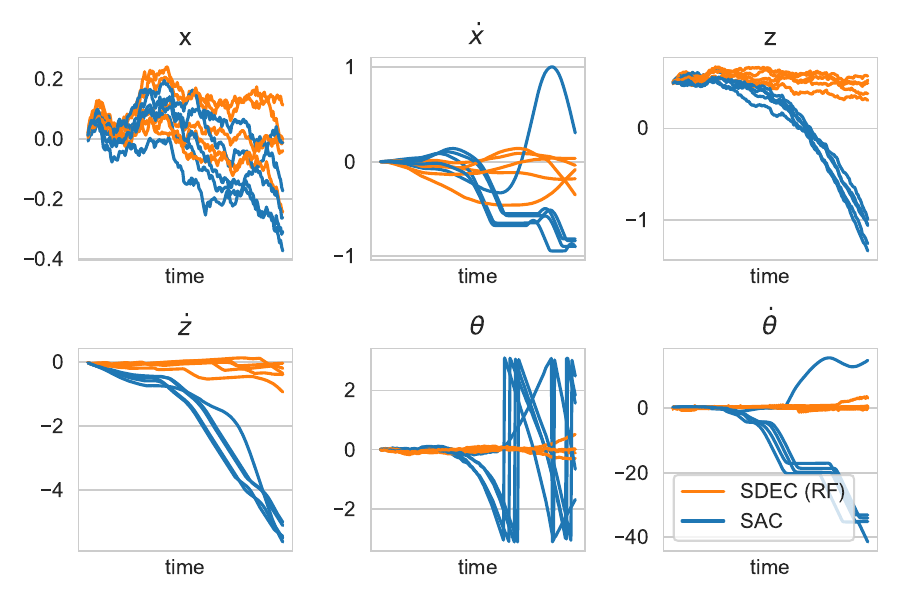}
    \caption{5 trajectories of 2D quadrotor controlled by SDEC and SAC.}
    \label{fig:quad_traj}
\end{figure}

The performance of the various algorithms in both the noiseless and noisy setting (with $\sigma = 1$) can be found in Table~\ref{table:pendulum_comparison}. 
% For algorithms with inherent stochasticity (SDEC and DKUC), t
The mean and standard deviation (in brackets) over 4 random seeds are shown. The number of random and Nystr\"om features used is 512 for pendulum, 4096 for pendubot and 2D Drones and 8192 for Cartpole. We observe that the SDEC strongly outperforms other nonlinear controllers in the noisy setting (higher reward is better; in this case the negative sign is due to the rewards representing negative cost). For the noiseless setting, SDEC are comparable to the best nonlinear baseline controllers.
For comparison with SAC, we observe different results depending on whether the system is fully actuated. For the pendulum and 2D drones where the systems are fully actuated, we observe significant advantages of the proposed SDEC over baselines, especially in noisy environments. For the Cartpole and pendubot that are underactuated, the proposed SDEC can still outperform other nonlinear control baselines but achieve comparable performance with SAC.
% The observation inspires us on how to use the model information in learning-based control algorithms. 
We further plot the performance of \algabb during the course of its learning in Figure~\ref{fig:train_curves} and \ref{fig:pendulum}. The evaluation is performed every 25 learning episodes on a fresh evaluation set of 100 episodes, and the y-axis represents the average episodic reward on the evaluation set. The shaded regions represent a 1 standard deviation confidence interval (across 4 random seeds). Figure~\ref{fig:train_curves} compares performance with  different features and noise levels. In both the noiseless and noisy settings, the performance of SDEC continuously improves until it saturated after a few episodes. Figure~\ref{fig:pendulum} shows the evaluation performance on the pendulum swingup task during training with noisier environments. The performance still continuously improves and then saturates. Performance with different finite approximation dimensions are compared in Figure~\ref{fig:dim}. As dimension increases, the performance improves. Nystr\"om features are consistently better than random features and benefit more from dimension increase, which aligns with our theoretical results. 

\begin{figure}[htbp]
    \centering
    \includegraphics[width=0.45\linewidth]{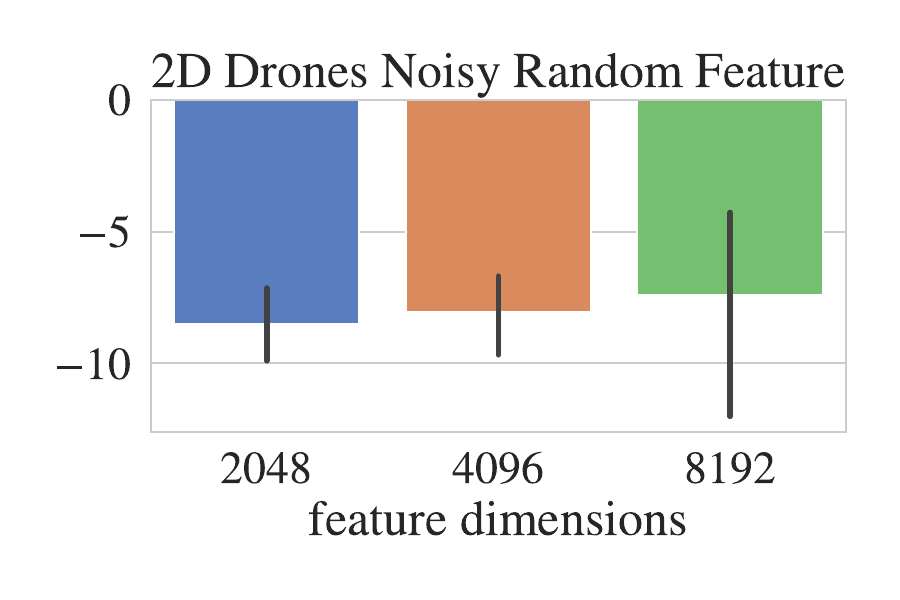}
    \includegraphics[width=0.45\linewidth]{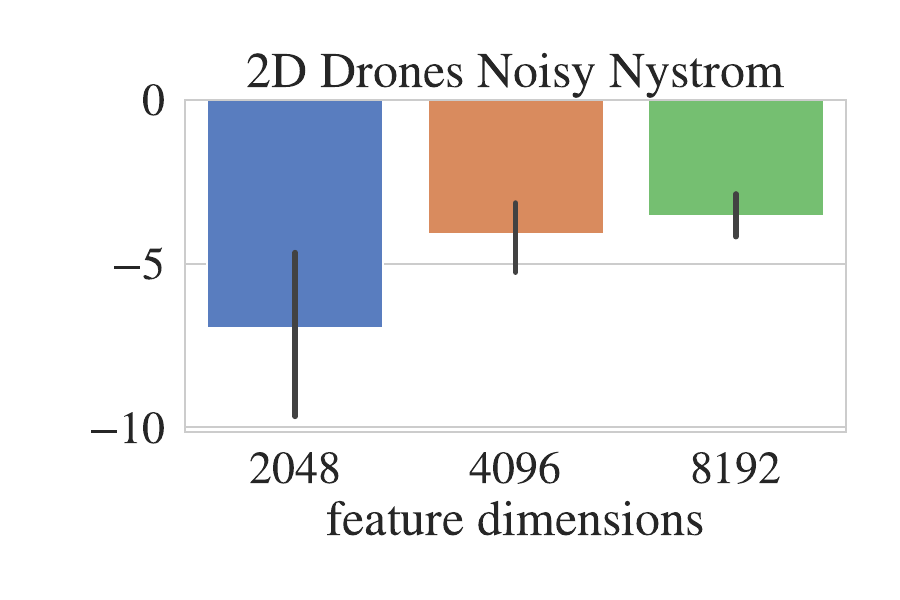}
    \caption{Performance after training with different feature dimensions on noisy 2D drones with random and Nystr\"om features.}
    \label{fig:dim}
\end{figure}

% To provide more insight into the behavior learned by the algorithms, we also plot the change in the angular position of the different algorithms against the number of time-steps for the initial state $s = [-\pi, 0]$, which represents the stable equilibrium position where the pendulum is hanging straight down. 
% \lina{I moved the words from table caption here} 
% \lina{Zhaolin, figure 1 is different from the paragraph you wrote here. For this simulation section, I would 1) remove SDEC tunable. This is because the paper didn't talk about it and you don't provide details to explain how ``random features are trainable'' (e.g., since it is random, what's the meaning by training?). To make it not destructive, remove it.}
% \lina{In summary, the simulation result 1) the table with sdec, ilqr, dkuc. You can summarize results for noiseless and noisy cases together in the table; 2) one figure showing the learning performance for sdec over episodes. control people are always interested in seeing the learning dynamics. 3) your current figure }
\vspace{-1.5em}
\subsection{Robustness and Stability}

To evaluate the robustness of the proposed algorithm, We show the performance under different noise levels in Figure \ref{fig:pendulum}. We can see that both two SDEC algorithms show good robustness against increasing noise levels, while SAC has significantly worse performance as the noise level increases. As for all the nonlinear control baselines, they are not able to obtain good performance with noise dynamics. We can study stability using the results in Table \ref{tab:performance}. The reward functions are designed as the squared error with respect to the equilibrium state. We see the proposed \algabb shows good stability, especially with noise. We also show 5 trajectories of noisy 2D drones controlled by SDEC and SAC in Figure \ref{fig:quad_traj}. The goal is to stabilize the drone at the initial position $[0.0, 0.5, 0.0]$. The initial state is the origin and the noise level is $\sigma=1.0$. The six plots show six drone states including the 2D position and the roll angle $[x, z,\theta]$, and their time derivatives. The trajectories show that SDEC can stabilize the drone even with the presence of noise. However, with SAC, the drones quickly become unstable and flip over due to the noise.

\vspace{-1em}
\section{Conclusion}
% In this paper, ...\lina{zhaolin, can you add a conclusion paragraph. Below is the old conclusion paragraph. I think we can modify it.}

% There has been a long-standing gap between the theoretical understanding and empirical success of the kernelized linearization control with spectral dynamics embedding, \ie, the error induced by finite-dimensional approximation has not be clearly analyzed. In this work, we close this gap by studying the policy evaluation and optimization error for finite-dimensional random and Nystrom features, theoretically validating the practical usage of~\algabb. Along the way, we have also developed new analysis tools and results that may be of independent interest, e.g. the novel $O(1/n_{\mathrm{nys}})$ high-probability convergence rate for Nystr\"om kernel approximation. 

Optimal control for stochastic nonlinear dynamics is a long-standing problem. In this work, we reveal the spectral dynamics embedding, upon which we obtain an practical and provable control algorithm,~\algabb. Meanwhile, we close the gap between the theoretical understanding and empirical success of the algorithm. 
% \ie, the error induced by finite-dimensional approximation has not be clearly analyzed. In this work,
We carefully characterize the policy evaluation and optimization error for finite-dimensional random and Nystrom features, theoretically validating the practical usage of~\algabb. Along the way, we have also developed new analysis tools and results that may be of independent interest, e.g. the novel $O(1/n_{\mathrm{nys}})$ high-probability convergence rate for Nystr\"om kernel approximation. 

% We close this gap by exploiting a novel analysis method, which could be of independent interest, and characterizing the finite-dimensional approximation effect in both policy evaluation and optimization, 
% \vspace{-1em}
\newpage
\section*{References}
\vspace{-1em}
\bibliography{ref.bib}
\bibliographystyle{IEEEtran}
% \vspace{1mm}
% \onecolumn
% \vspace{-1em}
\newpage
% \newpage
\section*{{Appendix}}
\setcounter{subsection}{0}
% \vspace{-0.5em}
\subsection{Random Feature for Exponential Family Transition}\label{appendix:rf_ebm}

With the transition operator in~\eqref{eq:exp_family}, we have
% \begin{equation}
%     P(s'|s, a) = p(s')\exp\rbr{f\rbr{s, a}^\top \zeta\rbr{s^\prime} - A\rbr{{s, a}}},
% \end{equation}
% we consider
\footnotesize
\begin{align*}
    &P(s'|s, a) = \exp\rbr{A\rbr{s, a}}\exp\rbr{f\rbr{s, a}^\top \zeta\rbr{s^\prime}}p(s')\\
    =&\underbrace{\exp\rbr{A\rbr{s, a}}\exp\rbr{\frac{\nbr{f(s, a)}^2}{2}}}_{h\rbr{s, a}}\\
    &\cdot
    \exp\rbr{-\frac{\nbr{f\rbr{s, a} -  \zeta\rbr{s^\prime}}^2}{2}} 
    \underbrace{\exp\rbr{\frac{\nbr{\zeta\rbr{s'}}^2}{2}}p(s')}_{g\rbr{s'}}\\
    =& \EE_{\omega}\sbr{\inner{h\rbr{s, a}\psi_\omega\rbr{f(s, a)}}{\mu_\omega\rbr{s'}g(s')}},
\end{align*}
\normalsize
where $\psi_{\omega,\revision{b}}\rbr{f(s, a)} \!=\! \cos\rbr{\omega^\top f\rbr{s, a} + \revision{b}}$ and $\mu_{\omega,\revision{b}} = \cos\rbr{\omega^\top \zeta\rbr{s'} + \revision{b}}$
with $\omega\!\!\sim\!\!\Ncal\rbr{0, \Ib}$ and \revision{$b \sim U([0,2\pi])$}.

\subsection{Derivation of Nystr\"om method}
\label{appendix:derivation_of_nystrom}

Consider $k_\alpha(x,y) = \exp\left(\frac{-(1-\alpha^2)\|x-y\}^2}{2\sigma^2} \right)$, and note that it is a bounded and continuous positive definite kernel on a compact space $\mathcal{S}$.\footnote{The Nystr\"om derivation here applies for any general kernel $k$ satisfying these conditions.} Consider any probability measure $\mu$ on $\mathcal{S}$, e.g. the $\mu_{\mathrm{nys}}$ considered in Algorithm \ref{alg:spectral_control}. As a consequence of Mercer's theorem, there exist eigenvalues $\{\sigma_j\}_{j=1}^\infty$ and (orthonormal) eigenvectors $\{ e_j\}_{j=1}^\infty \subset \mathcal{L}_2(\mu)$ such that for any $j \in \mathcal{N}$ and any $x \in \mathcal{S}$, 
\begin{align}
    \int_{\mathcal{S}} k_\alpha(x,y) e_j(y) d\mu(y) = \sigma_j e_j(x). \label{eq:eigenfunction_problem}
\end{align}
The Nystr\"om method is a way to approximate the Mercer eigendecomposition $k_\alpha(x,y) = \sum_{j=1}^\infty \sigma_j e_j(x) e_j(y)$ using a finite sum 
$$\hat{k}_{\alpha,m}^{n}(x,y) =\sum_{i=1}^m \hat{\sigma}_i \hat{e}_i(x) \hat{e}_i(y) \label{eq:k_hat_nystrom_decomp}$$ for some positive integer $m$, where the $(\hat{\sigma}_i, \hat{e}_i)$ pairs are found through a
numerical approximation of the eigenfunction problem in  \eqref{eq:eigenfunction_problem}. Let us now describe the procedure.
% Having obtained \eqref{eq:k_hat_nystrom_decomp}, we may then 

Recall that $n_{\mathrm{nys}}$ denotes the number of samples used to create the Nystr\"om features. For notational convenience, in the sequel, throughout the parts relevant to the Nystr\"om approximation results in the Appendix, unless otherwise specified, we denote $n:= n_{\mathrm{nys}}$. We emphasize this is not be to confused with the $n$ used to denote the number of samples used in statistical learning of the $Q$-function. Suppose we draw $n$ independent samples, $X^n = \{x_s\}_{s=1}^n$ from $\mathcal{S}$ following the distribution $\mu$. For any eigenfunction $e_j$, a numerical approximation of the eigenfunction problem in \eqref{eq:eigenfunction_problem} is then
\begin{subequations}
\begin{align}
    &\frac{1}{n}\sum_{s=1}^n k_\alpha(x,x_s) e_j(x_s) \approx \sigma_j e_j(x), \label{eq:numerical_eigen_decomp_1}\\
    &\frac{1}{n} \sum_{s=1}^n (e_j(x_s))^2 \approx 1. \label{eq:numerical_eigen_decomp_2}
\end{align}
\end{subequations}
Let $K^{(n)} \in \mathbb{R}^{n \times n}$ denote the Gram matrix where $(K^n)_{rs} = k_\alpha(x_r,x_s)$, and let $K^{(n)} U = \Lambda U$ denote the eigendecomposition of $K^{(n)}$ whereby $U$ is orthogonal and $\Lambda$ is diagonal. For \eqref{eq:numerical_eigen_decomp_1} to hold, it should hold for any $x \in X^n$. So, for any $r \in [n]$, we would expect
\begin{align}
    \frac{1}{n}\sum_{s=1}^n k_\alpha(x_r,x_s) e_j(x_s) \approx \sigma_j e_j(x_r)
\end{align}
for any eigenfunction/value pair $(e_j(\cdot), \sigma_j)$ of $k_\alpha(\cdot,\cdot)$.
For any eigenvector $u_i$ of $K^{(n)}$, 
we note that 
$$\frac{1}{n}\sum_{s=1}^n k_\alpha(x_r,x_s) (u_i)_s = \frac{\lambda_i}{n} (u_i)_r.$$
Thus, it is natural to extend $u_i \in \mathbb{R}^n$ to be an eigenfunction $\tilde{e}_i$ in the following way, by enforcing $\tilde{e}_i(x_s) = (u_i)_s$ for any $x_s \in X^n$, and setting (for all other $x$)
$$\tilde{e}_i(x) := \frac{1}{\lambda_i}\sum_{s=1}^n k_\alpha(x,x_s) (u_i)_s,$$
with associated eigenvalue $\frac{\lambda_i}{n}$.
Recalling the orthonormality constraint in \eqref{eq:numerical_eigen_decomp_2}, since 
$$ \frac{1}{n} \sum_{s=1}^n \tilde{e}_i(x_s)^2 = \frac{1}{n} \sum_{s=1}^n (u_i)_s^2 = \frac{1}{n},$$
we scale 
$$\hat{e}_i(x) := \sqrt{n} \tilde{e}_i(x)= \frac{\sqrt{n}}{\lambda_i}\sum_{s=1}^n k_\alpha(x,x_s) (u_i)_s,$$
and consider the Nystr\"om approximation $\hat{k}_\alpha^{n}$ of the original $k_\alpha$, where
\begin{align}
\hat{k}_\alpha^{n}(x,y) = & \  \sum_{i=1}^n \frac{\lambda_i}{n} \hat{e}_i(x) \hat{e}_i(y) \nonumber \\
= & \  k_\alpha(x,X^n) \left(\sum_{i=1}^n \frac{1}{\lambda_i}u_iu_i^\top \right) k_\alpha(X^n,x), \label{eq:k_hat_nystrom_approx}
\end{align}
where the last equality can be verified by plugging in the definition of $\hat{e}_i(\cdot)$, and using the notation that $k_\alpha(x,X^n)$ denotes a row vector whose $s$-th component is $k_\alpha(x,x_s),$ and $k_\alpha(X^n,y)$ denotes a column vector whose $s$-th component is $k_\alpha(y,x_s)$. Based on \eqref{eq:k_hat_nystrom_approx}, for any $m \leq n$, we then define $\hat{k}_{\alpha,m}$ to be the rank-$m$ approximation
\begin{align}
\label{eq:k_hat_nystrom_approx_m}
\hat{k}_{\alpha,m}^n(x,y) = k_\alpha(x,X^n) \left(\sum_{i=1}^m \frac{1}{\lambda_i}u_iu_i^\top \right) k_\alpha(X^n,x),
\end{align}
which is the Nystr\"om kernel approximation that we will consider in the paper. Since it can be shown that $\hat{k}_{\alpha,m}^n(x,y) = \varphi_{\mathrm{nys}}(x)^\top \varphi_{\mathrm{nys}}(y)$, where
\begin{align}
(\varphi_{\mathrm{nys}})_i(\cdot) = \frac{1}{\sqrt{\lambda_i}}u_i^\top k_\alpha(X^n,\cdot), \quad \forall i \in [m], \label{eq:varphi_m_exact_form}
\end{align}
we may view $\varphi_{\mathrm{nys}}(\cdot) \in \mathbb{R}^m$ as the (rank-$m$) Nystr\"om features associated with the Nystr\"om kernel approximation.

\subsection{Detailed Proof for Proposition \ref{prop:kernel_approx_main_paper} (Kernel approximation error of Nystr\"om features)}
\label{appendix:nystrom}

By \lemref{thm:RKHS_reproducing_kernel_moore_aronszajn}, in the RKHS corresponding to $k_{\alpha}$, which we denote as $\mathcal{H}_{k_\alpha}$, given any $x,y \in \mathcal{S}$, $\langle k_\alpha(x,\cdot), k_\alpha(y,\cdot) \rangle_{\mathcal{H}_{k_\alpha}} = k_\alpha(x,y)$. Let $\phi(\cdot) \in \mathbb{R}^m$ be as defined in \eqref{eq:varphi_m_exact_form}. Note that for each $i \in [m]$, we can view $(\varphi_{\mathrm{nys}})_i(\cdot)$ as an element in $\mathcal{H}_{k_\alpha}$. This and several other useful properties of the features $\{(\varphi_{\mathrm{nys}})_i(\cdot)\}_{i=1}^m$ is shown in this next result (cf. \cite{hayakawa2023sampling}). 

% Let $\varphi(\cdot) \in \mathbb{R}^m$ denote the feature corresponding to the empirical kernel $\hat{k}_m^{(n)}$, such that $\hat{k}_m^{(n)}(\cdot,\cdot) = \varphi(\cdot)^\top \varphi(\cdot)$. Since 
% $$\hat{k}_m^{(n)}(x,y) = k_\alpha(x,X^n) \left(\sum_{i=1}^m \frac{1}{\lambda_i} u_i u_i^\top\right) k_\alpha(X^n,y), $$
% it can be shown that
% \begin{align}
%     \label{eq:nystrom_feature_defn}
% (\varphi_{\mathrm{nys}})_i^{(m)}(\cdot) := \frac{1}{\sqrt{\lambda_i}} u_i^\top k_\alpha(X^n,\cdot) \quad \forall i \in [m]. 
% \end{align}

\begin{lemma}
    \label{lemma:Khat_inner_product}
    Let $(\varphi_{\mathrm{nys}})_i(\cdot) := \frac{1}{\sqrt{\lambda_i}} u_i^\top k_\alpha(X^n,\cdot)$ for each $i \in [m]$. Then, the following statements hold.
    \begin{enumerate}[labelindent=0pt]
        \item The set $\left\{(\varphi_{\mathrm{nys}})_i(\cdot) \right\}_{i=1}^m$ is orthonormal in $\mathcal{H}_{k_\alpha}$.
        % \item For every $x \in \mathcal{S}$, $\hat{k}_{\alpha,m}^n(x,\cdot)$ lies in the span of $\left\{\frac{u_i^\top k_\alpha(X^n,\cdot)}{\sqrt{\lambda_i}} \right\}_{i=1}^m$ 
        \item Define an orthogonal projection map $P_m^{X^n}: \mathcal{H}_{k_\alpha} \to \mathcal{H}_{k_\alpha}$ onto the span of $\left\{(\varphi_{\mathrm{nys}})_i(\cdot) \right\}_{i=1}^m$ , such that 
        \footnotesize
        $$P_m^{X^n}(h(\cdot)) = \sum_{i=1}^m \langle h(\cdot),(\varphi_{\mathrm{nys}})_i(\cdot) \rangle_{\mathcal{H}_{k_\alpha}} (\varphi_{\mathrm{nys}})_i(\cdot).$$
        \normalsize
        Then, $P_m^{X^n} k_\alpha(x,\cdot) = \hat{k}_{\alpha,m}^n(x,\cdot)$ for every $x \in \mathcal{S}$. Thus, for every $(x,y) \in \mathcal{S}$, recalling the definition of $\hat{k}_{\alpha,m}^n$ in \eqref{eq:k_hat_nystrom_approx_m}, we have
        \footnotesize
        \begin{align*}
        \hat{k}_{\alpha,m}^n(x,y) = &  \langle k_\alpha(x,\cdot), \hat{k}_{\alpha,m}^n(\cdot,y) \rangle_{\mathcal{H}_{k_\alpha}} \\
        = &  \langle P_m^{X^n} k_\alpha(x,\cdot), P_m^{X^n} k_\alpha(y,\cdot) \rangle_{\mathcal{H}_{k_\alpha}}
        \end{align*}
        \normalsize
        \item Let $P_{m,\perp}^{X^n}$ denote the orthogonal projection onto the complement of $\{(\varphi_{\mathrm{nys}})_i(\cdot)\}_{i=1}^m$ in $\mathcal{H}_{k_\alpha}$. Then, 
        \footnotesize
        \begin{align}
            \|P_{m,\perp}^{X^n} k_\alpha(x,\cdot)\|_{\mathcal{H}_{k_\alpha}} = \sqrt{(k_\alpha - \hat{k}_{\alpha,m}^n)(x,x)} 
\label{eq:K_minus_Khat_equal_P_orthogonal_complement_norm}
        \end{align}
        \normalsize
    \end{enumerate}
\end{lemma}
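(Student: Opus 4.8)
\emph{Proof proposal.} The plan is to prove the three claims in order, relying throughout only on the reproducing property $\langle k_\alpha(x,\cdot),k_\alpha(y,\cdot)\rangle_{\mathcal{H}_{k_\alpha}} = k_\alpha(x,y)$ from \lemref{thm:RKHS_reproducing_kernel_moore_aronszajn} and on the fact that each $(\varphi_{\mathrm{nys}})_i(\cdot) = \frac{1}{\sqrt{\lambda_i}}\sum_{s=1}^{n}(u_i)_s\,k_\alpha(x_s,\cdot)$ is a finite linear combination of functions $k_\alpha(x_s,\cdot) \in \mathcal{H}_{k_\alpha}$ and hence lies in $\mathcal{H}_{k_\alpha}$ (using that $\lambda_i > 0$ for $i \le m \le n$, which holds since $k_\alpha$ is strictly positive definite in the sense of Definition~\ref{def:kernel} and the sample points are almost surely distinct, so the Gram matrix $K^{(n)}$ is positive definite). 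For claim 1, I would expand $\langle (\varphi_{\mathrm{nys}})_i(\cdot),(\varphi_{\mathrm{nys}})_j(\cdot)\rangle_{\mathcal{H}_{k_\alpha}}$ by bilinearity and the reproducing property to obtain $\frac{1}{\sqrt{\lambda_i\lambda_j}}\,u_i^\top K^{(n)} u_j$, and then use $K^{(n)}u_j = \lambda_j u_j$ together with $u_i^\top u_j = \delta_{ij}$ to conclude this equals $\delta_{ij}$, giving orthonormality.

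For claim 2, I would first compute, again via the reproducing property, that $\langle k_\alpha(x,\cdot),(\varphi_{\mathrm{nys}})_i(\cdot)\rangle_{\mathcal{H}_{k_\alpha}} = \frac{1}{\sqrt{\lambda_i}}u_i^\top k_\alpha(X^n,x) = (\varphi_{\mathrm{nys}})_i(x)$, so that $P_m^{X^n}k_\alpha(x,\cdot) = \sum_{i=1}^m (\varphi_{\mathrm{nys}})_i(x)\,(\varphi_{\mathrm{nys}})_i(\cdot)$; evaluating at $y$ gives $\varphi_{\mathrm{nys}}(x)^\top\varphi_{\mathrm{nys}}(y) = \hat{k}_{\alpha,m}^n(x,y)$ by the definition in \eqref{eq:k_hat_m_n_varphi_inner_prod_defn}, which proves $P_m^{X^n}k_\alpha(x,\cdot) = \hat{k}_{\alpha,m}^n(x,\cdot)$. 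The two displayed identities then follow formally: $\hat{k}_{\alpha,m}^n(x,y) = \big(P_m^{X^n}k_\alpha(x,\cdot)\big)(y) = \langle P_m^{X^n}k_\alpha(x,\cdot),k_\alpha(y,\cdot)\rangle_{\mathcal{H}_{k_\alpha}}$ by the reproducing property, and since $P_m^{X^n}$ is a self-adjoint idempotent and $\hat{k}_{\alpha,m}^n(\cdot,y) = P_m^{X^n}k_\alpha(y,\cdot)$, transferring the projection onto the second argument yields $\langle P_m^{X^n}k_\alpha(x,\cdot),P_m^{X^n}k_\alpha(y,\cdot)\rangle_{\mathcal{H}_{k_\alpha}} = \langle k_\alpha(x,\cdot),\hat{k}_{\alpha,m}^n(\cdot,y)\rangle_{\mathcal{H}_{k_\alpha}}$.

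For claim 3, I would apply the Pythagorean identity to the orthogonal decomposition $k_\alpha(x,\cdot) = P_m^{X^n}k_\alpha(x,\cdot) + P_{m,\perp}^{X^n}k_\alpha(x,\cdot)$, obtaining $\|P_{m,\perp}^{X^n}k_\alpha(x,\cdot)\|_{\mathcal{H}_{k_\alpha}}^2 = \|k_\alpha(x,\cdot)\|_{\mathcal{H}_{k_\alpha}}^2 - \|P_m^{X^n}k_\alpha(x,\cdot)\|_{\mathcal{H}_{k_\alpha}}^2 = k_\alpha(x,x) - \hat{k}_{\alpha,m}^n(x,x)$, where the last equality uses $\|k_\alpha(x,\cdot)\|_{\mathcal{H}_{k_\alpha}}^2 = k_\alpha(x,x)$ together with claim 2; taking square roots gives \eqref{eq:K_minus_Khat_equal_P_orthogonal_complement_norm}, and incidentally shows $(k_\alpha - \hat{k}_{\alpha,m}^n)(x,x)\ge 0$. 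I do not expect a genuine obstacle here: the argument is entirely bookkeeping with the reproducing kernel. The only point that requires a little care is the well-definedness noted above — strict positivity of the eigenvalues $\lambda_1,\dots,\lambda_m$ and membership of the $(\varphi_{\mathrm{nys}})_i$ in $\mathcal{H}_{k_\alpha}$ — which is exactly what makes the orthogonal projections $P_m^{X^n}$ and $P_{m,\perp}^{X^n}$, and hence the whole statement, meaningful.
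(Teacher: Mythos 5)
Your proposal is correct and follows essentially the same route as the paper's proof: orthonormality via the reproducing property and the eigendecomposition of $K^{(n)}$, the identity $P_m^{X^n}k_\alpha(x,\cdot)=\hat{k}_{\alpha,m}^n(x,\cdot)$ by expanding the projection coefficients, and claim 3 by identifying $(k_\alpha-\hat{k}_{\alpha,m}^n)(x,\cdot)$ with the orthogonal complement of the projection (your Pythagorean phrasing is an equivalent rearrangement of the paper's computation). Your added remark on well-definedness --- strict positivity of $\lambda_1,\dots,\lambda_m$ from the strict positive definiteness of $k_\alpha$ and almost-sure distinctness of the samples --- is a point the paper leaves implicit, but it does not change the argument.
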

% \revision{We defer the proof to our supplement~\cite{ren2024stochasticnonlinearcontrolfinitedimensional},  and note that the lemma follows from  results in \cite{hayakawa2023sampling}.} 

\begin{proof}
\begin{enumerate}
\item We first prove the orthogonality of the $\{(\varphi_{\mathrm{nys}})_i\}_{i=1}^m$'s. First given any $i,j$ pair, we have 
\footnotesize
    \begin{align*}
        \langle\! (\varphi_{\mathrm{nys}})_i(\!\cdot\!),\! (\varphi_{\mathrm{nys}})_j(\!\cdot\!) \!\rangle_{\!\mathcal{H}_{k_\alpha}}\!\! 
        = &  \frac{1}{\sqrt{\lambda_i \lambda_j}} \langle\! u_i^\top k_\alpha(X^n,\cdot\!), u_j^\top k_\alpha(X^n,\cdot \!)\!\rangle_{\!\mathcal{H}_{k_\alpha}} \\
        = &  \frac{1}{\sqrt{\lambda_i \lambda_j}} u_i^\top k_\alpha(X^n,X^n) u_j  = \delta_{ij}.
    \end{align*}
    \normalsize
\item Next we prove the second stated property. We have
\footnotesize
\begin{align*}
    &P_m^{X^n} k_\alpha(x,\cdot) =   \sum_{i=1}^m \langle k_\alpha(x,\cdot), (\varphi_{\mathrm{nys}})_i(\cdot)\rangle_{\mathcal{H}_{k_\alpha}} (\varphi_{\mathrm{nys}})_i(\cdot) \\
    &=   \sum_{i=1}^m \left\langle k_\alpha(x,\cdot), \frac{u_i^\top k_\alpha(X^n,\cdot)}{\sqrt{\lambda_i}}\right\rangle_{\mathcal{H}_{k_\alpha}}\!\!\!\frac{1}{\sqrt{\lambda_i}}u_i^\top k_\alpha(X^n,\cdot) \\
     & = k_\alpha(x,X^n) \sum_{i=1}^m \frac{1}{\lambda_i} u_i u_i^\top k_\alpha(X^n,\cdot) = \hat{k}_{\alpha,m}^n(x,\cdot),
\end{align*}
\normalsize
as desired. Thus, 
\footnotesize
\begin{align*}
\hat{k}_{\alpha,\!m\!}^n(\!x,y\!) \!\!=\!\!   \langle k_\alpha(\!x,\cdot\!), \hat{k}_{\alpha,\!m\!}^n(\cdot,y) \rangle_{\!\mathcal{H}_{k_\alpha}\!\!}\!\!\!=\!\!  \langle P_m^{X^n} k_\alpha(\!x,\cdot\!), P_m^{X^n} k_\alpha(\!y,\cdot\!) \rangle_{\!\mathcal{H}_{k_\alpha}}\!\!. 
\end{align*}
\normalsize

\item This follows from the second part of the lemma, since 
\small
\begin{align*}
    &\sqrt{(k_\alpha\! -\! \hat{k}_{\alpha,m}^n)(x,x)} =   \|(k_\alpha \!-\! \hat{k}_{\alpha,m}^n)(x,\cdot)\|_{\mathcal{H}_{k_\alpha}} \\
     & =\! \|k_\alpha(x,\!\cdot\!)\! \!-\!\! P_m^{X^n}(k_\alpha(x,\!\cdot\!)\|_{\mathcal{H}_{k_\alpha}} \!\!=\!\! \|P_{m,\perp}^{X^n} k_\alpha(x,\!\cdot\!)\|_{\!\!\mathcal{H}_{k_\alpha}}\!.
\end{align*}
\normalsize
\end{enumerate}
\end{proof}

We next bound the approximation error of the Nystr\"om method, i.e. find a bound for $\int_{\mathcal{S}} \sqrt{(k_\alpha - \hat{k}_{\alpha,m}^n)(x,x)} d\mu(x)$, which by \eqref{eq:K_minus_Khat_equal_P_orthogonal_complement_norm} is equal to $\int_{\mathcal{S}} \|P_{m,\perp}^{X^n}k_\alpha(x,\cdot)\|_{\mathcal{H}_K} d\mu(x)$. To do so, we need a few preliminary results. We first introduce some additional notation. For any distance $d$ and class of functions $\mathcal{F}$, we define 
\footnotesize
\begin{align*}
&\mathcal{N}(\epsilon, \mathcal{F},d) := \min_{G \in \mathcal{G}} |G|, \ \ \mathcal{G} := \{G: \forall f \in \mathcal{F}, \exists g \in G \mbox{ s.t. } d(f,g) \leq \epsilon \},
\end{align*}
\normalsize
and $|G|$ denotes the cardinality of the set $G$.
% \Tongzheng{Check this? No G after ":"}\zhaolin{Thanks the $\mathcal{G}$ should be replaced by $G$ after the colon.}
In addition, for any function $f \in \mathcal{L}_2(\mu)$, we denote $\mu(f) := \int_{\mathcal{S}} f(x) d\mu(x)$, and $\mu_{X^n}(f) = \frac{1}{n}\sum_{i=1}^n f(x_i)$. Given any $f, g \in \mathcal{L}_2(\mu)$, we also define $d_{L_2(\mu_{X^n})}(f,g) := \sqrt{\frac{1}{n} \sum_{i=1}^n (f(x_i) - g(x_i))^2}$.  The first result below, \revision{which utillizies local Rademacher complexity~\cite{bartlett2005local}}, is based on Corollary 3.7 in \cite{bartlett2005local}.
% but applies to classes of real-valued function (as oppposed to binary functions as stated in Corollary 3.7 in \cite{bartlett2005local}). 

\begin{lemma}
    \label{lemma:corollary_3.7_but_real_value}
    Let $\mathcal{F}$ be a class of $[0,1]$-valued functions such that for any $\epsilon > 0$
    \footnotesize
    $$\log \mathcal{N}(\epsilon, \mathcal{F}, d_{L_2(\mu_{X^n})}) \leq S\log(1 + \revision{4}/\epsilon).$$
    \normalsize
 Then, there exists a constant $c > 0$ such that for any $\delta > 0$ and any $C > 0$, with probability at least $1 - \delta$, for any $f \in \mathcal{F}$,
 \footnotesize
 $$ \mu(f) \leq \frac{C}{C-1}\mu_{X^n}f + cC\left(\frac{S \log(n/S)}{n} + \frac{\log(1/\delta)}{n} \right).$$
 \normalsize
\end{lemma}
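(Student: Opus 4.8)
The plan is to obtain this statement as a direct specialization of the general ratio-type inequality of Bartlett, Bousquet and Mendelson (Theorem~3.3 / Corollary~2.2 in \cite{bartlett2005local}). Their Corollary~3.7 is exactly such a specialization to VC classes of $\{0,1\}$-valued functions, and the \emph{only} property of a VC class that enters its proof is Haussler's entropy bound $\log\mathcal{N}(\epsilon,\mathcal{F},d_{L_2(\mu_{X^n})}) \lesssim d\log(1/\epsilon)$. Our hypothesis $\log\mathcal{N}(\epsilon,\mathcal{F},d_{L_2(\mu_{X^n})}) \leq S\log(1+2/\epsilon)$ is precisely this bound with the pseudo-dimension $d$ replaced by $S$, so I expect the argument behind Corollary~3.7 to carry through essentially verbatim; what I would spell out are the two ingredients it rests on: the Bernstein (variance) condition, and the localized-complexity / fixed-point computation, together with the passage to the star-hull required by the sub-root machinery.

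First I would check the variance condition. Since every $f \in \mathcal{F}$ takes values in $[0,1]$, we have $f^2 \leq f$ pointwise, hence $\mu(f^2) \leq \mu(f)$ for all $f \in \mathcal{F}$; that is, the Bernstein condition holds with constant $B=1$, and the functions have range of length $1$. Because BBM require the relevant class to be star-shaped around $0$ for $r\mapsto \mathbb{E}\widehat R_n(\{f:\mu(f^2)\le r\})$ to be sub-root, I would replace $\mathcal{F}$ by its star-hull $\mathrm{star}(\mathcal{F},0)=\{\alpha f:f\in\mathcal{F},\,\alpha\in[0,1]\}$, which still has range $[0,1]$, still satisfies $\mu(g^2)\le\mu(g)$, and (by covering $[0,1]$ with an $\epsilon$-net of scalars and using boundedness by $1$) still satisfies $\log\mathcal{N}(\epsilon,\mathrm{star}(\mathcal{F},0),d_{L_2(\mu_{X^n})})\lesssim (S+1)\log(1+2/\epsilon)$ — the same functional form. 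Any bound proved for $\mathrm{star}(\mathcal{F},0)$ then holds in particular for $\mathcal{F}$.

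Next I would control the localized Rademacher complexity. Applying Dudley's entropy integral to the covering bound above — which is stated in exactly the empirical metric $d_{L_2(\mu_{X^n})}$ that governs the empirical Rademacher complexity — the empirical Rademacher complexity of $\{g\in\mathrm{star}(\mathcal{F},0):\mu(g^2)\le r\}$ is at most of order $\tfrac{1}{\sqrt n}\int_0^{\sqrt r}\sqrt{(S+1)\log(1+2/\epsilon)}\,d\epsilon \asymp \sqrt{\tfrac{Sr}{n}\,\log\tfrac{1}{r}}$ (after the standard step relating empirical to population localization), which, with the usual peeling refinement, furnishes a sub-root majorant $\psi$ of the form $\psi(r)=c_1\sqrt{\tfrac{Sr}{n}\log\tfrac{n}{S}}+\tfrac{c_1 S}{n}\log\tfrac{n}{S}$, whose fixed point is $r^*\asymp\tfrac{S}{n}\log\tfrac{n}{S}$. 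Feeding $B=1$, range length $1$, and this pair $(\psi,r^*)$ into Theorem~3.3 of \cite{bartlett2005local} then yields, for every $C>1$ and $x>0$, with probability at least $1-e^{-x}$, simultaneously over all $f\in\mathcal{F}$,
\[
\mu(f)\ \le\ \frac{C}{C-1}\,\mu_{X^n}f\ +\ c\,C\Big(r^*+\frac{x}{n}\Big)\ \le\ \frac{C}{C-1}\,\mu_{X^n}f\ +\ c\,C\Big(\frac{S\log(n/S)}{n}+\frac{x}{n}\Big),
\]
and taking $x=\log(1/\delta)$ gives the claim. The step I expect to be the main obstacle is pinning down the sharp $\log(n/S)$ factor in $r^*$ (rather than a crude $\log n$), which requires optimizing the Dudley/peeling bound exactly as in the computation underlying Corollary~3.7, while also making sure the star-hull and empirical-to-population localization corrections are absorbed into the absolute constant $c$.
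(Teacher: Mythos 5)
Your proposal is correct and follows essentially the same route as the paper, whose proof consists of the single remark that the result ``follows naturally by using the log covering number bound in the proof of Corollary 3.7 in \cite{bartlett2005local}''; you have simply spelled out the ingredients (the Bernstein condition $f^2\le f$ for $[0,1]$-valued $f$, the star-hull, and the Dudley/peeling computation of the fixed point $r^*\asymp \frac{S}{n}\log(n/S)$) that this remark implicitly relies on. The only caveat worth noting is that, as you correctly observe, the argument requires $C>1$ rather than the $C>0$ written in the lemma statement.
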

\begin{proof}
    The result follows naturally by using the log covering number bound in the proof of Corollary 3.7 in \cite{bartlett2005local}.
\end{proof}
% \vspace{-0.5em}
\revision{We note that local Rademacher complexity is critical in deriving the $O(1/n)$ convergence rate above. Standard Rademacher complexity bounds~\cite{wainwright2008graphical} can only ensure a $O(1/\sqrt{n})$ convergence rate, which is strictly worse.}
The next result we need is this lemma, which builds on Lemma \ref{lemma:corollary_3.7_but_real_value}.

\begin{lemma}
\label{proposition:mu_upp_bound_const_mu_hat_plus_O(1/n)}
% \Tongzheng{$\Pi_r$?}\zhaolin{ok fixed}
    Suppose $\Pi_r$  is an arbitrary deterministic $r$-dimensional orthogonal projection in $\mathcal{H}_{k_\alpha}$.  Then, there exists a constant $c > 0$ such that for any constant $C > 1$ and orthogonal projection $P$ in $\mathcal{H}_{k_\alpha}$ possibly depending on $X^n$, we have that for any $\delta > 0$ , with probability at least $1 - \delta$,
    \footnotesize
    \begin{align*}
        & \mu(\|P\Pi_r k_\alpha(x,\cdot)\|_{\mathcal{H}_{k_\alpha}}) \leq \frac{C}{C-1} \mu_{X^n}(\|P\Pi _r k_\alpha(x,\cdot)\|_{\mathcal{H}_{k_\alpha}})  \\
        & \quad \quad + c C \left(\frac{r^2 \max\{\log(n/r^2),1\}}{n} + \frac{\log(1/\delta)}{n}  \right).
    \end{align*}
    \normalsize
\end{lemma}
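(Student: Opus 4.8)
The plan is to recognize all the quantities $\|P\Pi_r k_\alpha(x,\cdot)\|_{\mathcal{H}_{k_\alpha}}$, as $P$ ranges over orthogonal projections of $\mathcal{H}_{k_\alpha}$, as a single \emph{data-independent} class $\mathcal{F}$ of $[0,1]$-valued functions, to bound its $L_2(\mu_{X^n})$ covering number by $S\log(1+2/\epsilon)$ with $S=\Theta(r^2)$, and then to invoke Lemma~\ref{lemma:corollary_3.7_but_real_value}. The crucial point is that although the projection $P$ may depend on the sample $X^n$, the class $\mathcal{F}$ does not, so a bound that is uniform over $\mathcal{F}$ automatically applies to the data-dependent $P$.

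The first step I would carry out is the reduction to a finite-dimensional index set. Let $V_r := \mathrm{range}(\Pi_r)$, a fixed $r$-dimensional subspace of $\mathcal{H}_{k_\alpha}$, and fix an orthonormal basis of it. For any orthogonal projection $P$ and any $x$, writing $v := \Pi_r k_\alpha(x,\cdot) \in V_r$ and using $P=P^*=P^2$, I have $\|Pv\|_{\mathcal{H}_{k_\alpha}}^2 = \langle v,Pv\rangle_{\mathcal{H}_{k_\alpha}} = \langle v,(\Pi_{V_r}P\Pi_{V_r})v\rangle_{\mathcal{H}_{k_\alpha}}$, so this norm depends on $P$ only through $A := (\Pi_{V_r}P\Pi_{V_r})|_{V_r}$, which in the chosen basis is a symmetric $r\times r$ matrix with $0\preceq A\preceq I$, hence $\|A\|_{\mathrm{op}}\le 1$. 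Moreover $v$ has a coordinate vector $\beta(x)\in\mathbb{R}^r$ with $\|\beta(x)\|_2 = \|v\|_{\mathcal{H}_{k_\alpha}} \le \|k_\alpha(x,\cdot)\|_{\mathcal{H}_{k_\alpha}} = \sqrt{k_\alpha(x,x)} = 1$. Thus every function $x\mapsto\|P\Pi_r k_\alpha(x,\cdot)\|_{\mathcal{H}_{k_\alpha}}$ lies in
$$\mathcal{F} := \left\{ x\mapsto f_A(x) = \sqrt{\beta(x)^\top A\beta(x)} \ :\ A=A^\top,\ 0\preceq A\preceq I \right\},$$
which depends only on $\Pi_r$ (through $\beta(\cdot)$), not on $X^n$.

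Next I would bound the covering number of $\mathcal{F}$. For symmetric $A,A'$ with operator norm at most $1$, $\sup_x|f_A(x)^2-f_{A'}(x)^2| = \sup_x|\beta(x)^\top(A-A')\beta(x)| \le \|A-A'\|_{\mathrm{op}}$, and the elementary inequality $|\sqrt{a}-\sqrt{b}|\le\sqrt{|a-b|}$ gives $\sup_x|f_A(x)-f_{A'}(x)| \le \sqrt{\|A-A'\|_{\mathrm{op}}}$, hence $d_{L_2(\mu_{X^n})}(f_A,f_{A'}) \le \sqrt{\|A-A'\|_{\mathrm{op}}}$. A standard volumetric estimate gives an $\epsilon^2$-net of $\{A=A^\top:\|A\|_{\mathrm{op}}\le 1\}$ in operator norm of cardinality at most $(1+2/\epsilon^2)^{r(r+1)/2}$, which pushes forward to an $\epsilon$-net of $\mathcal{F}$ under $d_{L_2(\mu_{X^n})}$; therefore $\log\mathcal{N}(\epsilon,\mathcal{F},d_{L_2(\mu_{X^n})}) \le \frac{r(r+1)}{2}\log(1+2/\epsilon^2) \le 2r^2\log(1+2/\epsilon)$, so the hypothesis of Lemma~\ref{lemma:corollary_3.7_but_real_value} holds with $S=2r^2$.

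Finally I would apply Lemma~\ref{lemma:corollary_3.7_but_real_value} to this $\mathcal{F}$ with $S=2r^2$: for any $\delta>0$ and $C>1$, with probability at least $1-\delta$, simultaneously for all $f\in\mathcal{F}$,
$$\mu(f) \le \frac{C}{C-1}\mu_{X^n}f + cC\left(\frac{2r^2\log(n/(2r^2))}{n} + \frac{\log(1/\delta)}{n}\right).$$
Since $\log(n/(2r^2))\le\log(n/r^2)\le\max\{\log(n/r^2),1\}$ (and the middle term is only smaller when $n<2r^2$), absorbing absolute constants into $c$ gives the claimed inequality, and specializing to $f(x)=\|P\Pi_r k_\alpha(x,\cdot)\|_{\mathcal{H}_{k_\alpha}}\in\mathcal{F}$ — which is legitimate even for a $P$ depending on $X^n$ because the bound is uniform over $\mathcal{F}$ — completes the proof. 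I expect the main obstacle to be the first two steps: observing that the (data-dependent) projection $P$ enters only through its $r\times r$ compression onto the fixed subspace $V_r$, so that the effective index set is a finite-dimensional, sample-free body of matrices, and then threading the square root cleanly through the covering estimate so that $S=\Theta(r^2)$ emerges and yields exactly the $r^2\max\{\log(n/r^2),1\}/n$ rate.
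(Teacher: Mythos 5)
Your proposal is correct and follows essentially the same route as the paper: reduce the data-dependent projection $P$ to a sample-free class of functions indexed by $r\times r$ matrices, bound its covering number by $O(r^2)\log(1+2/\epsilon)$, and invoke Lemma~\ref{lemma:corollary_3.7_but_real_two}... rather, Lemma~\ref{lemma:corollary_3.7_but_real_value}, noting that uniformity over the class handles the dependence of $P$ on $X^n$. The only (cosmetic) difference is in transferring the matrix covering to the function class: you keep the compression $A=\Pi_{V_r}P\Pi_{V_r}$ and use the H\"older-$1/2$ bound $|f_A-f_{A'}|\le\sqrt{\|A-A'\|_{\mathrm{op}}}$ at scale $\epsilon^2$, while the paper factorizes $A=U^\top U$ and covers $U$ at scale $\epsilon$ via a Lipschitz bound; both yield $S=\Theta(r^2)$ and the same final rate.
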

\begin{proof}
    First, given any orthogonal projection $P$ in $\mathcal{H}_{k_\alpha}$, let $\{u_i\}_{i \in I}$ denote an orthonormal basis for $P\mathcal{H}_{k_\alpha}$, and let $P_\perp$ denotes the orthogonal projection onto its complement.  In addition, let $\{e_1,\dots,e_r\}$  denote a basis for $\Pi_r \mathcal{H}_{k_\alpha}$.  We note that for any projection $P$ in $\mathcal{H}_{k_\alpha}$, 
    % \revision{straightforward calculations show that (see details in online report)}
    \footnotesize
    \begin{align*}
        % \|P \Pi_r k_\alpha(x,\cdot)\|_{\mathcal{H}_{k_\alpha}}^2 =         v(x)^\top A_{P, \Pi_r} v(x),
        & \   \left\|P \left(\sum_{\ell \in [r]} \langle e_\ell, k_\alpha(x,\cdot)\rangle_{\mathcal{H}_{k_\alpha}} e_\ell\right) \right\|_{\mathcal{H}_{k_\alpha}}^2 \\
        & = \left\|\sum_{i \in I} u_i \left\langle u_i, \sum_{\ell \in [r]} e_\ell(x) e_\ell \right\rangle_{\mathcal{H}_{k_\alpha}} \right\|_{\mathcal{H}_{k_\alpha}}^2 \\
        & = \sum_{i \in I} \left(\sum_{\ell \in [r]} e_\ell(x) \langle u_i, e_\ell \rangle_{\mathcal{H}_{k_\alpha}}\right)^2 \\
        & =  \sum_{\ell, s \in [r]} e_{\ell}(x) e_s(x) \sum_{i \in I} \langle u_i, e_\ell \rangle_{\mathcal{H}_{k_\alpha}} \langle u_i, e_s \rangle_{\mathcal{H}_{k_\alpha}} \\
        & =  \sum_{\ell, s \in [r]} e_{\ell}(x) e_s(x) \langle Pe_\ell, Pe_s \rangle_{\mathcal{H}_{k_\alpha}} \\
        = & \ v(x)^\top A_{P, \Pi_r} v(x),
    \end{align*}
    \normalsize
    where $v(x) = (e_1(x),e_2(x),\dots, e_r(x)) \in \mathbb{R}^r$, and $A_{P,\Pi_r} \in \mathbb{R}^{r \times r}$ is a $r$ by $r$ dimensional positive semidefinite matrix where $(A_{P,\Pi_r})_{\ell, s} = \langle Pe_\ell, Pe_s \rangle_{\mathcal{H}_{k_\alpha}}$. We can similarly define a $r \times r$ positive semidefinite matrix $A_{P_\perp,\Pi_r} \in \mathbb{R}^{r \times r}$ where $(A_{P_\perp,\Pi_r})_{\ell,s} = \langle P_\perp e_\ell, P_\perp e_s \rangle_{\mathcal{H}_{k_\alpha}}$. Since
    \footnotesize
\begin{align*}
        \resizebox{0.98\hsize}{!}{$(A_{P,\Pi_r})_{\ell,s} + (A_{P_\perp,\Pi_r})_{\ell,s} = \langle Pe_\ell, Pe_s \rangle_{\mathcal{H}_{k_\alpha}} + \langle P_\perp e_\ell, P_\perp e_s \rangle_{\mathcal{H}_{k_\alpha}} = \langle e_\ell, e_s \rangle_{\mathcal{H}_{k_\alpha}} = \delta_{\ell,s},$}
\end{align*}
\normalsize
    it follows that $A_{P,\Pi_r} \preceq I_{r \times r}$, and hence can be decomposed as $A_{P,\Pi_r}= U^\top U$ for some matrix $U \in \revision{\mathbb{R}^{r \times r}}$ such that $\|U\|_2 \leq 1$. Thus, we can reexpress $\|P\Pi_r \mathcal{H}_{k_\alpha}\|_{\mathcal{H}_{k_\alpha}}$ as
    $\|P\Pi_r k_\alpha(x,\cdot) \|_{\mathcal{H}_{k_\alpha}} = \|Uv(x)\|_2$. Since this holds for any random orthogonal projection $P$, the class of functions $P\Pi_r k_\alpha(x,\cdot)$ is a subset of $F_U := \{f_U(x) = \|Uv(x)\|_2: U \in \mathbb{R}^{r \times r}\}$. We next seek to find a covering number bound for the class $F_U$. We note that \revision{straightforward calculations} show that if we have an \revision{$\epsilon/\sqrt{2}$} covering set $\mathcal{U}_\epsilon$ of $\mathcal{U} := \{U: U \in \mathbb{R}^{r \times r}, \|U\|_2 \leq 1\}$ under the distance $\|\cdot\|_2$, then we have an $\epsilon$-covering $F_\epsilon$ of $F_U$ under the distance $d_{L_2(\mu_{X^n})}$. 
    To see why, note that given any $U \in \mathcal{U}$, if $\|U_\epsilon - U\|_2 \leq \epsilon$, then 
    \footnotesize
\begin{align*}
    &\frac{1}{n} \sum_{i=1}^n (\|Uv(x) \|_2 - \|U_\epsilon v(x)\|_2)^2 \leq \frac{1}{n} \sum_{i=1}^n (\|(U - U_\epsilon)v(x)\|_2^2) \\
    & \leq \epsilon^2 \frac{1}{n}\sum_{i=1}^n 2\|v(x)\|_2^2 \leq  2\epsilon^2 \frac{1}{n} \sum_{i=1}^n k_\alpha(x,x) \leq 2\epsilon^2,
\end{align*}
\normalsize
% % (for the kernel $k_\alpha$ we are considering)
where for the final inequality we used the fact that $\max_{x} k_\alpha(x,x) \leq 1$. 
Since $\log \mathcal{N}(\revision{\epsilon/\sqrt{2}}, \mathcal{U}, \|\cdot\|_2) \leq r^2 \log(1 + \revision{4}/\epsilon)$ (as $\mathcal{U}$ is the unit ball of $\mathcal{R}^{r^2}$ under the spectral norm; \revision{see Example 5.8 in \cite{wainwright2019high}}), it follows that $\log \mathcal{N}(\epsilon, F_U, d_{L_2(\mu_{X^n})}) \leq r^2 \log(1+\revision{4}/\epsilon)$ as well. Our result then follows by applying Lemma \ref{lemma:corollary_3.7_but_real_value}.
    \end{proof}
% \Tongzheng{I think we can emphasize this results a little bit, as this is the reason that we can obtain a fast rate.}
% We defer the proof of Lemma \ref{proposition:mu_upp_bound_const_mu_hat_plus_O(1/n)} to our online report in \cite{onlinereport_new}.
% \vspace{-1mm}
We note \lemref{proposition:mu_upp_bound_const_mu_hat_plus_O(1/n)} is a new result, and it is key to obtaining the fast rate of $O\rbr{n_{\mathrm{nys}}^{-1}}$ for Nystr\"om kernel approximation in \lemref{prop:kernel_approx_main_paper}. Now we prove \lemref{prop:kernel_approx_main_paper}. 
\begin{proof}[Proof of \lemref{prop:kernel_approx_main_paper}]
% Our proof approach is the following. 
Recall that for any function $f \in \mathcal{L}_2(\mu)$, we denote $\mu(f) := \int_{\mathcal{S}} f(x) d\mu(x)$, and $\mu_{X^n}(f) = \frac{1}{n}\sum_{i=1}^n f(x_i)$. Observe that by \eqref{eq:K_minus_Khat_equal_P_orthogonal_complement_norm}, we have
\footnotesize
\begin{align*}
    \sqrt{(k_\alpha - \hat{k}_{\alpha,m}^n)(x,x)} = \|P_{m,\perp}^{X^n}k_\alpha(x,\cdot)\|_{\mathcal{H}_{k_\alpha}}.
\end{align*}
\normalsize
Thus it suffices for us to bound $\mu(\|P_{m,\perp}^{X^n}k_\alpha(x,\cdot)\|_{\mathcal{H}_{k_\alpha}}) = \int_{\mathcal{S}} \|P_{m,\perp}^{X^n}k_\alpha(x,\cdot)\|_{\mathcal{H}_{k_\alpha}} d\mu(x)$. We follow the idea in the proof of Proposition 9 in \cite{hayakawa2023sampling}, and use the observation that for any $C > 0$ and $f,f_- \in \mathcal{L}_2(\mu)$ where $f_- \leq f$ pointwise, we have
\footnotesize
\begin{align}
    & \mu(f) - C\mu_{X^n}(f) \nonumber \\
    & =  \resizebox{0.95\hsize}{!}{$(\mu(f) - \mu(f_-)) + (\mu(f_-) - C\mu_{X^n}(f_-)) + C(\mu_{X^n}(f_-) - \mu_{X^n}(f))$} \nonumber \\
    & \leq  (\mu(f) - \mu(f_-)) + (\mu(f_-) - C\mu_{X^n}(f_-)), \label{eq:mu_f_mu_f_minus_decomposition}
\end{align}
\normalsize
where we used the fact that $f_- \leq f$ pointwise to derive the inequality above. Now, consider setting 
\footnotesize
\begin{align*}
    & f(x) =\|P_{m,\perp}^{X^n}k_\alpha(x,\cdot)\|_{\mathcal{H}_{k_\alpha}}, \\
    & f_-(x) = \|P_{m,\perp}^{X^n} \Pi_r k_\alpha(x,\cdot)\|_{\mathcal{H}_{k_\alpha}} - \|P_{m,\perp}^{X^n} \Pi_{r,\perp} k_\alpha(x,\cdot) \|_{\mathcal{H}_{k_\alpha}},
\end{align*}
\normalsize
where $\Pi_r$ denotes the orthogonal projection onto $\{e_1,e_2,\dots,e_r\} \subset \mathcal{H}_{k_\alpha}$ which are the first $r$ eigenfunctions in the Mercer expansion of $k_\alpha(x,\cdot)$ (so each $e_i$ satisfies $\|e_i\|_{\mathcal{H}_{k_\alpha}} = 1$), and $\Pi_{r,\perp}$ is the orthogonal projection onto the complement of $\{e_1,\dots,e_r\}$ in $\mathcal{H}_{k_\alpha}$.  We first note that $f_-(x) \leq f(x)$ holds pointwise by the triangle inequality. Continuing from \eqref{eq:mu_f_mu_f_minus_decomposition}, picking $C=2$, we have 
% \revision{via appropriate decompositions (see supplement~\cite{ren2024stochasticnonlinearcontrolfinitedimensional} for details)}
% \Bo{delete the extra \} in following equation. }
\footnotesize
\begin{align}
    & \ \mu(\|P_{m,\perp}^{X^n} k_\alpha(x,\cdot)\|_{\mathcal{H}_{k_\alpha}}) - 2 \mu_{X^n}(\|P_{m,\perp}^{X^n} k_\alpha(x,\cdot)\|_{\mathcal{H}_{k_\alpha}}) \nonumber \\
\leq & \  \mu(\|P_{m,\perp}^{X^n} k_\alpha(x,\cdot)\|_{\mathcal{H}_{k_\alpha}}) \nonumber\\
& \ - \mu\left(\|P_{m,\perp}^{X^n} \Pi_r k_\alpha(x,\cdot)\|_{\mathcal{H}_{k_\alpha}} - \|P_{m,\perp}^{X^n} \Pi_{r,\perp}k_\alpha(x,\cdot)\|_{\mathcal{H}_{k_\alpha}}\right) \nonumber \\
& \ + \mu\left(\|P_{m,\perp}^{X^n} \Pi_r k_\alpha(x,\cdot)\|_{\mathcal{H}_{k_\alpha}} - \|P_{m,\perp}^{X^n} \Pi_{r,\perp}k_\alpha(x,\cdot)\|_{\mathcal{H}_{k_\alpha}}\right) \nonumber \\
& \ - \resizebox{0.9\hsize}{!}{$2 \mu_{X^n}\left(\|P_{m,\perp}^{X^n} \Pi_r k_\alpha(x,\cdot)\|_{\mathcal{H}_{k_\alpha}}- \|P_{m,\perp}^{X^n} \Pi_{r,\perp} k_\alpha(x,\cdot)\|_{\mathcal{H}_{k_\alpha}}\right)$} \nonumber \\
\leq & \  2\mu\left(\|P_{m,\perp}^{X^n} \Pi_{r,\perp}k_\alpha(x,\cdot)\|_{\mathcal{H}_{k_\alpha}}\right)  \nonumber \\
& \ + \!\! \mu\!\left(\!\|P_{m,\perp}^{X^n} \Pi_r k_\alpha(x,\cdot) \|_{\mathcal{H}_{k_\alpha}}\!\!\right) \!\!-\!  2 \mu_{X^n}\!\left(\|P_{m,\perp}^{X^n} \Pi_r k_\alpha(x,\cdot) \|_{\mathcal{H}_{k_\alpha}}\!\!\right)\!\nonumber \\
& \ + 2\mu_{X^n}\left(\|P_{m,\perp}^{X^n} \Pi_{r,\perp} k_\alpha(x,\cdot)\|_{\mathcal{H}_{k_\alpha}}\right) \label{eq:mu_P_minus_2mu_hat_decomp_3_terms}
\end{align}
\normalsize
% \Bo{How the third inequality is derived?}\zhaolin{For the final inequality, since
% $$\|P_{m,\perp}^{X^n}k_\alpha(x,\cdot) \| = \|P_{m,\perp}^{X^n}\Pi_{r}k_\alpha(x,\cdot) + P_{m,\perp}^{X^n}\Pi_{r,\perp}k_\alpha(x,\cdot) \|,$$
% by the triangle inequality, 
% $$\mu (\|P_{m,\perp}^{X^n}k_\alpha(x,\cdot) \|) - \mu(\|P_{m,\perp}^{X^n}k_\alpha(x,\cdot) \|) \leq \mu(\|P_{m,\perp}^{X^n}\Pi_{r,\perp}k_\alpha(x,\cdot) \|)$$
% So, 
% \begin{align*}
% & \ \mu(\|P_{m,\perp}^{X^n} k_\alpha(x,\cdot)\|_{\mathcal{H}_{k_\alpha}}) \\
% & \   - \mu\left(\|P_{m,\perp}^{X^n} \Pi_r k_\alpha(x,\cdot)\|_{\mathcal{H}_{k_\alpha}} - \|P_{m,\perp}^{X^n} \Pi_{r,\perp}k_\alpha(x,\cdot)\|_{\mathcal{H}_{k_\alpha}}\right) \\ 
% \leq & \ 2\mu(\|P_{m,\perp}^{X^n}\Pi_{r,\perp}k_\alpha(x,\cdot) \|).
% \end{align*}
% Meanwhile, we drop the term $-\mu(\|P_{m,\perp}^{X^n}\Pi_{r,\perp}k_\alpha(x,\cdot) \|)$ since it is negative. Combining these observations gives us the final inequality.
% }
From \eqref{eq:mu_P_minus_2mu_hat_decomp_3_terms}, we see that we need to bound three terms. The first term we need to bound is $2\mu(\|P_{m,\perp}^{X^n} \Pi_{r,\perp}k_\alpha(x,\cdot)\|_{\mathcal{H}_{k_\alpha}})$. By nonexpansiveness of $P_{m,\perp}^{X^n}$,
\footnotesize
\begin{align}
    &2\mu(\|P_{m,\perp}^{X^n} \Pi_{r,\perp}k_\alpha(x,\cdot)\|_{\mathcal{H}_{k_\alpha}}) \leq 2\mu(\|\Pi_{r,\perp}k_\alpha(x,\cdot)\|_{\mathcal{H}_{k_\alpha}}) \nonumber \\
    &\leq 2\sqrt{\mu(\Pi_{r,\perp}k_\alpha(x,\cdot)\|_{\mathcal{H}_{k_\alpha}}^2)} = 2\sqrt{\sum_{j > r} e_j(x)^2} = 2\sqrt{\sum_{j > r} \sigma_j}.\label{eq:mu_pi_r_perp_bound}
\end{align}
\normalsize
Next we bound 
\footnotesize
$$\left(\mu(\|P_{m,\perp}^{X^n} \Pi_r k_\alpha(x,\cdot) \|_{\mathcal{H}_{k_\alpha}})  - 2 \mu_{X^n}(\|P_{m,\perp}^{X^n} \Pi_r k_\alpha(x,\cdot) \|_{\mathcal{H}_{k_\alpha}})\right).$$ 
\normalsize
To do so, we apply \lemref{proposition:mu_upp_bound_const_mu_hat_plus_O(1/n)} (by picking $C = 2$ in that proposition) , which shows that there exists $c > 0$ such that for any $\delta > 0$, with probability at least $1 - \delta$,
\footnotesize
\begin{align}
    &\left(\mu(\|P_{m,\perp}^{X^n} \Pi_r k_\alpha(x,\cdot) \|_{\mathcal{H}_{k_\alpha}})  - 2 \mu_{X^n}(\|P_{m,\perp}^{X^n} \Pi_r k_\alpha(x,\cdot) \|_{\mathcal{H}_{k_\alpha}})\right) \nonumber \\
    &\leq c\left(\frac{r^2 \max\{\log(n/r^2),1\}}{n} + \frac{\log(1/\delta)}{n} \right).
    \label{eq:mu_minus_mu_hat_local_rademacher_bdd}
\end{align}
\normalsize
Finally, to bound $\mu_{X^n}(\|P_{m,\perp}^{X^n} \Pi_{r,\perp} k_\alpha(x,\cdot)\|_{\mathcal{H}_{k_\alpha}})$, we use Bernstein's inequality~\cite{wainwright2019high}. \revision{It  suffices to bound $\mu_{X^n}(\|\Pi_{r,\perp} k_\alpha(x,\cdot)\|_{\mathcal{H}_{k_\alpha}}$ since $\|P_{m,\perp}^{X^n} \Pi_{r,\perp} k_\alpha(x,\cdot)\|_{\mathcal{H}_{k_\alpha}} \leq \|\Pi_{r,\perp} k_\alpha(x,\cdot)\|_{\mathcal{H}_{k_\alpha}}$}.
% First, note that
% \small
% $$\|P_{m,\perp}^{X^n} \Pi_{r,\perp} k_\alpha(x,\cdot)\|_{\mathcal{H}_{k_\alpha}} \leq \|\Pi_{r,\perp} k_\alpha(x,\cdot)\|_{\mathcal{H}_{k_\alpha}}.$$
% \normalsize
Next, observe that 
\footnotesize
\begin{align*}
    &\mathrm{Var}(\|\Pi_{r,\perp} k_\alpha(x,\cdot)\|_{\mathcal{H}_{k_\alpha}}) \leq \mu\left(\|\Pi_{r,\perp} k_\alpha(x,\cdot)\|_{\mathcal{H}_{k_\alpha}}^2\right) =\sum_{j > r} \sigma_j.
\end{align*}
\normalsize
Then, since \small$0\leq \sqrt{(k_\alpha - \hat{k}_{\alpha,m}^n)(x,x)} \leq \sqrt{k_\alpha(x,x)} \leq 1$\normalsize~ for any $x \in \mathcal{X}$, by Bernstein's inequality, there exists a constant $c > 0$ such that for any $\delta > 0$, with probability at least $1 - \delta$, 
\footnotesize
\begin{align*}
    &\mu_{X^n}(\|\Pi_{r,\perp} k_\alpha(x,\cdot)\|_{\mathcal{H}_{k_\alpha}}) - \mu(\|\Pi_{r,\perp} k_\alpha(x,\cdot)\|_{\mathcal{H}_{k_\alpha}}) \\
    & \leq  c \log(1/\delta) \left(\sqrt{\mathrm{Var}(\|\Pi_{r,\perp}k_\alpha(x,\cdot) \|_{\mathcal{H}_{k_\alpha}})}/\sqrt{n} + 1/n\right)  \\
    & \leq  c \log(1/\delta) \left(\sqrt{\sum_{j > r} \sigma_j}/\sqrt{n} + 1/n\right)
\end{align*}
\normalsize
Since $\mu(\|\Pi_{r,\perp} k_\alpha(x,\cdot)\|_{\mathcal{H}_{k_{\alpha}}}) \leq \sqrt{\sum_{j > r}\sigma_j}$ (this follows from \eqref{eq:mu_pi_r_perp_bound}), we thus have (with probability at least $1 - \delta)$,
\footnotesize
\begin{align}
    & \ \mu_{X^n}(\|\Pi_{r,\perp} k_\alpha(x,\cdot)\|_{\mathcal{H}_{k_\alpha}}) \nonumber \\
    \leq & \   \mu(\|\Pi_{r,\perp} k_\alpha(x,\cdot)\}\|_{\mathcal{H}_{k_\alpha}}) + c \log(1/\delta) \left(\sqrt{\sum_{j > r} \sigma_j}/\sqrt{n} + 1/n\right) \nonumber \\
    \leq & \ \sqrt{\sum_{j > r}\sigma_j} + c \log(1/\delta) \left(\sqrt{\sum_{j > r} \sigma_j}/\sqrt{n} + 1/n\right).
\label{eq:mu_hat_perp_bernstein_bdd}
\end{align}
\normalsize
Thus, by plugging \eqref{eq:mu_pi_r_perp_bound}, \eqref{eq:mu_minus_mu_hat_local_rademacher_bdd} and \eqref{eq:mu_hat_perp_bernstein_bdd} into \eqref{eq:mu_P_minus_2mu_hat_decomp_3_terms}, we find that 
\footnotesize
\begin{align}
    &\mu(\|P_{m,\perp}^{X^n} k_\alpha(x,\cdot)\|_{\mathcal{H}_{k_\alpha}}) - 2 \mu_{X^n}(\|P_{m,\perp}^{X^n} k_\alpha(x,\cdot)\|_{\mathcal{H}_{k_\alpha}}) \nonumber \\
    &\leq c \log(1/\delta) \left(\sqrt{\sum_{j > r}\sigma_j} + \frac{r^2 \max\{\log(n/r^2),1\}}{n} \right). \label{eq:mu-2mu_hat_before_hayakawa_remark_1_bdd}
\end{align}
\normalsize
By picking $r \geq \lfloor (2\log n)^h/\beta^h \rfloor$ and the argument in Remark 1 in \cite{hayakawa2023sampling}, using the decay assumption on $\{\sigma_j\}$, we have that
\footnotesize
\begin{align}
    \sqrt{\sum_{j > r}\sigma_j} + \frac{r^2 \max\{\log(n/r^2),1\}}{n} = O\left(\frac{(\log n)^{2h+1}}{n} \right)
\label{eq:hayakawa_tail_eigenvalues_plus_r^2/n_bdd}
\end{align}
\normalsize
% This gives us then \eqref{eq:high_prob_kernel_approx_error_evals}. 
Thus, to bound $\mu(\|P_{m,\perp}^{X^n}k_\alpha(x,\cdot)\|_{\mathcal{H}_{k_\alpha}}),$ all it remains is to bound $2 \mu_{X^n}(\|P_{m,\perp}^{X^n} k_\alpha(x,\cdot)\|_{\mathcal{H}_{k_\alpha}}$. To this end, observe that 
\footnotesize
\begin{align}
& \ \mu_{X^n}(\|P_{m,\perp}^{X^n}k_\alpha(x,\cdot)\|) = \frac{1}{n} \sum_{i=1}^n \sqrt{(k_\alpha - \hat{k}_{\alpha,m}^n)(x_i,x_i)} \nonumber \\
    \leq & \  \sqrt{\frac{1}{n} \sum_{i=1}^n \sqrt{(k_\alpha - \hat{k}_{\alpha,m}^n)(x_i,x_i)}^2} = \sqrt{\frac{1}{n} \sum_{i > m} \lambda_i},
\label{eq:Khat_approx_sqrt_bdd_by_tail_evals}
\end{align}
\normalsize
where the first inequality follows from Jensen's, and the second follows from Lemma 2 in \cite{hayakawa2023sampling}.
The result then follows by plugging in \eqref{eq:hayakawa_tail_eigenvalues_plus_r^2/n_bdd} and \eqref{eq:Khat_approx_sqrt_bdd_by_tail_evals} into \eqref{eq:mu-2mu_hat_before_hayakawa_remark_1_bdd}, and using the assumption that $m \geq \lfloor (2\log n)^h/\beta^h \rfloor$ as well as the argument in Remark 1 in \cite{hayakawa2023sampling}, which by the decay assumption on the empirical eigenvalues gives us \small$\sqrt{\sum_{i > m} n^{-1} \lambda_i} = O\left( n^{-1}(\log n)^{h/2}\right).$\normalsize
\end{proof}
% \vspace{-1.3em}
\begin{remark}
\label{remark:high_prob_better_than_hayakawa}
    Our result significantly strengthens the high-probability result in Corollary 1 of \cite{hayakawa2023sampling}, since we attain an $O(1/n)$ high-probability rate rather than their $O(1/\sqrt{n})$ rate. 
\end{remark}
% \vspace{-3.5mm}
\begin{remark}
\label{remark:high_prob_vs_expected_kernel_approx_error}
We note that by Theorem 2 in \cite{belkin2018approximation}, both the (normalized) empirical and Mercer eigenvalues ($\frac{1}{n} \lambda_j$ and $\sigma_j$ respectively) satisfy 
% \footnotesize
% $$\lambda_i/n \leq C' \exp(-Ci^{1/d}), \quad \sigma_i \leq C' \exp(-Ci^{1/d})$$
% \normalsize
$\lambda_i/n \leq C' \exp(-Ci^{1/d})$ and $\sigma_i \leq C' \exp(-Ci^{1/d})$
for some measure-independent constants $C, C' > 0$, where $d$ is the dimension of $\mathcal{S}$. This justifies our assumption on eigenvalue decay in the statement of \lemref{prop:kernel_approx_main_paper}. To model cases where the decay exponent may be better than $d$, we use a general spectral decay exponent $h$.
% rather than $d$.
\end{remark}
% \vspace{-6.5mm}
\label{appendix:proof}
\subsection{Detailed Proof for~\propref{thm:stat_error}}\label{appendix:stat_error}
For notational ease, we define $\Phi$ as the concatenation of $\phi(s, a)$ over all $(s, a) \in \mathcal{S}\times \mathcal{A}$, and define the operator $P^\pi$ as
% \vspace{-2mm}
\footnotesize
\begin{align*}
    (P^{\pi} f)(s, a) = \mathbb{E}_{(s^\prime, a^\prime) \sim P(s, a) \times \pi} f(s^\prime, a^\prime).
\end{align*}
\normalsize
We additionally define $\tilde{w}$ which satisfies the condition
\footnotesize
\begin{align*}
    \tilde{w} = & \left(\mathbb{E}_{\nu} \left[\phi(s, a) \phi(s, a)^\top\right]\right)^{-1} \\
    % & 
    % \resizebox{0.96\hsize}{!}{$
    % \left(\mathbb{E}_{\nu}\left[\phi(s, a) \left(r(s, a) + \gamma \mathbb{E}_{(s^\prime, a^\prime) \sim P(s, a) \times \pi} \left[\phi(s^\prime, a^\prime)^\top \tilde{w}\right]\right)\right]\right),$}
    % \nonumber
    & \left(\mathbb{E}_{\nu}\left[\phi(s, a) \left(r(s, a) + \gamma \mathbb{E}_{(s^\prime, a^\prime) \sim P(s, a) \times \pi} \left[\phi(s^\prime, a^\prime)^\top \tilde{w}\right]\right)\right]\right),
\end{align*}
\normalsize
and let $\tilde{Q}(s, a) = \phi(s, a)^\top \tilde{w}$. It is straightforward to see that $\tilde{w}$ is the fixed point of the population (\ie, $n\to\infty$) projected least square update \eqref{eq:projected_least_square}. Furthermore, 
note that
\footnotesize
\begin{align*}
    \tilde{w} = & 
    % \resizebox{0.96\hsize}{!}{$\left(\mathbb{E}_{\nu} \left[\phi(s, a) \left(\phi(s, a) - \gamma \mathbb{E}_{(s^\prime, a^\prime) \sim P(s, a) \times \pi} \phi(s^\prime, a^\prime)\right)^{\top}\right]\right)^{-1}$} \nonumber \\
    \left(\mathbb{E}_{\nu} \left[\phi(s, a) \left(\phi(s, a) - \gamma \mathbb{E}_{(s^\prime, a^\prime) \sim P(s, a) \times \pi} \phi(s^\prime, a^\prime)\right)^{\top}\right]\right)^{-1}\nonumber \\
    & \mathbb{E}_{\nu}\left[\phi(s, a) r(s, a)\right].
\end{align*}
\normalsize
With this characterization, \revision{ letting $\phi_{sup} := \sup_{s,a}\|\phi(s,a)\|$}, we have $\|\tilde{w}\| = O\left(\revision{\phi_{\sup}}\Upsilon_2^{-1}\right)$.
% % \Tongzheng{Seems we need an additional assumption.}
We define the operator $D_{\nu}$ as
\footnotesize
\begin{align}
    \|f\|_{\nu}^2 = \langle f, D_{\nu} f\rangle,
\end{align}
\normalsize
and we omit the subscript when $\nu$ is the Lebesgue measure, and we also use $\hat{\Pi}_{\nu}$ and $\hat{P}^\pi$ to define the empirical counterpart of $\Pi_{\nu}$ and $P^\pi$.
With the update \eqref{eq:projected_least_square}, we have that
\footnotesize
\begin{align*}
    \Phi \hat{w}_{t+1} = \hat{\Pi}_{\nu}(r + \gamma \hat{P}^\pi \Phi\hat{w}_t), \quad
    \Phi \tilde{w} = \Pi_{\nu}(r + \gamma P^\pi \Phi\tilde{w}),
\end{align*}
\normalsize
which leads to
\footnotesize
\begin{align*}
    \Phi(\tilde{w} - \hat{w}_{t+1}) = & (\Pi _\nu- \hat{\Pi}_\nu) r + \gamma (\Pi_\nu P^{\pi})\Phi(\tilde{w} - \hat{w}_t) \nonumber \\
    & + \gamma(\Pi_\nu P^{\pi} - \hat{\Pi}_\nu \hat{P}^\pi) \Phi \hat{w}_t.
\end{align*}
\normalsize
With the triangle inequality, we have that
\footnotesize
\begin{align*}
    \left\|\Phi(\tilde{w} - \hat{w}_{t+1})\right\|_{\nu} \leq &  \gamma \left\|\Phi(\tilde{w} - \hat{w}_{t})\right\|_{\nu} + \left\|\left(\Pi_{\nu} - \hat{\Pi}_{\nu}\right) r\right\|_{\nu} \nonumber\\
    & + \gamma \left\|\left(\Pi_{\nu} P^{\pi} - \hat{\Pi}_{\nu} \hat{P}^\pi\right) \Phi \hat{w}_t\right\|_{\nu},
\end{align*}
\normalsize
where we use the contractivity under $\|\cdot\|_{\nu}$. Telescoping gives
% over $t$ leads to
\footnotesize
\begin{align*}
    \left\|\Phi(\tilde{w} - \hat{w}_{T})\right\|_{\nu} \leq & \gamma^T \left\|\Phi(\tilde{w} - \hat{w}_{0})\right\|_{\nu} \!+\! \frac{1}{1-\gamma}\left\|\left(\Pi_{\nu} - \hat{\Pi}_{\nu}\right) r\right\|_{\nu} \\
    & + \frac{\gamma}{1-\gamma} \max_{t\in [T]}\left\|\left(\Pi_{\nu} P^{\pi} - \hat{\Pi}_{\nu} \hat{P}^\pi\right) \Phi \hat{w}_t\right\|_{\nu}.
\end{align*}
\normalsize
Note that $\|\phi(s, a)\| = \revision{O(\phi_{\sup})}$, 
% $\|\phi(s, a)\|_{\infty} = O(\tilde{g}_\alpha)$ 
and $\|\tilde{w}\| = O(\revision{\phi_{\sup}}\Upsilon_2^{-1})$. 
We follow the proof of Theorem 5.1 from \cite{abbasi2019politex} to bound the second term and the third term. In particular, as the subsequent analysis makes clear, the third term dominates the second term, so we focus on bounding the third term. For notational simplicity, we denote $w := \hat{w}_t$ in this section.
% The only difference is that, in \cite{abbasi2019politex} the proposed algorithm samples the data from the Markov chain, while in our setting we sample the i.i.d. data from the simulator, and hence we don't need to deal with the martingale concentration. This finishes the proof.
Note that
\footnotesize
\begin{align*}
    & \left\|\left(\Pi_{\nu} P^{\pi} - \hat{\Pi}_{\nu} \hat{P}^{\pi}\right)\Phi w\right\|_{\nu} 
    \! =\!  \left\|D_{\nu}^{1/2}\left(\Pi_{\nu} P^{\pi} - \hat{\Pi}_{\nu} \hat{P}^{\pi}\right)\Phi w\right\|\\
    \leq & \left\|D_{\nu}^{1/2} \Pi_{\nu}\left(P^\pi - \hat{P}^\pi\right)\Phi w\right\| + \left\|D_{\nu}^{1/2} \left(\Pi_{\nu} - \hat{\Pi}_{\nu}\right)\hat{P}^\pi \Phi w\right\|
\end{align*}
\normalsize
For the first term, as $\Pi_{\nu}$ can be written as $\Pi_{\nu} = \Phi\left(\Phi^\top D_{\nu} \Phi\right)^{-1} \Phi D_{\nu}$, we have
\footnotesize
\begin{align*}
    & \left\|D_{\nu}^{1/2} \Pi_{\nu}\left(P^\pi - \hat{P}^\pi\right)\Phi w\right\|\\
    \leq & \left\|D_{\nu}^{1/2} \Phi\right\| \left\|\left(\Phi^\top D_{\nu} \Phi\right)^{-1}\right\|\left\|\Phi^\top D_{\nu} \left(P^{\pi}-\hat{P}^\pi\right)\Phi w\right\|\\
    = & O\left(\revision{\phi_{\sup}}\Upsilon_1^{-1} \left\|\Phi^\top D_{\nu} \left(P^{\pi}-\hat{P}^\pi\right)\Phi\right\|\|w\| \right).
\end{align*}
\normalsize
For the second term, we have
\footnotesize
\begin{align*}
    & \left\|D_{\nu}^{1/2}\left(\Pi_{\nu} - \hat{\Pi}_{\nu}\right)\hat{P}^\pi \Phi w\right\|\\
    = & \left\|D_{\nu}^{1/2} \Phi\left(\!\left(\Phi^\top D_{\nu}\Phi\!\right)^{-1}\Phi D_{\nu} \!-\! \left(\Phi^\top \hat{D}_{\nu} \!\Phi\right)^{-1} \Phi\hat{D}_{\nu}\right)\hat{P}^\pi \Phi w\!\right\|\\
    \leq & \left\|D_{\nu}^{1/2}\Phi \left(\Phi^\top D_{\nu} \Phi\right)^{-1} \Phi\left(D_{\nu} - \hat{D}_{\nu}\right)\hat{P}^\pi \Phi w\right\|\\
    & +\left\|D_{\nu}^{1/2} \Phi\left(\left(\Phi^\top D_{\nu} \Phi\right)^{-1} - \left(\Phi^\top \hat{D}_{\nu} \Phi\right)^{-1}\right)\Phi \hat{D}_{\nu} \hat{P}^\pi \Phi w\right\|.
\end{align*}
\normalsize
For the former term, we have that
\footnotesize
\begin{align*}
    & \left\|D_{\nu}^{1/2}\Phi \left(\Phi^\top D_{\nu} \Phi\right)^{-1} \Phi\left(D_{\nu} - \hat{D}_{\nu}\right)\hat{P}^\pi \Phi w\right\|\\
    \leq & \left\|D_{\nu}^{1/2} \Phi\right\| \left\|\left(\Phi^\top D_{\nu} \Phi\right)^{-1}\right\| \left\|\Phi(D_{\nu} - \hat{D}_{\nu}) \hat{P}^\pi \Phi w\right\|\\
    = & O\left(\revision{\phi_{\sup}} \Upsilon_1^{-1} \left\|\Phi(D_{\nu} - \hat{D}_{\nu}) \hat{P}^\pi \Phi\right\|\|w\|\right).
\end{align*}
\normalsize
For the latter term, note that
\footnotesize
\begin{align*}
    & \left(\Phi^\top D_{\nu} \Phi\right)^{\!\!-1}\!\! -\!\! \left(\!\Phi^\top \hat{D}_{\nu} \Phi\!\right)^{\!\!-\!1}\!\! =\!\!(\!\Phi^\top D_{\nu} \Phi\!)^{\!\!-\!1} \left(\!\Phi^\top \left(\!\hat{D}_{\nu} \!\!-\!\! D_{\nu}\!\right) \Phi\!\right)(\!\Phi^\top D_{\nu} \Phi\!)^{\!\!-\!1}.
\end{align*}
\normalsize
Hence, we have
\footnotesize
\begin{align*}
    & \left\|D_{\nu}^{1/2} \Phi\left(\left(\Phi^\top D_{\nu} \Phi\right)^{-1} - \left(\Phi^\top \hat{D}_{\nu} \Phi\right)^{-1}\right)\Phi \hat{D}_{\nu} \hat{P}^\pi \Phi w\right\|\\
    = & O\left( \revision{\phi_{\sup}^3}\Upsilon_1^{-1} \left\|\Phi^\top \left(\hat{D}_{\nu} - D_{\nu}\right) \Phi\right\| \left\|\left(\Phi^\top \hat{D}_{\nu} \Phi\right)^{-1}\right\|\|w\|\right)
\end{align*}
\normalsize
Now we bound the remaining terms $\left\|\Phi^\top D_{\nu} \left(P^{\pi}-\hat{P}^\pi\right)\Phi\right\|$, $\left\|\Phi(D_{\nu} - \hat{D}_{\nu}) \hat{P}^\pi \Phi\right\|$ and $\left\|\Phi^\top \left(\hat{D}_{\nu} - D_{\nu}\right) \Phi\right\|$ with standard concentration inequalities. We start with $\left\|\Phi^\top \left(\hat{D}_{\nu} - D_{\nu}\right) \Phi\right\|$. Using the matrix Azuma inequality\cite{tropp2015introduction}, 
\footnotesize
\begin{align*}
    \left\|\Phi^\top \left(\hat{D}_{\nu} - D_{\nu}\right) \Phi\right\| = \tilde{O}\left( \revision{\phi_{\sup}^2} n^{-1/2}\right).
\end{align*}
\normalsize
Hence, for sufficient large $n$, we have that $ \left\|\left(\Phi^\top \hat{D}_{\nu} \Phi\right)^{-1}\right\| = O(\Upsilon_1^{-1})$. Moreover, as $\hat{P}^\pi$ is a stochastic operator, we have
\footnotesize
\begin{align*}
    \left\|\Phi(D_{\nu} - \hat{D}_{\nu}) \hat{P}^\pi \Phi\right\| = \tilde{O}\left(\revision{\phi_{\sup}} n ^{-1/2}\right).
\end{align*}
\normalsize
Now we turn to the term $\left\|\Phi^\top D_{\nu} \left(P^{\pi}-\hat{P}^\pi\right)\Phi\right\| $. Note that
\footnotesize
\begin{align*}
    & \left\|\!\Phi^\top D_{\nu} \left(P^{\pi}\!\!-\!\!\hat{P}^\pi\right)\Phi\!\right\|\! \leq \! \left\|\Phi(D_{\nu}\! -\! \hat{D}_{\nu}) \hat{P}^\pi \Phi\right\| \!+\! \left\|\!\Phi\left(\hat{D}_{\nu} \hat{P}^\pi\! -\! D_{\nu} P^{\pi}\right) \Phi\!\right\|.
\end{align*}
\normalsize
We already have the bound for the first term. For the second term, we still apply the matrix Azuma's inequality and obtain
\footnotesize
\begin{align*}
    \left\|\Phi\left(\hat{D}_{\nu} \hat{P}^\pi - D_{\nu} P^{\pi}\right) \Phi\right\| = \tilde{O}\left(\revision{\phi_{\sup}^2} n ^{-1/2}\right).
\end{align*}
\normalsize
Thus for large enough $n$, we have $\|w\| := \|\hat{w}_t\|=  \Theta(\|\tilde{w}\|)$. Combining the previous results completes the proof.
% \Tongzheng{I'm not sure if we need more details for this claim...} 
\qed
% \vspace{-5mm}
\subsection{Detailed Proof for Lemma~\ref{lem:regret}}
\label{appendix:regret}
Note that $\|\phi(s, a)\|_2 = O(\revision{\phi_{\sup}})$ and $\|\tilde{w}\|_2 = O(\revision{\phi_{\sup}\Upsilon_2^{-1}})$. By Remark 6.7 in \cite{agarwal2021theory}, we know that $\log \pi(a|s)$ is smooth with the smoothness parameter $\beta = O(\revision{\phi_{\sup}^2})$. As a result, 
\footnotesize
\begin{align*}
    \log \frac{\pi_{k+1}(a|s)}{\pi_k(a|s)} \geq \eta \left(\nabla_{\theta} \log \pi_k(a|s)\right)^\top \hat{w}_{k,T} - \frac{\eta^2 \beta}{2}\left\|\hat{w}_{k,T}\right\|_2^2.
\end{align*}
\normalsize
With this inequality, we have
\footnotesize
\begin{align*}
    & \mathbb{E}_{s\sim d^{\tilde{\pi}}} \left[\mathrm{KL}(\tilde{\pi}(\cdot|s)\|\pi_k(\cdot|s)) -\mathrm{KL}(\tilde{\pi}(\cdot|s)\|\pi_{k+1}(\cdot|s)) \right]\\
    = & \mathbb{E}_{s\sim d^{\tilde{\pi}}, a\sim \tilde{\pi}(\cdot|s)} \left[\log\frac{\pi_{k+1}(a|s)}{\pi_k(a|s)}\right]\\
    \geq & \eta\mathbb{E}_{s\sim d^{\tilde{\pi}}, a\sim \tilde{\pi}(\cdot|s)}\left[\left(\nabla_{\theta} \log \pi_k(a|s)\right)^\top \hat{w}_{k,T}\right] - \frac{\eta^2 \beta}{2}\left\|\hat{w}_{k,T}\right\|_2^2\\
    = & \eta\mathbb{E}_{s\sim d^{\tilde{\pi}}, a\sim \tilde{\pi}(\cdot|s)}\left[A^{\pi_k}(s, a)\right] - \frac{\eta^2 \beta}{2}\left\|\hat{w}_{k,T}\right\|_2^2\\
    & + \eta \mathbb{E}_{s\sim d^{\tilde{\pi}}, a\sim \tilde{\pi}(\cdot|s)}\left[\left(\nabla_{\theta} \log \pi_k(a|s)\right)^\top \hat{w}_{k,T} - A^{\pi_k}(s, a)\right] \\
    = & (1-\gamma) \eta \left(V^{\tilde{\pi}} - V^{\pi_k}\right) - \frac{\eta^2 \beta}{2}\left\|\hat{w}_{k,T}\right\|_2^2 - \eta \mathrm{err}_k,
\end{align*}
\normalsize
where in the last step we use the performance difference lemma (Lemma 3.2 in \cite{agarwal2021theory}). Telescoping over $k$ gives
\footnotesize
\begin{align*}
    \sum_{k=0}^{K-1} \left\{V^{\tilde{\pi}} - V^{\pi_k}\right\} \leq &\frac{\mathbb{E}_{s\sim d^{\tilde{\pi}}} \left[\mathrm{KL}(\tilde{\pi}(\cdot|s)\|\pi_0(\cdot|s)) )\right]}{1-\gamma}\\
    & + \frac{\eta^2 \beta}{2}\sum_{k=0}^{K-1} \left\|\hat{w}_{k,T}\right\|_2^2 + \frac{1}{1-\gamma}\sum_{k=0}^{K-1} \mathrm{err}_k.
\end{align*}
\normalsize
Note that $\mathrm{KL}(\tilde{\pi}(\cdot|s)\|\pi_0(\cdot|s)) \leq \log |\mathcal{A}|$ and $\left\|\hat{w}_{k,T}\right\| = O\left(\|\tilde{w}\|\right) = O(\revision{\phi_{\sup}} \Upsilon_2^{-1})$, we finish the proof.
\qed
% \vspace{-1em}
\subsection{Detailed Proof for Theorem~\ref{thm:opt_policy}}
\label{appendix:opt_policy}
With Lemma~\ref{lem:regret} and our choice of $\eta$, we only need to bound the term $\mathrm{err}_k$. We \revision{observe that}
\tiny
\begin{align*}
    & \mathrm{err}_k =  \mathbb{E}_{s\sim d^{\tilde{\pi}}, a\sim\tilde{\pi}(\cdot|s)} \left[A^{\pi_k}(s, a) - \revision{\hat{w}_{k,T}}^\top \left(\phi(s, a) - \mathbb{E}_{a^\prime\sim\pi_k(s)}\left[\phi(s, a^\prime)\right]\right)\right]
\end{align*}
\normalsize
Note that
\footnotesize
\begin{align*}
    & \mathbb{E}_{d^{\tilde{\pi}}, a\sim\tilde{\pi}(\cdot|s)} \left[A^{\pi_k}(s, a) - \revision{\hat{w}_{k,T}}^\top \left(\phi(s, a) - \mathbb{E}_{a^\prime\sim\pi_k(s)}\left[\phi(s, a^\prime)\right]\right)\right]\\
    = & \mathbb{E}_{s\sim d^{\tilde{\pi}}, a\sim \tilde{\pi}(\cdot|s)}\left[Q^{\pi_k}(s, a) - \revision{\hat{Q}_T}^{\pi_k}(s, a)\right] \\
    & - \mathbb{E}_{s\sim d^{\tilde{\pi}}, a\sim \pi_k(\cdot|s)}\left[Q^{\pi_k}(s, a) - \revision{\hat{Q}_T}^{\pi_k}(s, a)\right]\\
    \leq & \sqrt{\mathbb{E}_{s\sim d^{\tilde{\pi}}, a\sim \tilde{\pi}(\cdot|s)}\left[\left(Q^{\pi_k}(s, a) - \revision{\hat{Q}_T}^{\pi_k}(s, a)\right)^2\right]} \\
    & + \sqrt{\mathbb{E}_{s\sim d^{\tilde{\pi}}, a\sim \pi_k(\cdot|s)}\left[\left(Q^{\pi_k}(s, a) - \revision{\hat{Q}_T}^{\pi_k}(s, a)\right)^2\right]}.
\end{align*}
\normalsize
% Meanwhile, 
% \footnotesize
% \begin{align*}
%     & \mathbb{E}_{s\sim d^{\tilde{\pi}}, a\sim \tilde{\pi}(\cdot|s)}\left[ (\tilde{w} - \hat{w}_{k,T})^\top\left(\phi(s, a) - \mathbb{E}_{a^\prime\sim\pi_k(s)}\left[\phi(s, a^\prime)\right]\right)\right]\\
%     = & \mathbb{E}_{s\sim d^{\tilde{\pi}}, a\sim \tilde{\pi}(\cdot|s)}\left[\tilde{Q}^{\pi_k}(s, a) - \hat{Q}^{\pi_k}_T(s, a)\right] \\
%     & - \mathbb{E}_{s\sim d^{\tilde{\pi}}, a\sim \pi_k(\cdot|s)}\left[\tilde{Q}^{\pi_k}(s, a) - \hat{Q}^{\pi_k}_T(s, a)\right]\\
%     \leq & \sqrt{\mathbb{E}_{s\sim d^{\tilde{\pi}}, a\sim \tilde{\pi}(\cdot|s)}\left[\left(\tilde{Q}^{\pi_k}(s, a) - \hat{Q}^{\pi_k}_t(s, a)\right)^2\right]} \\
%     & + \sqrt{\mathbb{E}_{s\sim d^{\tilde{\pi}}, a\sim \pi_k(\cdot|s)}\left[\left(\tilde{Q}^{\pi_k}(s, a) - \hat{Q}^{\pi_k}_T(s, a)\right)^2\right]}.
% \end{align*}
% \normalsize
\revision{Plugging in $\pi^*$ as $\tilde{\pi}$}, combining the results in the previous subsection, we have concluded the proof. 
\qed
\revision{
\subsection{Bounding the norms of $\psi_{rf}(s,a)$ and $\psi_{nys}(s,a)$}
\label{appendix:bounding_phi_norms}
We have the following lemma which bounds the norm of $\|\psi_{rf}(s,a)\|$ and $\|\psi_{nys}(s,a) \|$.
\begin{lemma}
\label{lemma:bounding_phi_norms}
    The random features and Nystrom features constructed in Algorithm 2 satisfy the bounds
    \begin{align*}
        & \ \norm*{\psi_{rf}(s,a)} \leq \sqrt{m}\tilde{g}_\alpha \\
        & \ \|\psi_{nys}(s,a)\| \leq \sqrt{\frac{n_{nys} \tilde{g}_\alpha^2}{\lambda_m}}
    \end{align*}
    respectively, where $\lambda_m$ denotes the $m$-th largest eigenvalue of $K^{(n_{nys})}$.
\end{lemma}
\begin{proof}
Recall that 
\begin{align*}
    & \ \psi_{rf}(s,a) := \{\frac{g_\alpha(f(s,a))}{\alpha^d}\cos(\omega_i^\top f(s,a)+b_i)\}_{i=1}^m \\
    & \ \psi_{nys}(s,a) := \{\frac{g_\alpha(f(s,a))}{\alpha^d \sqrt{\lambda_i}}\sum_{\ell=1}^{n_{nys}} U_{\ell,i}k_\alpha\left(x_\ell,\frac{f(s,a)}{1-\alpha^2}\right)\}_{i=1}^m,
\end{align*}
where we recall that $U \in \bbR^{n_{nys} \times n_{nys}}$ is the $U$ that appears in the eigendecomposition of the empirical Nystrom Gram matrix $K^{(nys)} = U\Lambda U^T$. 

For random features, it is not hard to see that $\norm*{\psi_{rf}(s,a)} = O(\sqrt{m}\tilde{g}_\alpha)$, since each random feature takes the form of a randomly shifted cosine $\frac{g_\alpha(f(s,a))}{\alpha^d}$, which in our paper is upper bounded by the expression $\tilde{g}_\alpha$. 

For notational convenience, we denote
\begin{align*}
    \tilde{K}_\alpha := \begin{bmatrix}
        k_\alpha(x_1,f(s,a)) \\ 
        \dots \\ 
        k_\alpha(x_{n_{nys}}, f(s,a))
    \end{bmatrix}.
\end{align*}
Then, for Nystrom features, we note that since $0 \leq k_\alpha(\cdot,\cdot) \leq 1$ and $U$ is an orthonormal matrix, we see that
\small
\begin{align*}
    &  \ \|\psi_{nys}(s,a)\|^2 \\
    \leq & \ \tilde{g}_\alpha^2 \left(\tilde{K}_\alpha^\top U_{:,:m} \Lambda_m^{-1} U_{:,:m}^\top \tilde{K}_\alpha\right) \\
    \leq & \ \tilde{g}_\alpha^2\cdot \frac{1}{\lambda_m} \left(\tilde{K}_\alpha^\top U_{:,:m} U_{:,:m}^\top \tilde{K}_\alpha\right) \\
    \labelrel{\leq}{eq:use_U_col_indep} & \ \frac{1}{\lambda_m}\tilde{g}_\alpha^2\left(\left\|\tilde{K}_\alpha\right\|^2\right) \\
    \leq & \ n_{nys} \frac{1}{\lambda_m}\tilde{g}_\alpha^2
\end{align*}
\normalsize
where we note that $U_{:,:m}$ denotes the first $m$ columns of $U$, $\Lambda_m$ denoted the top $m$ by $m$ block of $\Lambda$,and we used independence of the columns of $U$ to derive (\ref{eq:use_U_col_indep}). Thus, for Nystrom features, we have the bound $\|\psi_{nys}(s,a)\| \leq \sqrt{\frac{n_{nys} \tilde{g}_\alpha^2}{\lambda_m}}$. 
\end{proof}
}

% \vspace{-1mm}

\revision{
\subsection{Proofs of Lemmas \ref{lemma:assumption_2_holds_rf}, \ref{lemma:assumption_2_holds_nys},\ref{lemma:we_satisfy_assumption_3}}
\label{appendix:when_assumptions_hold}
\begin{lemma}[Restatement of Lemma \ref{lemma:assumption_2_holds_rf}]
\label{lemma:assumption_2_holds_rf_appendix}
The random features $\psi_{rf}(s,a)$ are linearly independent almost surely, i.e. the probability of randomly drawing $\{\omega_i,b_i \}_{i=1}^m \sim N(0,\frac{1}{\sigma^2}I_d) \times U([0,2\pi])$ is 0.
\end{lemma}
\begin{proof}
Without loss of generality, we consider the case that $\alpha = 0$, such that the random features take the form $\psi_{rf}(s,a) = \{\cos(\omega_i^\top f(s,a) + b_i) \}_{i=1}^m$, where $\omega_i \sim N(0,\sigma^{-2} I_d)$ and $b_i \sim \mathrm{Unif}([0,2\pi])$, i.e. the random features are randomly shifted cosines; we note that the general case when $\alpha > 0$ can be easily reduced to this case since the factor $g_\alpha(f(s,a))/\alpha^d$ shared by all the features is strictly positive. Suppose the random features are not linearly independent over the space $f(\mathcal{S} \times \mathcal{A})$. Then there exist scalars $\{a_i\}_{i=1}^m$, not all of which are 0, such that 
    $$ S(x) := \sum_{i=1}^m a_i \cos(\omega_i^\top x + b_i) = 0, \quad \forall x \in f(\mathcal{S} \times \mathcal{A}).$$
    Pick some $x^0$ in the interior of $f(\mathcal{S} \times \mathcal{A})$, and denote $y_i := \cos(\omega_i^\top x^0 + b_i)$; the fact that $x^0$ lies in the interior of $f(\mathcal{S} \times \mathcal{A})$ means that all derivatives of $S(x)$ vanish at $x = x^0$. Now, for any natural number $n \leq m-1$, differentiate $S(x)$ with respect to the first coordinate $x_1$ at the point $x^0$, giving us
    \begin{align*}
        \sum_{i=1}^m a_i (\omega_i)_1^{2n} y_i = 0. 
    \end{align*}
    This then yields the following relation
    \footnotesize
    \begin{align*}
        Va = 0, \  \mbox{ where } \!V \!:=\! 
        \begingroup % keep the change local
\setlength\arraycolsep{2pt}
\begin{pmatrix}
        1 &  \dots & 1 \\
        (\omega_1)_1^2 &  \dots & (\omega_m)_1^2 \\
        \vdots & \dots & \vdots \\
        \left((\omega_1)_1^{2}\right)^{m-1} & \dots & \left((\omega_m)_1^{2}\right)^{m-1}
        \end{pmatrix}, \  a := 
        \begin{pmatrix}
            a_1 \\
            \vdots \\
            a_m
\end{pmatrix}.
\endgroup
    \end{align*}
    \normalsize
    Since $V \in \bbR^{m \times m}$ is a Vandermonde matrix, its determinant is non-zero so long as the $(\omega_i)_1$'s are all distinct, which happens with probability 1 if the $\omega_i$'s are drawn iid from a Gaussian distribution. This implies then that the scalars $\{a_i\}_{i=1}^m$ must all be 0, which is a contradiction. Thus, the random features must be linearly independent over the space $f(\mathcal{S} \times \mathcal{A})$.
\end{proof}

\begin{lemma}[Restatement of Lemma \ref{lemma:assumption_2_holds_nys}]
\label{lemma:assumption_2_holds_nys_appendix}
Consider any feature dimension $0 < m \leq n_{Nys}$. Suppose the Nystrom Gram matrix $K^{(n_{Nys})}$ has rank at least $m$. Then, the Nystrom features $\psi_{nys}(s,a)$ are linearly independent.
\end{lemma}
\begin{proof}
     Without loss of generality, for simplicity we consider the case when $\alpha = 0$, in which case the Nystrom features are constructed as follows. First, let $m$ denote the feature dimension, and let $n_{nys} \geq m$ denote the number of random samples $\left\{x_1,\dots,x_{n_{nys}} \right\}$ drawn from $\mathcal{S}$ following the distribution $\mu_{Nys}$. Recall that the ($n_{nys}$ by $n_{nys}$) Gram matrix $K$ is constructed as follows: $K_{i,j} = k(x_i,x_j) := \exp\left(-\frac{-\|x_i,x_j\|^2}{2\sigma^2} \right).$ Compute the eigendecomposition $KU = \Lambda U$, where $UU^\top = U^\top U = I$, and $\Lambda$ denoting a diagonal matrix with eigenvalues $\lambda_1 \geq \dots \geq \lambda_{n_{nys}} \geq 0$. Suppose that $m$-th largest eigenvalue $\lambda_m > 0$. Then, for any $(s,a)$ pair,the $m$ Nystrom feature functions are given as 
    $$(\psi_{nys})_i(s,a) := \frac{1}{\sqrt{\lambda_i}}\sum_{\ell=1}^{n_{nys}} U_{\ell,i} k(x_\ell, f(s,a)), \quad \quad i \in [m].$$
Note that by construction, the $m$ eigenvectors $U_{:,\ell}$ are linearly independent (so long as $\lambda_m > 0)$. Suppose that the $m$ Nystrom feature functions are not linearly independent. That implies then that there exists $m$ scalars $\{a_i\}_{i=1}^m \subset \bbR$ (at least some of which are nonzero) such that for any $(s,a)$,
\begin{align*}
    & \ \sum_{i=1}^m a_i (\psi_{nys})_i(s,a) = 0 \\
    \iff & \ \sum_{i=1}^m a_i \left( \frac{1}{\sqrt{\lambda_i}}\sum_{\ell=1}^{n_{nys}} U_{\ell,i} k(x_\ell, f(s,a)) \right) = 0 \\
    \iff & \ \sum_{\ell=1}^{n_{nys}} k(x_\ell, f(s,a)) \left(\sum_{i=1}^m a_i \left( \frac{1}{\sqrt{\lambda_i}}\sum_{\ell=1}^{n_{nys}} U_{\ell,i}\right) \right) = 0.
\end{align*}

Note that since $k(\cdot,\cdot) > 0$, the above derivation implies that we must have for every $\ell \in [n_{nys}]$:
\begin{align*}
    & \ \sum_{i=1}^m a_i \left( \frac{1}{\sqrt{\lambda_i}}\sum_{\ell=1}^{n_{nys}} U_{\ell,i}\right) = 0 \implies  \ U_{:,:m} \Lambda_m^{-1/2} a = 0, 
\end{align*}
where $U_{:,:m} \in \bbR^{n_{nys} \times m}$ denotes the first $m$ columns of $U$, $\Lambda_m$ denotes the top $m$ by $m$ block of $\Lambda$, and $a := \mathrm{vec}(a_1,\dots,a_m) \in \bbR^m$. Since the columns in $U$ are linearly independent by design, and $\Lambda_m \succ 0$ (by our assumption that $\lambda_m > 0$), it follows that the vector $a \in \bbR^m$ has to be 0, which contradicts the assumption that the $m$ Nystrom features are linearly dependent. Thus, whenever $\lambda_m > 0$, it follows that the Nystrom features are linearly independent. 

% Empirically, we can always choose the feature dimension $m$ in a way that satisfies $\lambda_m \geq \epsilon > 0$, where $\epsilon > 0$ can be considered to be a small numerical cutoff (e.g. $10^{-6}$) beyond which we consider any smaller value to be 0. For instance, if the gram matrix $K \in \bbR^{n_{nys} \times n_{nys}}$ has $m_0$ eigenvalues larger than $\epsilon > 0$, we can set $m$ to be $m_0$. This ensures that the top $m$ eigenvalues of the Gram matrix $K$ are strictly larger than $\epsilon > 0$, which by our preceding analysis, shows that the $m$ Nystrom feature functions are linearly independent. 
\end{proof}

\begin{lemma}[Restatement of Lemma \ref{lemma:we_satisfy_assumption_3}]
\label{lemma:we_satisfy_assumption_appendix}
Suppose the features $\phi(s,a)$ are linearly independent over the interior of the set $\mathrm{supp}(\nu^\pi)$ for any policy $\pi$.  Then,  for all but finitely many $0 \leq \lambda < 1$, Assumption 3 holds.
\end{lemma}
\begin{proof}
    Fix a policy $\pi$. Suppose the matrix $\mathbb{E}_{\nu^\pi} \big[\psi(s, a)\psi(s, a)^\top \big]$ (which is positive-semidefinite) is not positive definite. Then, there exists some $0 \neq v  \in \bbR^m$ such that 
$$ v^\top\mathbb{E}_{\nu^\pi} \big[\psi(s, a)\psi(s, a)^\top \big] v = 0.$$
This implies $v^\top \psi(s,a) = 0$ for any $(s,a) \in \mathrm{int}(\mathrm{supp}(\nu^\pi))$. However, by linear independence of the features $\{\psi_i(s,a) \}_{i=1}^m$ over $\mathrm{int}(\mathrm{supp}(\nu^\pi))$, it follows that $v$ must be 0. This is a contradiction, and so the matrix $\mathbb{E}_{\nu^\pi} \big[\psi(s, a)\psi(s, a)^\top \big]$ must be positive definite. Next, we show 
$$\sigma_{\min} \left(\mathbb{E}_{\nu^\pi} \big[\psi(s, a) \big(\psi(s, a) - \gamma \mathbb{E}_{\nu^\pi}[\psi(s', a')]\big)^\top \big]\right) > 0$$ 
for all but finitely many $0 \leq \lambda < 1$.  Denote the matrix $M(\lambda) := \mathbb{E}_{\nu^\pi} \big[\psi(s, a) \big(\psi(s, a) - \gamma \mathbb{E}_{\nu^\pi}[\psi(s', a')]\big)^\top \big]$. Observe that the determinant of $M(\lambda)$ is a polynomial in $\lambda$ with degree at most $m$. Since $M(0) \neq 0$ (as we just showed that the matrix $\mathbb{E}_{\nu^\pi} \big[\psi(s, a)\psi(s, a)^\top \big]$ is positive definite), it follows that $\det(M(\lambda))$ is not the zero polynomial. Thus, it has at most $m$ roots, which shows that for all but finitely many $\lambda$, $M(\lambda)$ must be invertible, i.e. its smallest singular value is bounded away from 0.
\end{proof}
}

\end{document}